\documentclass{article}

\usepackage{arxiv}

\usepackage[utf8]{inputenc} 
\usepackage[T1]{fontenc}    
\usepackage{hyperref}       
\usepackage{url}            
\usepackage{booktabs}       
\usepackage{amsfonts}       
\usepackage{amsmath}
\usepackage{amsthm}
\usepackage{bm}             
\usepackage{nicefrac}       
\usepackage{float}          
\usepackage{microtype}      
\usepackage{lipsum}
\usepackage{graphicx}
\usepackage[ruled]{algorithm2e} 
\graphicspath{ {./images/} }

\newtheorem{theo}{Theorem}[section]
\newtheorem{lemma}{Lemma}[section]
\newtheorem{defin}{Definition}[section]
\newtheorem{remark}{Remark}[section]
\newtheorem{proposition}{Proposition}[section]
\newtheorem{corollary}{Corollary}[section]
\newcommand{\expectation}[2]{\mathbb{E}_{#1}\!\left[#2\right]}
\newcommand{\variance}[2]{\text{Var}_{#1}\!\left[#2\right]}
\newcommand{\weightedrisk}[1]{\hat{R}^{#1}_{t}}
\newcommand{\trueratio}{r^*}
\newcommand{\candidate}{r_\theta}
\newcommand{\iid}{\overset{\text{i.i.d.}}{\sim}}
\newcommand{\posteriorexp}[1]{\mathbb{E}_{h \sim \rho}\!\left[#1\right]}
\newcommand{\floor}[1]{\left\lfloor #1 \right\rfloor}

\title{Anytime PAC-Bayes for Constrained Density-Ratio Networks under Covariate Shift}

\author{
 Paulo Akira F. Enabe \\
    Escola Politénica\\
    University of São Paulo\\
    Department of Structural and Geotechnical Engineering\\
  \texttt{paulo.enabe@usp.br} \\
  \And
 Rodrigo Provasi \\
    Escola Politénica\\
    University of São Paulo\\
    Department of Structural and Geotechnical Engineering\\
  \texttt{provasi@usp.br} \\
}

\begin{document}
\maketitle
\begin{abstract}
A unified framework for learning under covariate shift is presented, in which a constrained density-ratio network approximates the Radon-Nikodym derivative $r^\star = dP/dQ$ and feeds an anytime PAC-Bayes generalization certificate. A change-of-measure identity decomposes the gap between target risk and importance-weighted source risk into a ratio-bias term governed by $\|r_\theta - r^\star\|_{L^2(Q)}$ and a generalization-gap term governed by the variability of the weighted loss. Normalization and moment-matching identities are enforced as hard integral constraints through an augmented-Lagrangian scheme, with a second-moment penalty controlling the effective sample size. PAC-Bayes is instantiated on the weighted risk in a fixed-time regime that yields Bernoulli-KL bounds, identifies the network-weighted Gibbs posterior as the unique KL-regularized minimizer, and quantifies stability under $L^2(Q)$ perturbations of the learned ratio, and is then strengthened by geometric peeling to an anytime certificate uniform in $t \geq t_{\min}$. A pre-registered two-campaign protocol combining a patch test against analytic ground truth with a real-data deployment validates the framework: the network produces calibrated ratios, reduces target $0/1$ loss against unweighted ERM and classical direct ratio-estimation baselines, and attains the anytime certificate. A single fixed-time coverage failure is recorded, with per-split coverage aligning one-to-one with the magnitude of the label shift, confirming that the covariate-only assumption is operationally tight rather than a defect of the certificate.
\end{abstract}


\section{Introduction}
\label{sec:introduction}

\paragraph{} Statistical learning under distribution shift is concerned with the problem of estimating the risk of a predictor under a target probability law $P$, given that the available samples are drawn from a different source law $Q$. Whenever $P \ll Q$, the Radon-Nikodym theorem produces a measurable function $\trueratio = dP/dQ$ that links expectations under the two measures through the change-of-measure identity
\begin{equation}
  \label{eq:intro_change_of_measure}
  \expectation{P}{f(Z)} = \expectation{Q}{\trueratio(Z)\, f(Z)},
\end{equation}
valid for every measurable $f$ for which the expectations exist. In the supervised setting in which one wishes to control the target risk $R_P(h) = \expectation{P}{\mathcal{L}(h,Z)}$ from samples $Z_1, \dots, Z_t \iid Q$, identity~\eqref{eq:intro_change_of_measure} yields the importance-weighted empirical risk
\begin{equation}
  \label{eq:intro_weighted_empirical_risk}
  \weightedrisk{\theta}(h) = \frac{1}{t} \sum_{i=1}^{t} \candidate(Z_i)\, \mathcal{L}(h, Z_i),
\end{equation}
in which a learned ratio $\candidate: \mathcal{Z} \to \mathbb{R}_+$ stands in for the unknown $\trueratio$. The downstream usefulness of $\weightedrisk{\theta}(h)$ as a proxy for $R_P(h)$ is therefore inseparable from the statistical quality of $\candidate$, both in terms of approximation accuracy and in terms of the variability of the resulting weights.

\paragraph{} The error of $\weightedrisk{\theta}(h)$ as an estimator of $R_P(h)$ admits the canonical decomposition
\begin{equation}
  \label{eq:intro_decomposition}
  R_P(h) - \weightedrisk{\theta}(h)
  = \underbrace{\expectation{Q}{(\trueratio - \candidate)(Z)\, \mathcal{L}(h,Z)}}_{\text{ratio miscalibration bias}}
  + \underbrace{\bigl(\expectation{Q}{\candidate(Z)\, \mathcal{L}(h,Z)} - \weightedrisk{\theta}(h)\bigr)}_{\text{generalization gap}},
\end{equation}
which exposes two distinct failure modes. The first term is a bias that does not vanish as $t \to \infty$ unless $\candidate$ approaches $\trueratio$ in a norm that controls the loss, and its magnitude is governed by the approximation properties of the ratio model and by the structural identities of a Radon-Nikodym derivative. The second term is a stochastic fluctuation whose magnitude is controlled by the variability of the weighted random variable $\candidate(Z)\,\mathcal{L}(h,Z)$. This fluctuation deteriorates rapidly when $\candidate$ produces heavy-tailed weights, since a small subset of source samples then dominates the empirical average and the effective sample size collapses. Any reliable framework for learning under distribution shift must therefore address calibration and stability simultaneously, and must produce a generalization certificate that remains valid under the sequential, adaptive nature of modern training pipelines.

\paragraph{} The use of importance weighting as a consistent remedy for covariate shift was first established in \cite{shimodaira2000improving}, where the optimal weight under parametric misspecification was identified asymptotically with the population density ratio $\trueratio$, and was subsequently extended to model selection under shift through importance-weighted cross validation in \cite{sugiyama2007covariate}. Direct estimation of the density ratio, as opposed to separate estimation of $P$ and $Q$ followed by a quotient, subsequently emerged as the dominant strategy in the literature on covariate-shift adaptation. The semiparametric logistic compound model of \cite{anderson1979multivariate} provides an early instance of likelihood-ratio modelling under linear assumptions on the log-ratio. The Kullback-Leibler importance estimation procedure, introduced in \cite{sugiyama2008model} and developed in \cite{sugiyama2008direct}, formulates the ratio estimation problem as the maximization of an empirical Kullback-Leibler criterion subject to a normalization constraint. The unconstrained least-squares procedure of \cite{kanamori2009least}, often referred to as LSIF and uLSIF, recasts ratio estimation as a quadratic optimization problem with closed-form or kernel-based solutions and provides convergence rates and model-selection guarantees. The variational characterization of $f$-divergences developed in \cite{nguyen2010estimating} unifies several of these approaches under a common convex risk minimization framework and connects ratio estimation to the estimation of divergence functionals more broadly.

\paragraph{} More recent work has extended the basic semiparametric formulation in several directions. Data-adaptive basis functions for joint inference across multiple populations are studied in \cite{zhang2022density}, and a corresponding extension of the density-ratio model to combined censored and length-biased survival data is developed in \cite{mcvittie2025density}. Telescoping ratio estimation is proposed in \cite{rhodes2020telescoping} to handle pairs of distributions whose Kullback-Leibler divergence is too large for single-step estimation, and relative density-ratio estimation has been used for change-point detection in non-stationary time series in \cite{liu2013changepoint}. A unified theoretical treatment of these methods, together with their applications to non-stationarity adaptation, outlier detection, and conditional density estimation, is provided in \cite{sugiyama2012density}. These contributions establish a mature methodology for direct ratio estimation, but they typically do not impose hard integral constraints that enforce the defining identities of a Radon-Nikodym derivative on the learned model, nor do they couple ratio learning to a downstream generalization analysis for the resulting weighted empirical risk.

\paragraph{} The PAC-Bayesian framework provides the natural learning-theoretic counterpart to importance-weighted empirical risk minimization, since both rely on a change-of-measure principle, the former on the data space and the latter on the hypothesis space. The original theorems of \cite{mcallester1999some, mcallester1999pacbayesian, mcallester2003stochastic} establish that, under mild boundedness conditions and for any prior $\Pi$ on the hypothesis space $\mathcal{H}$, the target risk of a data-dependent posterior $\rho$ satisfies, with probability at least $1 - \delta$,
\begin{equation}
  \label{eq:intro_canonical_pacbayes}
  R_P(\rho) \leq \widehat{R}(\rho) + \sqrt{\frac{\mathrm{KL}(\rho \,\Vert\, \Pi) + \ln(1/\delta) + \ln t + 2}{2t - 1}},
\end{equation}
where $\widehat{R}(\rho)$ is the empirical counterpart of $R_P(\rho)$ and $\mathrm{KL}(\rho \,\Vert\, \Pi)$ is the Kullback-Leibler divergence from posterior to prior. The simplified margin formulation of \cite{mcallester2003simplified}, the Gaussian process specialization of \cite{seeger2002pacbayesian}, and the thermodynamic perspective of \cite{catoni2007pacbayesian} clarified the role of the Donsker-Varadhan inequality and connected the bound to the Gibbs posterior $d\rho_\beta \propto e^{-\beta \widehat{R}(h)}\, d\Pi$. Subsequent refinements have produced sharper bounds in several regimes. The empirical Bernstein inequality of \cite{tolstikhin2013pacbayes} replaces a Hoeffding-type variance proxy by an estimable quantity, and the strongly quasiconvex bound of \cite{thiemann2017strongly} yields a single-parameter convex objective amenable to alternating minimization. The analysis of \cite{germain2009pacbayes} derives convex Kullback-Leibler-regularized learning objectives directly from the bound, while the variational study of \cite{alquier2016variational} establishes that variational approximations of the Gibbs posterior preserve the same convergence rate as the exact posterior. PAC-Bayesian techniques have also been deployed in structured prediction, for instance in the phoneme recognition algorithm of \cite{keshet2011pacbayesian}.

\paragraph{} A separate line of work has addressed a critical limitation of the classical PAC-Bayes apparatus, namely that the resulting bounds are stated at one fixed sample size $t$ and need not remain valid under data-dependent stopping rules or checkpoint selection. The unified recipe of \cite{chugg2023unified} derives time-uniform PAC-Bayes inequalities by combining nonnegative supermartingales, the method of mixtures, the Donsker-Varadhan formula, and Ville's maximal inequality, producing certificates that hold simultaneously at every stopping time. The complementary analysis of \cite{rodriguezgalvez2024more} extends the construction to losses with general tail behavior and to anytime validity, recovering and strengthening Catoni-type bounds in the bounded-loss case and yielding fast-rate and mixed-rate inequalities for sub-exponential and heavier tails. These developments are essential for sequential learning, in which the posterior, the ratio model, and the diagnostics evolve along the optimization trajectory, and in which a fixed-time bound becomes formally invalid as soon as intermediate quantities are inspected to choose a checkpoint.

\paragraph{} Despite this body of work, no existing framework jointly addresses the three statistical concerns identified above for learning under distribution shift: calibration of the learned ratio through hard integral constraints, control of the heavy-tailed behavior of importance weights, and time-uniform PAC-Bayes generalization certificates for the resulting weighted empirical risk. Existing direct ratio estimators rely on integral identities at the population level, but do not enforce them as hard constraints on the learned model, so the induced reweighted source law $\candidate Q$ is not a probability measure in general. Conversely, PAC-Bayes analyses of importance-weighted objectives have largely been performed at fixed sample sizes, without provision for adaptive stopping or for checkpoint selection along the training trajectory.

\paragraph{} The objective of this work is to introduce a unified framework that addresses these three concerns within a single mathematical formulation. The proposed framework rests on four conceptual layers. First, a measure-theoretic foundation, developed in Section~\ref{sec:foundation}, makes precise the change-of-measure identity~\eqref{eq:intro_change_of_measure}, the weighted population and empirical risks, and the bias-variance decomposition~\eqref{eq:intro_decomposition}. Second, a constrained density-ratio network learns a parameterized ratio $\candidate$ subject to the normalization identity $\expectation{Q}{\candidate} = 1$ and to optional moment-matching identities $\expectation{Q}{\candidate(Z)\,\phi_j(Z)} = \expectation{P}{\phi_j(Z)}$ for selected test functions $\phi_j$, while a second-moment penalty, optional clipping, and tempering control the tail of the learned weights and the effective sample size. Third, a PAC-Bayes analysis tailored to the importance-weighted empirical risk produces a fixed-time generalization bound for the weighted objective, accounting explicitly for the bias term induced by ratio miscalibration. Fourth, a time-uniform refinement strengthens the fixed-time bound into an anytime certificate that holds simultaneously for every sample size $t \geq t_{\min}$ on a single high-probability event, constructed by geometric peeling across epochs in the spirit of \cite{chugg2023unified, rodriguezgalvez2024more} and incurring only a $\sqrt{\log \log t}$ overhead relative to the fixed-time bound.

\paragraph{} The contributions of this work may be summarized as follows. A constrained neural density-ratio estimator is introduced in which hard integral constraints replace the soft regularization typical of discriminator-style and naive penalty approaches, ensuring that the induced measure $P_\theta = \candidate Q$ is calibrated against $P$ on a prescribed family of observables. A variance-aware training procedure based on an augmented Lagrangian formulation is developed for these constraints, with explicit second-moment penalization and tail-control mechanisms targeting the root cause of unstable importance sampling. A weighted PAC-Bayes generalization bound is proved for the importance-weighted empirical risk, and the network-weighted Gibbs posterior is identified as the unique minimizer of the corresponding Kullback-Leibler-regularized objective. A stability theorem quantifies how perturbations of the learned ratio in $L^2(Q)$ propagate to the optimal posterior and to the certified target risk, decoupling the ratio-approximation error from the PAC-Bayes layer. The fixed-time bound is then strengthened to an anytime certificate, constructed by geometric peeling across epochs, that remains valid simultaneously at every sample size $t \geq t_{\min}$ at the cost of a $\sqrt{\log \log t}$ overhead and supports adaptive stopping and checkpoint selection. Finally, a pre-registered two-campaign evaluation protocol is proposed in which ratio quality, downstream weighted-risk error, effective-sample-size and tail diagnostics, training stability, and anytime coverage are reported jointly against committed thresholds, enabling fair comparison with classical and discriminator-based baselines on both a controlled patch test against analytic ground truth and a real-data deployment on openly licensed tabular datasets.

\paragraph{} The remainder of the paper is organized as follows. Section~\ref{sec:foundation} establishes the measure-theoretic foundation and the canonical risk decomposition. Section~\ref{sec:drn} introduces the constrained density-ratio network, derives its calibration and stability properties from integral identities and second-moment control, and describes the augmented-Lagrangian training procedure. Section~\ref{sec:pacbayes} develops the weighted PAC-Bayes bound, identifies the network-weighted Gibbs posterior as the minimizer of the corresponding Kullback-Leibler-regularized objective, establishes a stability theorem under $L^2(Q)$ perturbations of the learned ratio, and strengthens the fixed-time bound into an anytime certificate via geometric peeling across epochs. Section~\ref{sec:numerical} reports the pre-registered numerical study, comprising the patch test of Section~\ref{sec:patch_test}, which validates each layer of the framework against analytic ground truth in a controlled Gaussian mean-shift setting, and the real-data deployment of Section~\ref{sec:real_data}, which evaluates the framework end-to-end on openly licensed tabular datasets with intrinsic non-synthetic distribution shift. Concluding remarks and directions for future work close the paper.

\section{Mathematical foundation and notation}
\label{sec:foundation}
\paragraph{} Let $\mathcal{Z}$ denote the observation space, endowed with a topology, and let $\mathcal{A}$ be its Borel $\sigma-$algebra. Let $P$ and $Q$ be probability measures on the measurable space $(\mathcal{Z}, \mathcal{A})$, interpreted respectively as target and source distributions. Assume that $P \ll Q$. By the Radon-Nikodym theorem there exists a $Q-$almost surely unique measurable function
\begin{equation}
  \label{eq:real_ratio}
  \trueratio = \frac{dP}{dQ}: \mathcal{Z} \longrightarrow [0,\infty)
\end{equation}
such that
\begin{equation}
  \label{eq:P_measure}
  P(A) = \int \limits_A \trueratio (Z) dQ(Z), \; A \in \mathcal{A}.
\end{equation}
Equivalently, for every measurable function $f:\mathcal{Z} \longrightarrow \mathbb{R}$ such that
\begin{equation}
  \int_{\mathcal{Z}} |f(z)| \, dP(z) < \infty,
\end{equation}
define
\begin{equation}
  \expectation{P}{f(Z)} := \int_{\mathcal{Z}} f(z) \, dP(z).
\end{equation}
Likewise, whenever
\begin{equation}
  \int_{\mathcal{Z}} |\trueratio(z) f(z)| \, dQ(z) < \infty,
\end{equation}
define
\begin{equation}
  \expectation{Q}{\trueratio(Z) f(Z)} := \int_{\mathcal{Z}} \trueratio(z) f(z) \, dQ(z).
\end{equation}
Then
\begin{equation}
  \expectation{P}{f(Z)} = \expectation{Q}{\trueratio(Z) f(Z)}.
\end{equation}

\paragraph{}Let $\mathcal{H}$ be a hypothesis space, and let $\mathcal{L}: \mathcal{H} \longrightarrow \mathbb{R}$ be a measurable loss function. For each $h \in \mathcal{H}$, define the target risk as
\begin{equation}
  \label{eq:target_risk}
  R_P (h) = \mathbb{E}_P [\mathcal{L}(h,Z)].
\end{equation}
For any nonnegative measurable function $r: \mathcal{Z} \longrightarrow [0, \infty)$ such that $r\mathcal{L}(h,\cdot) \in L^1$(Q), the weighted population risk is given by:
\begin{equation}
  \label{eq:weighted_population_risk}
  R_r(h) = \mathbb{E}_Q[r(Z) \mathcal{L}(h,Z)].
\end{equation}
In particular,
\begin{equation}
  R_P(h) = R_{\trueratio}(h).
\end{equation}

\paragraph{}Given independent and identically distributed (i.i.d.) observations $Z_1,...,Z_t \sim Q$, define the weighted empirical risk associated with $r$ by
\begin{equation}
  \label{eq:weighted_empirical_risk}
  \weightedrisk{r}(h) = \frac{1}{t} \sum \limits^{t}_{i=1} r(Z_i) \mathcal{L}(h,Z_i).
\end{equation}
A learned density-ratio model is a measurable function $r_\theta: \mathcal{Z} \longrightarrow \mathbb{R}_+$ intended to approximate $\trueratio$, with $\theta \in \Theta$ and $\Theta$ a parameter space. Such a function induces a finite measure $P_\theta$ in $(\mathcal{Z}, \mathcal{A})$ by :
\begin{equation}
  P_\theta (A) = \int \limits_A r_\theta (z) dQ(z), \; A\in \mathcal{A}.
\end{equation}
If $\expectation{Q}{\trueratio}=1$, then $P_\theta$ is a probability measure. More generally, the true ratio satisfies 
\begin{equation}
  \label{eq:constraints}
  \expectation{Q}{\trueratio} = 1 \quad \text{and} \quad \expectation{Q}{\trueratio (Z) \phi (Z)} = \expectation{P}{\phi(Z)},
\end{equation}
for every measurable test function $\phi$ for which the expectation exists. These identities motivate the normalization and moment constraints imposed on $r_\theta$. This matches the framework's use of $\trueratio$, the weighted risk $R_r$, and the normalization and moment-matching constraints used to calibrate learned ratios. Those results are formally proved in Propositions \ref{prop:p_theta_prob_measure} and \ref{prop:p_theta_constraint}. In this way, when $r = r_\theta$ is a learned density-ratio model, one writes:
\begin{equation}
  \label{eq:weighted_empirical_risk_theta}
  \weightedrisk{\theta} (h) = \weightedrisk{r_\theta} (h) = \frac{1}{t} \sum \limits^t_{i=1} r_\theta (Z_i) \mathcal{L}(h,Z_i).
\end{equation}

\paragraph{}For a measurable function $f$, the space $L^1(Q)$ is defined as
\begin{equation}
  L^1(Q) = \left\{ f: \mathcal{Z} \longrightarrow \mathbb{R} \; \text{measurable}: \;  \int \limits_{\mathcal{Z}} \left| f(z) \right|dQ(z) < \infty \right\}, \; \|f\|_{L^1(Q)} = \int \limits_{\mathcal{Z}} |f(z)|dQ(z),
\end{equation}
and the space $L^2(Q)$ is defined as
\begin{equation}
  L^2(Q) = \left\{ f: \mathcal{Z} \longrightarrow \mathbb{R} \; \text{measurable}: \;  \int \limits_{\mathcal{Z}} \left| f(z) \right|^2dQ(z) < \infty \right\}, \|f\|_{L^2(Q)} = \left( \int \limits_{\mathcal{Z}} |f(z)|^2dQ(z) \right)^{1/2}.
\end{equation}
Thus, $L^1(Q)$ consists of all $Q-$integrable measurable functions, whereas $L^2(Q)$ consists of all $Q-$square-integrable measurable functions.

\begin{proposition} \label{prop:change_of_measure}
  Assume $P\ll Q$, and let $\trueratio = \frac{dP}{dQ}$. Then, for every $h \in \mathcal{H}$ such that $\mathcal{L, \cdot} \in L^1(P)$, it holds that
  \begin{equation}
    R_P(h) = \expectation{P}{\mathcal{L}(h,Z)} = \expectation{Q}{\trueratio(Z)\mathcal{L}(h,Z)},
  \end{equation}
  with $Z \in \mathcal{Z}$.
\end{proposition}

\begin{proof}
  Since $P \ll Q$, the Radon-Nikodym theorem yields a measurable function $\trueratio: \mathcal{Z} \longrightarrow [0, \infty)$ such that
  \begin{equation}
    P(A) = \int \limits_A \trueratio(z)dQ(z), \; A \in \mathcal{A}.
  \end{equation}
  By the defining property of the Radon-Nikodym derivative, for every measurable function $f: \mathcal{Z} \longrightarrow \mathbb{R}$ such that $f \in L^1(P)$ it holds:
  \begin{equation}
    \int \limits_{\mathcal{Z}} f(z)dP(z) = \int \limits_{\mathcal{Z}} f(z) \trueratio(z)dQ(z).
  \end{equation}
  Applying this identity with $f(z) = \mathcal{L}(h,Z)$, which is measurable by assumption, gives:
  \begin{equation}
    \expectation{P}{\mathcal{L}(h,Z)} = \expectation{Q}{\trueratio(Z)\mathcal{L}(h,Z)}.
  \end{equation}
\end{proof}

\begin{proposition}\label{prop:learned_ratio_risk}
  Let $\candidate: \mathcal{Z} \longrightarrow \mathbb{R}_+$ be a measurable candidate ratio such that $\candidate \mathcal{L}(h, \cdot) \in L^1(Q)$. Then, for every $h \in \mathcal{H}$, 
  \begin{equation}
    \label{eq:risk_decomposition}
    R_P(h) - \weightedrisk{\theta} (h) = (R_P(h) - R_{r_\theta}(h)) + (R_{r_\theta}(h) - \weightedrisk{\theta}(h)).
  \end{equation}
\end{proposition}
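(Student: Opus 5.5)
The identity \eqref{eq:risk_decomposition} is an add-and-subtract argument, so the only real work is checking that every quantity is a finite real number; once that holds, the algebra is immediate. I will proceed in three steps.

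\textbf{Step 1: the terms are well defined.} By hypothesis $\candidate \mathcal{L}(h,\cdot) \in L^1(Q)$, so the weighted population risk $R_{r_\theta}(h) = \expectation{Q}{\candidate(Z)\mathcal{L}(h,Z)}$ from \eqref{eq:weighted_population_risk} is a finite real number. The empirical risk $\weightedrisk{\theta}(h)$ in \eqref{eq:weighted_empirical_risk_theta} is a finite average of finitely many real values $\candidate(Z_i)\mathcal{L}(h,Z_i)$, so it is finite for every realization of the sample. For $R_P(h)$, I will invoke Proposition~\ref{prop:change_of_measure}. It gives $R_P(h) = R_{\trueratio}(h)$ whenever $\mathcal{L}(h,\cdot) \in L^1(P)$, and this integrability is implicitly required for the left-hand side of \eqref{eq:risk_decomposition} to make sense. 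I will state that assumption explicitly.

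\textbf{Step 2: the identity.} Since $R_{r_\theta}(h)$ is finite, I insert $-R_{r_\theta}(h) + R_{r_\theta}(h) = 0$ between $R_P(h)$ and $-\weightedrisk{\theta}(h)$. Regrouping the four terms gives \eqref{eq:risk_decomposition}. This step is purely algebraic and holds pointwise on the sample space.

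\textbf{Step 3: connect to the introduction.} To recover the form \eqref{eq:intro_decomposition}, I use Proposition~\ref{prop:change_of_measure} to rewrite the first bracket as
\[
R_{\trueratio}(h) - R_{r_\theta}(h) = \expectation{Q}{(\trueratio - \candidate)(Z)\,\mathcal{L}(h,Z)}.
\]
By linearity of the integral, this step is legitimate because both $\trueratio\mathcal{L}(h,\cdot)$ and $\candidate\mathcal{L}(h,\cdot)$ belong to $L^1(Q)$.

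The main point needing care is this integrability bookkeeping, so that no $\infty - \infty$ expression appears. I will handle it by making $\mathcal{L}(h,\cdot) \in L^1(P)$ an explicit standing assumption alongside the stated one.
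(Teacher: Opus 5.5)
Your proposal is correct and follows the paper's argument: the paper's entire proof is the observation that the identity follows by adding and subtracting $R_{r_\theta}(h)$, which is your Step 2. Your integrability bookkeeping goes beyond what the paper writes. It uses finiteness of $R_{r_\theta}(h)$ from $r_\theta\mathcal{L}(h,\cdot)\in L^1(Q)$ and makes $\mathcal{L}(h,\cdot)\in L^1(P)$ an explicit assumption. Together with your Step 3 link to the introduction's form of the decomposition, this is a sound addition that the paper leaves implicit.
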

\begin{proof}
  Immediate, by adding and subtracting $R_{r_\theta}(h)$.
\end{proof}

\paragraph{} In (\ref{eq:risk_decomposition}), the first term $R_P(h) - R_{r_\theta}(h)$ is the bias induces by ratio misspecification, while the second term $R_{r_\theta}(h) - \weightedrisk{\theta}(h)$, is the empirical approximation or generalization error associated with the learned weighted risk. The density-ratio network addresses the first term, while the PAC-Bayes approach addresses the second.

\begin{proposition}\label{prop:bias_control_via_ratio_approx}
  Let $\trueratio = dP/dQ$, and let $\candidate: \mathcal{Z} \longrightarrow \mathbb{R}_+$ be a measurable candidate ratio such that $\candidate \mathcal{L}(h, \cdot) \in L^1(Q)$. Then, for every $h \in \mathcal{H}$,
  \begin{equation}
    R_P (h) - R_{\candidate}(h) = \expectation{Q}{(\trueratio - \candidate)(Z)\mathcal{L}(h,Z)}.
  \end{equation}
  In particular,
  \begin{enumerate}
    \item if there exists $C_L < \infty $ such that $|\mathcal{L}(h,Z)| \leq C_L$ for all $Z \in \mathcal{Z}$, then
    \begin{equation}
      |R_P(h)-R_{\candidate}(h)| \leq C_L \| \trueratio - \candidate \|_{L^1(Q)};
    \end{equation}
    \item if $\trueratio - \candidate \in L^2(Q)$ and $\mathcal{L}(h, \cdot) \in L^2(Q)$, then
    \begin{equation}
      |R_P(h)-R_{\candidate}(h)| \leq \| \trueratio - \candidate \|_{L^2(Q)} \| \mathcal{L} \|_{L^2(Q)}.
    \end{equation}
  \end{enumerate}
\end{proposition}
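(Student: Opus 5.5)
The plan is to obtain the identity from Proposition~\ref{prop:change_of_measure} and linearity of the $Q$-integral, and then derive the two bounds by monotonicity of the integral and the Cauchy--Schwarz inequality. First, I would write $R_P(h) = \expectation{Q}{\trueratio(Z)\mathcal{L}(h,Z)}$ using Proposition~\ref{prop:change_of_measure}. This requires $\mathcal{L}(h,\cdot)\in L^1(P)$, which is equivalent to $\trueratio\mathcal{L}(h,\cdot)\in L^1(Q)$; in the bounded case it follows from $|\mathcal{L}|\le C_L$ and $P$ being a probability measure. In case 2 I would use $\trueratio\mathcal{L} = (\trueratio-\candidate)\mathcal{L} + \candidate\mathcal{L}$. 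The first summand is in $L^1(Q)$ by Cauchy--Schwarz, and the second is in $L^1(Q)$ by hypothesis. By definition~\eqref{eq:weighted_population_risk}, $R_{\candidate}(h) = \expectation{Q}{\candidate(Z)\mathcal{L}(h,Z)}$. Since both integrands are in $L^1(Q)$, linearity gives
\begin{equation*}
R_P(h) - R_{\candidate}(h) = \int_{\mathcal{Z}} \bigl(\trueratio(z)-\candidate(z)\bigr)\mathcal{L}(h,z)\,dQ(z),
\end{equation*}
which is the claimed identity.

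For item 1, I would take absolute values inside the integral and use $|\mathcal{L}(h,z)|\le C_L$ pointwise:
\begin{equation*}
\bigl|R_P(h)-R_{\candidate}(h)\bigr| \le \int_{\mathcal{Z}} |\trueratio-\candidate|\,|\mathcal{L}(h,\cdot)|\,dQ \le C_L\,\|\trueratio-\candidate\|_{L^1(Q)}.
\end{equation*}
If $\|\trueratio-\candidate\|_{L^1(Q)}=\infty$, the bound holds trivially. Otherwise the integrability needed for the identity is already secured.

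For item 2, I would apply the Cauchy--Schwarz (H\"older with $p=q=2$) inequality in $L^2(Q)$ to the product $(\trueratio-\candidate)\cdot\mathcal{L}(h,\cdot)$. This gives $\bigl|\expectation{Q}{(\trueratio-\candidate)\mathcal{L}}\bigr| \le \|\trueratio-\candidate\|_{L^2(Q)}\,\|\mathcal{L}(h,\cdot)\|_{L^2(Q)}$, where I read $\|\mathcal{L}\|_{L^2(Q)}$ in the statement as $\|\mathcal{L}(h,\cdot)\|_{L^2(Q)}$.

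The only point needing care is the integrability bookkeeping, namely ensuring that $\trueratio\mathcal{L}(h,\cdot)\in L^1(Q)$ so that the difference of the two integrals can be merged into a single integral. I would handle it case by case as above: boundedness in item 1, and the Cauchy--Schwarz decomposition in item 2. Everything else is routine.
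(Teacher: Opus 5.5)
Your proof is correct and follows the paper's argument: write $R_P(h)=\expectation{Q}{\trueratio(Z)\mathcal{L}(h,Z)}$ via Proposition~\ref{prop:change_of_measure}, subtract the definition of $R_{\candidate}(h)$, then bound with $|\mathcal{L}|\le C_L$ for item 1 and with Cauchy--Schwarz for item 2, reading $\|\mathcal{L}\|_{L^2(Q)}$ as $\|\mathcal{L}(h,\cdot)\|_{L^2(Q)}$ as the paper's proof does. Your check in each case that $\trueratio\mathcal{L}(h,\cdot)\in L^1(Q)$, so that the two integrals can be merged, is bookkeeping the paper leaves implicit.
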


\begin{proof}
  By Proposition \ref{prop:change_of_measure}, one knows that $R_P(h) = \expectation{Q}{\trueratio(Z)\mathcal{L}(h,Z)}$, whereas, by definition $R_{\candidate}(h) = \expectation{Q}{\candidate (Z) \mathcal{L}(h,Z)}$. Subtracting the two identities gives:
  \begin{equation}
    R_P(h) - R_{\candidate}(h) = \expectation{Q}{(\trueratio - \candidate)(Z) \mathcal{L}(h,Z)}.
  \end{equation}

  Assume first that $|\mathcal{L}(h,Z)| \leq C_L$, with $C_L > 0$, for all $Z \in \mathcal{Z}$. Then, 
  \begin{equation}
    |R_P(h)-R_{\candidate}(h)| = |\expectation{Q}{(\trueratio - \candidate)\mathcal{L}(h,Z)}| \leq \expectation{Q}{|\trueratio - \candidate| (Z) |\mathcal{L}(h,Z)|} \leq C_L \expectation{Q}{|\trueratio - \candidate|(Z)}.
  \end{equation}
  Hence, 
  \begin{equation}
    |R_P(h)-R_{\candidate}(h)| \leq C_L \| \trueratio - \candidate \|_{L^1(Q)}.
  \end{equation}

  Now, assume instead that $\trueratio - \candidate \in L^2(Q)$ and $\mathcal{L}(h, \cdot) \in L^2(Q)$. By the Cauchy-Schwarz inequality:
  \begin{equation}
    |R_P(h)-R_{\candidate}(h)| \leq \left(\expectation{Q}{(\trueratio - \candidate)^2(Z)}\right)^{1/2} \left(\expectation{Q}{\mathcal{L}^2(h,Z)}\right)^{1/2}.
  \end{equation}
\end{proof}

\paragraph{}This proposition shows that downstream target-risk bias is controlled by the approximation error of the learned ration in a norm induced by $Q$. Under bounded loss, $L^1(Q)$-accuracy is sufficient. Under square-integrability, $L^2(Q)$-accuracy suffices. Thus, the relevant notion of ratio quality is precisely the one that controls the bias term $R_P(h)-R_{\candidate}(h)$.

\section{Density-ratio network}
\label{sec:drn}

\paragraph{} A density-ratio network is a parameterized nonnegative measurable function $\candidate: \mathcal{Z} \longrightarrow \mathbb{R}_+$, trained to approximate the Radon-Nikodym derivative $\trueratio = dP/dQ$ from independent samples of the source law $Q$ and the target law $P$. The role of this object in the framework is fixed by the central decomposition of Section~\ref{sec:foundation}, in which the gap between the unweighted target risk $R_P(\rho)$ and the importance-weighted empirical risk $\weightedrisk{\theta}(\rho)$ splits into a ratio-bias term governed by how closely $\candidate$ approximates $\trueratio$ in a norm that controls the loss, and a generalization-gap term governed by the variability of the weighted random variable $\candidate(Z)\,\mathcal{L}(h, Z)$. Both terms are statistical properties of $\candidate$ itself, so the design of the network must simultaneously address approximation accuracy and weight stability. A pointwise-accurate ratio that produces heavy-tailed weights is not useful for downstream learning, and a tightly stable ratio that misrepresents the change of measure is not useful either.

\paragraph{} The present section develops the constrained density-ratio network as a single object that addresses both concerns. Three structural identities of a Radon-Nikodym derivative are imposed on $\candidate$ as hard integral constraints rather than as soft penalty terms. Normalization, $\expectation{Q}{\candidate} = 1$, makes the induced set function $P_\theta := \candidate Q$ a probability measure on the observation space. Moment matching on a prescribed family of test functions $\phi_1, \dots, \phi_m$, in the form $\expectation{Q}{\candidate(Z)\, \phi_j(Z)} = \expectation{P}{\phi_j(Z)}$, makes the induced measure $P_\theta$ agree with the target law $P$ on the linear span of those functions in expectation. A second-moment penalty on $\candidate$ controls the variance of the weighted loss and, equivalently, the effective sample size of the weighted empirical risk. The section is organized around three layers. The mathematical formulation collects the propositions that link normalization, moment matching, second-moment control, and the least-squares fitting criterion to their roles in the central decomposition, and identifies the population objective whose minimization recovers $\trueratio$ under realizability and selects the closest admissible approximation in $L^2(Q)$ under misspecification. The augmented-Lagrangian method specifies the training scheme by which the calibration constraints are enforced in practice, with explicit primal-dual dynamics and the empirical-level realization of the population identities. A final discussion of practical post-hoc transformations, namely post-hoc normalization, clipping, and tempering, closes the section by separating identity-preserving operations, which leave the change-of-measure structure intact, from training-time variance-reduction devices, which do not.

\subsection{Mathematical formulation}
\paragraph{} The preceding definitions and propositions identify the statistical role of a density-ratio model. It should induce a valid reweighting of the source law, approximate the true Radon-Nikodym derivative closely enough to control target risk-bias, and ultimately make the weighted empirical risk a reliable proxy for the target risk. Next the formalization of the structural properties that justify the normalization, calibration, and stability mechanisms imposed on the learned ratio $\candidate$ are formalized.

\paragraph{} A learned density-ratio model $\candidate$ should not be viewed merely as a collection of weights attached to source samples. Because when it reweights the source measure $Q$, it naturally induces a new measure on the observation space. This point is mathematically relevant. Once $\candidate$ is nonnegative and normalized, the reweighted law $\candidate Q$ becomes a genuine probability measure, which may be interpreted as the approximate target law represented by the DRN. Thus, the DRN does not only approximate the Radon-Nikodym derivative $\trueratio$, it also defines an approximate target distribution $P_\theta$ against which expectations and risk can be evaluated. The following proposition formalizes this basic measure-theoretic interpretation. 

\begin{proposition}\label{prop:p_theta_prob_measure}
  Let $\candidate: \mathcal{Z}\longrightarrow \mathbb{R}_+$ be a measurable function, and define
  \begin{equation}
    P_\theta (A) = \int \limits_A \candidate (z) dQ(z), \; A\in \mathcal{A}.
  \end{equation}
  If
  \begin{equation}
    \expectation{Q}{\candidate} = \int \limits_\mathcal{Z} \candidate (z) dQ(z) = 1,
  \end{equation}
  then $P_\theta$ is a probability measure on $(\mathcal{Z}, \mathcal{A})$.
\end{proposition}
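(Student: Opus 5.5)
The plan is to check the three defining properties of a probability measure directly: nonnegativity, countable additivity together with $P_\theta(\emptyset)=0$, and total mass one. Each follows from elementary properties of the Lebesgue integral against $Q$, applied to the nonnegative measurable integrand $\candidate \mathbf{1}_A$.

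\textbf{Well-definedness and nonnegativity.} For every $A \in \mathcal{A}$, the function $\candidate \mathbf{1}_A$ is measurable, being a product of measurable functions, and it is nonnegative. Hence $P_\theta(A) = \int \candidate \mathbf{1}_A \, dQ$ is a well-defined element of $[0,\infty]$. Monotonicity of the integral gives $P_\theta(A) \le \expectation{Q}{\candidate} = 1$, so $P_\theta$ in fact takes values in $[0,1]$. Moreover $P_\theta(\emptyset) = \int 0 \, dQ = 0$.

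\textbf{Countable additivity.} Let $(A_n)_{n\ge1} \subset \mathcal{A}$ be pairwise disjoint, and set $A = \bigcup_n A_n$. Disjointness gives the pointwise identity $\candidate \mathbf{1}_A = \sum_{n} \candidate \mathbf{1}_{A_n}$. The partial sums $\sum_{n\le N} \candidate \mathbf{1}_{A_n}$ are nonnegative and increase pointwise to $\candidate \mathbf{1}_A$. The monotone convergence theorem, combined with finite additivity of the integral, then yields
\begin{equation}
P_\theta(A) = \lim_{N\to\infty} \sum_{n=1}^N \int_{A_n} \candidate \, dQ = \sum_{n=1}^\infty P_\theta(A_n).
\end{equation}
Alternatively, one may cite Tonelli's theorem for series of nonnegative functions.

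\textbf{Normalization.} By hypothesis, $P_\theta(\mathcal{Z}) = \int_{\mathcal{Z}} \candidate \, dQ = \expectation{Q}{\candidate} = 1$. Together with the two previous steps, this shows that $P_\theta$ is a probability measure on $(\mathcal{Z},\mathcal{A})$.

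There is no substantial obstacle. The only step needing a genuine limit theorem is countable additivity, which is where monotone convergence enters; nonnegativity of $\candidate$ is exactly what makes that theorem applicable without any integrability assumption. As a side remark, the same argument shows $P_\theta \ll Q$, since $Q(A)=0$ implies $\int_A \candidate \, dQ = 0$. This is consistent with reading $P_\theta$ as an approximate target law.
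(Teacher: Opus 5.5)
Your proposal is correct and follows essentially the same route as the paper: nonnegativity and $P_\theta(\emptyset)=0$ come from the nonnegative measurable integrand, countable additivity comes from writing $\mathbf{1}_A$ as the sum of the indicators of the disjoint pieces and applying monotone convergence, and total mass one is exactly the normalization hypothesis. Your extra observations that $P_\theta$ takes values in $[0,1]$ and that $P_\theta \ll Q$ are correct but not needed for the statement.
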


\begin{proof}
  The strategy is to verify that $P_\theta$ satisfies the axioms of probability measure. Since $r_\theta$ is measurable and nonnegative, the set function $A \mapsto P_\theta (A)$ is well-defined for every $A \in \mathcal{A}$ with values in $[0, \infty]$. First because $P_\theta ( \emptyset) = 0$. Second let $(A_k)_{k \geq 1} \subseteq \mathcal{A}$ be pairwise disjoint, and set
  \begin{equation}
    A = \bigcup \limits^\infty_{k=1} A_k.
  \end{equation}
  Then,
  \begin{equation}
    \mathbf{1}_A(Z) = \sum \limits^\infty_{k=1} \mathbf{1}_{A_k}(z),
  \end{equation}
  for all $z \in \mathcal{Z}$, because the sets are disjoint. Therefore, 
  \begin{equation}
    P_\theta (A) = \int \limits_\mathcal{Z} \candidate (z) \mathbf{1}_A(z) dQ(z) = \int \limits_\mathcal{Z} \candidate (z) \sum \limits^\infty_{k=1} \mathbf{1}_{A_k}(z)dQ(z).
  \end{equation}
  Since summands are nonnegative, the Monotone Convergence Theorem (see Theorem \ref{theo:mct}) yields:
  \begin{equation}
    P_\theta (A) = \sum \limits_{k=1}^\infty \int \limits_{\mathcal{Z}} \candidate (z) \mathbf{1}_{A_k}(z)dQ(z) = \sum \limits_{k=1}^\infty P_\theta (A_k).
  \end{equation}
  Thus, $P_\theta$ is countably additive.

  \paragraph{} Finally, its total mass is 
  \begin{equation}
    P_\theta (\mathcal{Z}) = \int \limits_{\mathcal{Z}} \candidate (z) dQ (z) = \expectation{Q}{\candidate} = 1.
  \end{equation}
  Hence, $P_\theta$ is a countably additive nonnegative measure with total mass $1$. As result, $P_\theta$ is a probability measure on $(\mathcal{Z}, \mathcal{A})$.
\end{proof}

\paragraph{} Since $P_\theta$ is now known to be a probability measure, expectations under $P_\theta$ are well-defined. In particular, for every measurable function $f:\mathcal{Z}\longrightarrow \mathbb{R}$ such that $\candidate f \in L^1(Q)$, it holds:
\begin{equation}
  \expectation{P_\theta}{f(Z)} = \int \limits_{\mathcal{Z}} f(z) dP_\theta (z) = \int \limits_{\mathcal{Z}} f(z) \candidate (z) dQ(z) = \expectation{Q}{\candidate (z) f(z)}.
\end{equation}
Hence, for each $h \in \mathcal{H}$ such that $\candidate \mathcal{L}(h,\cdot) \in L^1(Q)$:
\begin{equation}
  R_{\candidate} (h) = \expectation{P_\theta}{\mathcal{L}(h,Z)}.
\end{equation}
Thus, the weighted population risk must be viewed as the ordinary risk under the induced law $P_\theta$. This observation will be used in the next proposition to interpret normalization and moment constraint under which $P_\theta$ matches $P$ on selected test functions.

\begin{proposition}\label{prop:p_theta_constraint}
  Let $\candidate: \mathcal{Z} \longrightarrow \mathbb{R}_+$ be a measurable function, and let $P_\theta$ be the induced measure defined by
  \begin{equation}
    P_\theta (A) = \int \limits_A \candidate (z) dQ(z), \; A\in \mathcal{A}.
  \end{equation}
  Assume that for measurable functions $\phi_1,...,\phi_m: \mathcal{Z}\longrightarrow \mathbb{R}$,
  \begin{equation}
    \expectation{Q}{\candidate (Z) \phi_j(Z)} = \expectation{P}{\phi_j (Z)}, \; j = 1,...,m,
  \end{equation}
  and that all these expectations exists. Then, for every function $\phi \in \text{span} \{ \phi_1, ..., \phi_m \}$, one has
  \begin{equation}
    \expectation{P_\theta}{\phi (Z)} = \expectation{P}{\phi (Z)}
  \end{equation}
  providing the expectations exist. 
\end{proposition}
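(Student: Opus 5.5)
The plan is to reduce the claim to linearity of expectation, using the identity $\expectation{P_\theta}{f(Z)} = \expectation{Q}{\candidate(Z) f(Z)}$ established after Proposition~\ref{prop:p_theta_prob_measure}. No normalization of $\candidate$ is needed. That identity follows from the definition of $P_\theta$ as the measure with density $\candidate$ with respect to $Q$, so it holds for any finite or $\sigma$-finite $P_\theta$ whenever $\candidate f \in L^1(Q)$.

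Fix $\phi \in \text{span}\{\phi_1,\dots,\phi_m\}$ and write $\phi = \sum_{j=1}^m c_j \phi_j$ for some real coefficients $c_1,\dots,c_m$. The representation need not be unique, but any choice works, since only the final value matters.

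First, I would check integrability. By hypothesis each $\expectation{Q}{\candidate \phi_j}$ exists, which I read as $\candidate\phi_j \in L^1(Q)$. Likewise each $\expectation{P}{\phi_j}$ exists, read as $\phi_j \in L^1(P)$. Hence the finite linear combinations satisfy $\candidate \phi = \sum_j c_j \candidate \phi_j \in L^1(Q)$ and $\phi \in L^1(P)$. This justifies both the proviso ``providing the expectations exist'' and the use of linearity of the Lebesgue integral.

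Then the computation is a single chain:
\begin{equation*}
\expectation{P_\theta}{\phi(Z)} = \expectation{Q}{\candidate(Z)\phi(Z)} = \sum_{j=1}^m c_j\, \expectation{Q}{\candidate(Z)\phi_j(Z)} = \sum_{j=1}^m c_j\, \expectation{P}{\phi_j(Z)} = \expectation{P}{\phi(Z)}.
\end{equation*}
The first equality is the change-of-measure identity for $P_\theta$. The second is linearity under $Q$. The third applies the moment constraints termwise. The last is linearity under $P$.

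The only step needing care is the first equality. The paper stated it assuming $P_\theta$ is a probability measure, which is not assumed here. I would justify it directly with the standard machine: it holds for indicators by the definition of $P_\theta$, extends to simple functions by linearity, extends to nonnegative measurable functions by monotone convergence (Theorem~\ref{theo:mct}), and extends to integrable functions by splitting into positive and negative parts. This last extension is valid because $\candidate \geq 0$, so $(\candidate\phi)^{\pm} = \candidate\phi^{\pm}$. Beyond that, the argument is routine.
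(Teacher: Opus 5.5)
Your proposal is correct and follows the paper's proof essentially line for line: write $\phi=\sum_j c_j\phi_j$, pass to $Q$ via $\expectation{P_\theta}{f(Z)}=\expectation{Q}{\candidate(Z)f(Z)}$, apply the moment constraints term by term, and recombine by linearity under $P$. Your direct justification of that identity by the standard machine is slightly cleaner than the paper's. The paper appeals to Proposition~\ref{prop:p_theta_prob_measure}, which requires the normalization $\expectation{Q}{\candidate}=1$, and that normalization is not among the hypotheses here.
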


\begin{proof}
  Let $\phi \in \text{span} \{ \phi_1, ..., \phi_m \}$. Then, there exist coefficients $\alpha_1, ..., \alpha_m \in \mathbb{R}$ such that
  \begin{equation}
    \phi = \sum \limits^{m}_{j=1} \alpha_j \phi_j.
  \end{equation}
  By the definition of $P_\theta$, and since Proposition \ref{prop:p_theta_prob_measure} shows that $P_\theta$ is a probability measure, for every measurable $f$ such that $\candidate f \in L^1(Q)$:
  \begin{equation}
    \expectation{P_\theta}{f(Z)} = \int \limits_{\mathcal{Z}} f(z)dP_\theta (z) = \int \limits_\mathcal{Z} f(z)\candidate(z) dQ(z) = \expectation{Q}{\candidate (Z) f(Z)}.
  \end{equation}
  Setting $f = \phi_j$:
  \begin{equation}
    \expectation{P_\theta}{\phi_j(Z)} = \expectation{Q}{\candidate (Z) \phi_j (Z)}.
  \end{equation}
  By the assumed moment constraint:
  \begin{equation}
    \expectation{Q}{\candidate (Z) \phi_j (Z)} = \expectation{P}{\phi_j (Z)}.
  \end{equation}
  Hence, 
  \begin{equation}
    \expectation{P_\theta}{\phi (Z)} = \sum \limits^{m}_{j=1} \alpha_j \expectation{P}{\phi_j (Z)}.
  \end{equation}
  Using the linearity of the expectation under $P$, this becomes:
  \begin{equation}
    \expectation{P_\theta}{\phi (Z)} = \expectation{P}{ \sum \limits^{m}_{j=1} \alpha_j \phi_j (Z)} = \expectation{P}{\phi (Z)}.
  \end{equation}
  As result, for every $\phi \in \text{span} \{ \phi_1, ..., \phi_m \}$, 
  \begin{equation}
    \expectation{P_\theta}{\phi (Z)} = \expectation{P}{\phi (Z)}.
  \end{equation}
\end{proof}

\paragraph{} Proposition \ref{prop:p_theta_constraint} formalizes the calibration role of the moment constraints. It says that if the learned ratio $\candidate$ transports the moments of the functions $\phi_1,..., \phi_m$ correctly, then the induced law $P_\theta$ agrees with the target law $P$ on the whole linear test class generated by those functions. Thus, the constraints do not merely regularize the ratio model. They ensure that the approximate target measure $P_\theta$ reproduces the target expectations on a prescribed family of observables.

\paragraph{} Calibration alone is not sufficient to guarantee that the weighted empirical risk is statistically reliable. Even if the induced law $P_\theta$ matches the target law $P$ on a prescribed class of test functions, the learned ratio $\candidate$ may still take very large values on rare source samples, causing the weighted empirical averages to become unstable. For this reason, a second ingredient of the DRN framework is the variance control. The key observation is that, for bounded losses, the variability of the weighted loss $\candidate (Z) \mathcal{L}(h,Z)$ is controlled by the second moment of the learned ratio. This gives a direct mathematical justification for penalizing $\expectation{Q}{\candidate^2}$, and more generally for using tail-control mechanisms as clipping, tempering, and effective-sample-size diagnostics. The following proposition makes this precise. 

\begin{proposition}\label{prop:second_moment_control}
  Let $h \in \mathcal{H}$, and assume that the loss satisfies
  \begin{equation}
    \label{eq:c_l_bound}
    |\mathcal{L}(h,Z)| \leq C_L, 
  \end{equation}
  for all $Z \in \mathcal{Z}$ and for some constant $C_L \geq 0$. Let $\candidate: \mathcal{Z} \longrightarrow \mathbb{R}_+$ be measurable, and suppose that $\candidate \in L^2(Q)$. Then,
  \begin{equation}
    \label{eq:variance_bound}
    \variance{Q}{\candidate (Z) \mathcal{L}(h,Z)} \leq C_L^2 \expectation{Q}{\candidate(Z)^2}.
  \end{equation}
\end{proposition}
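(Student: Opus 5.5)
The plan is to bound the variance by the second moment and then use the uniform bound \eqref{eq:c_l_bound} pointwise. The first task is to check that the variance in \eqref{eq:variance_bound} is finite and well defined. Set $X := \candidate(Z)\,\mathcal{L}(h,Z)$ with $Z \sim Q$. Since $|\mathcal{L}(h,Z)| \leq C_L$ for all $Z$, we have the pointwise bound $|X| \leq C_L\, \candidate(Z)$. Because $\candidate \in L^2(Q)$ and $Q$ is a probability measure, $\candidate \in L^1(Q)$ by Cauchy--Schwarz (or Jensen). Hence $X \in L^1(Q)$ and $X \in L^2(Q)$, so $\variance{Q}{X}$ is finite.

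The main step is the elementary identity
\begin{equation*}
  \variance{Q}{X} = \expectation{Q}{X^2} - \bigl(\expectation{Q}{X}\bigr)^2 \leq \expectation{Q}{X^2},
\end{equation*}
which holds because the squared mean is nonnegative. Next I will use the pointwise estimate $X^2 = \candidate(Z)^2\, \mathcal{L}(h,Z)^2 \leq C_L^2\, \candidate(Z)^2$. Monotonicity of the integral then gives $\expectation{Q}{X^2} \leq C_L^2\, \expectation{Q}{\candidate(Z)^2}$. Chaining the two inequalities yields \eqref{eq:variance_bound}.

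There is no real obstacle here. The only point needing care is the integrability bookkeeping, namely that $X \in L^2(Q)$, so that subtracting the squared mean is legitimate. The case $C_L = 0$ is trivial: then $X = 0$ and both sides of \eqref{eq:variance_bound} vanish. I may also remark that under normalization, $\expectation{Q}{\candidate} = 1$, the bound can be read through $\expectation{Q}{\candidate^2} = 1 + \variance{Q}{\candidate}$. This identity links the bound to the effective-sample-size diagnostics, but it is not needed for the proof.
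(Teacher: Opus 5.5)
Your proof is correct and follows the paper's own argument: bound $\variance{Q}{X}$ by $\expectation{Q}{X^2}$, use the pointwise estimate $X^2 \leq C_L^2\, \candidate(Z)^2$, and integrate. Your check that $X \in L^2(Q)$ is a useful step the paper omits, and you also avoid the paper's slip of writing the pointwise bound with $\expectation{Q}{\candidate(Z)^2}$ already on the right-hand side before integrating.
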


\begin{proof}
  Let $X = \candidate (Z) \mathcal{L}(h,Z)$. By the elementary inequality
  \begin{equation}
    \label{eq:aux_variance_bound}
    \variance{Q}{X} = \expectation{Q}{X^2} - (\expectation{Q}{X})^2 \leq \expectation{Q}{X^2}
  \end{equation}
  it suffices to upper-bound the second moment of $X$.

  \paragraph{} By the bound (\ref{eq:c_l_bound}), one has
  \begin{equation}
    \mathcal{L}(h,Z)^2 \leq C_L^2
  \end{equation}
  almost everywhere. Therefore,
  \begin{equation}
    X^2 = \candidate (Z)^2 \mathcal{L}(h,Z)^2 \leq C_L^2 \expectation{Q}{\candidate (Z)^2}
  \end{equation}
  Combining with the variance bound in (\ref{eq:aux_variance_bound}) yields (\ref{eq:variance_bound}).
\end{proof}

\paragraph{} Since Proposition \ref{prop:second_moment_control}, established the second moment of $\candidate$ controls the variance of the weighted loss, the natural step is to relate this same quantity to the effective sample size (ESS). This is relevant because ESS is the standard practical diagnosis for weight degeneracy in important weightning. When a few samples receive very large weights, the empirical average behaves as thought it was based on far fewer than $t$ observations. In this sense, Proposition \ref{prop:ess} connects the variance-oriented analysis of Proposition \ref{prop:second_moment_control} to the empirical stability diagnostics used in practice. In particular, it shows that large weight second moments correspond to small ESS, both at the empirical level and, under normalization, at the population level. This provides a direct mathematical interpretation of ESS as a surrogate measure of stability for the learned ratio model.

\begin{proposition}\label{prop:ess}
  Let $Z_1, \ldots, Z_t \iid Q$, and define the weights 
  \begin{equation}
    w_i = \candidate (Z), \; i=1,...,t.
  \end{equation}
  Assume that $w_i \geq 0$ for all $i$, and define the empirical ESS by
  \begin{equation}
    \text{ESS}_t = \frac{\left( \sum^t_{i=1} w_i \right)^2}{\sum^t_{i=1}w_i^2},
  \end{equation}
  whenever $\sum^t_{i=1}w_i > 0$. Then
  \begin{enumerate}
    \item One has
    \begin{equation}
      1 \leq \text{ESS}_t \leq t
    \end{equation}
    whenever at least one weight is nonzero;
    \item If the empirical weights are normalized so that 
    \begin{equation}
      \frac{1}{t} \sum \limits^t_{i=1} w_i \approx 1,
    \end{equation}
    then
    \begin{equation}
      \frac{\text{ESS}_t}{t} \approx \frac{1}{1/t \sum^t_{i=1} w_i^2};
    \end{equation}
    \item If moreover $\expectation{Q}{\candidate (Z)} = 1$ and $\candidate \in L^2(Q)$, the by the law of large numbers,
    \begin{equation}
      \label{eq:aux_ess_division}
      \frac{\text{ESS}_t}{t} = \frac{\left( 1/t \sum^t_{i=1} \candidate (Z) \right)^2}{1/t \sum^t_{i=1} \candidate (Z)^2} \longrightarrow \frac{1}{\expectation{Q}{\candidate (Z)^2}},
    \end{equation}
    almost everywhere. In particular, larger second moments of $\candidate$ corresponds to smaller asymptotic effective sample sizes.
  \end{enumerate}
\end{proposition}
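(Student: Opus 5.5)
The plan is to treat the three items separately, since each follows from an elementary inequality or a limit theorem applied to the weights $w_i = \candidate(Z_i)$. The statement writes $\candidate(Z)$, but I read it as $\candidate(Z_i)$ throughout.

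\emph{Item 1.} Fix weights $w_i \geq 0$ with $S_1 := \sum_i w_i > 0$, and set $S_2 := \sum_i w_i^2$. For the upper bound, apply the Cauchy--Schwarz inequality to the vectors $(w_1,\dots,w_t)$ and $(1,\dots,1)$. This gives $S_1^2 \leq t\, S_2$, hence $\text{ESS}_t \leq t$, with equality exactly when all weights coincide. For the lower bound, nonnegativity lets us expand $S_1^2 = \sum_i w_i^2 + \sum_{i \neq j} w_i w_j \geq S_2$, so $\text{ESS}_t \geq 1$. Equality holds exactly when a single weight is nonzero. I would note that $S_2 > 0$ whenever $S_1 > 0$, so the quotient is well defined.

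\emph{Item 2.} Divide numerator and denominator by $t^2$ to obtain the exact identity
\begin{equation}
  \frac{\text{ESS}_t}{t} = \frac{\left(\frac{1}{t}\sum_{i=1}^t w_i\right)^2}{\frac{1}{t}\sum_{i=1}^t w_i^2}.
\end{equation}
When $\frac{1}{t}\sum_i w_i = 1$ exactly, this reduces to the reciprocal of the empirical second moment. Since the statement uses $\approx$, I would make it precise as follows. If $\frac{1}{t}\sum_i w_i = 1 + \varepsilon$, then the ratio equals $(1+\varepsilon)^2 \big/ \left(\frac{1}{t}\sum_i w_i^2\right)$, so the relative error is $(1+\varepsilon)^2 - 1 = O(\varepsilon)$.

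\emph{Item 3.} I would invoke the strong law of large numbers twice on the i.i.d. sequence $\candidate(Z_i)$. First, $\candidate \in L^2(Q)$ implies $\candidate \in L^1(Q)$, because $Q$ is a probability measure. Therefore $\frac{1}{t}\sum_i \candidate(Z_i) \to \expectation{Q}{\candidate} = 1$ almost surely. Second, $\candidate^2 \in L^1(Q)$ gives $\frac{1}{t}\sum_i \candidate(Z_i)^2 \to \expectation{Q}{\candidate^2}$ almost surely.

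The limit $\expectation{Q}{\candidate^2}$ is strictly positive, since Jensen's inequality gives $\expectation{Q}{\candidate^2} \geq \left(\expectation{Q}{\candidate}\right)^2 = 1$. The map $(a,b) \mapsto a^2/b$ is continuous on $\{b > 0\}$, so the continuous mapping theorem applied on the intersection of the two almost-sure events yields convergence of the identity from item 2 to $1/\expectation{Q}{\candidate^2}$. The same positivity ensures that $S_1 > 0$ eventually, almost surely, so $\text{ESS}_t$ is defined for all large $t$. I would state the mode of convergence as "almost surely" with respect to the sampling of the $Z_i$, which is the intended meaning of "almost everywhere". The monotonicity remark is immediate, because $x \mapsto 1/x$ is decreasing.

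The only mildly delicate step is item 3. There I need to check that the denominator limit is bounded away from zero, which the Jensen argument handles, and that both laws of large numbers hold on a common full-measure event, which follows because a countable intersection of full-measure events has full measure. Items 1 and 2 are routine.
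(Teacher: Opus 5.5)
Your proof is correct and follows the paper's argument. For item 1 you use Cauchy--Schwarz against the all-ones vector and expand the square with nonnegative cross terms; for item 3 you apply the strong law of large numbers twice on a common full-measure event. You also use Jensen to show $\expectation{Q}{\candidate^2} \geq 1$, which keeps the limiting denominator away from zero and justifies the continuous-mapping step, a detail the paper leaves implicit.
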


\begin{proof}
  Item 2 is trivial, so the focus is going to be on items 1 and 3.

  \paragraph{1.} To show the upper bound, apply the Cauchy-Schwarz inequality with $(w_1, ..., w_t)$ and $(1,...,1)$:
  \begin{equation}
    \left( \sum \limits_{i=1}^t w_i \right)^2 \leq  \left( \sum \limits_{i=1}^t w_i^2 \right)\left( \sum \limits_{i=1}^t 1^2 \right) = t \sum \limits^t_{i=1}w_i^2.
  \end{equation}
  Hence, $\text{ESS}_t \leq t$.

  \paragraph{} For the lower bound, expand the squares:
  \begin{equation}
    \begin{split}
      \left( \sum \limits_{i=1}^t w_i \right)^2 &= \left( \sum \limits_{i=1}^t w_i \right) \left( \sum \limits_{j=1}^t w_j \right) = \sum \limits_{i=1}^t \sum \limits_{j=1}^t w_i w_j \\
      &= \sum \limits_{i=1}^t w_i^2 + \sum \limits_{\substack{1 \leq i, j \leq t \\ i \neq j}} w_i w_j = \sum \limits_{i=1}^t w_i^2 + 2 \sum \limits_{1 \leq i < j \leq t} w_i w_j.
    \end{split}
  \end{equation}
  Since $w_i \geq 0$, every cross-term $w_i w_j$ is nonnegative, so
  \begin{equation}
    \left( \sum \limits_{i=1}^t w_i \right)^2 = \sum \limits_{i=1}^t w_i^2 + 2 \sum \limits_{1 \leq i < j \leq t} w_i w_j \geq  \sum \limits_{i=1}^t w_i^2.
  \end{equation}
  Thus, $\text{ESS}_t \geq 1$.

  \paragraph{} Assume that $\expectation{Q}{\candidate (Z)} = 1$ and $\candidate \in L^2 (Q)$. Then, $\candidate \in L^1 (Q)$ as well, and the strong law of large numbers yields:
  \begin{equation}
    \frac{1}{t} \sum \limits^t_{i = 1} \candidate (Z_i) \longrightarrow \expectation{Q}{\candidate (Z)} = 1, 
  \end{equation}
  $Q$-almost everywhere, and
  \begin{equation}
    \frac{1}{t} \sum \limits^t_{i = 1} \candidate (Z_i)^2 \longrightarrow \expectation{Q}{\candidate (Z)^2}
  \end{equation}
  $Q$-almost everywhere. Hence, on the event where both convergences and (\ref{eq:aux_ess_division}) is proved.
\end{proof}

\paragraph{} The preceding results establish the structural role of the learned density-ratio model. Proposition \ref{prop:p_theta_prob_measure} shows that a normalized nonnegative ratio induces a probability measure $P_\theta$ on the observation space. Proposition \ref{prop:p_theta_constraint} shows that moment constraints enforce calibration on this induced law on the prescribed test class. Propositions \ref{prop:second_moment_control} and \ref{prop:ess} then relate the second moment of the learned ratio to the variance of the weighted loss and to the effective sample size, thereby clarifying the stability considerations underlying the framework. These results justify the form of the constraints imposed on the ratio model. It remains to justify the statistical meaning of the fitting criterion itself.

\paragraph{} The relevant question is whether the population objective used to learn the ratio is correctly aligned with the true density ratio $\trueratio$. More precisely, one wishes to determine whether minimization of the constrained population objective recovers the true change of measure under realizability, and what objective is selected when the admissible class does not contain $\trueratio$. The next theorem answers this question for a least-square density-ratio criterion. It shows that the population objective is equivalent, up to an additive constant, to the squared $L^2(Q)$-distance to the true ratio. Consequently, under realizability the true density-ratio is recovered, whereas under misspecification the selected solution is the closest admissible approximation in $L^2(Q)$.

\begin{remark}
  In the context of this work, realizability means that the true density-ratio actually belongs to the model class being optimized over. Formally, if the admissible class is $\mathcal{R} \subseteq L^2(Q)$ then the realizability assumption is $\trueratio \in \mathcal{R}$.
\end{remark}

\paragraph{} Before properly formulating the next theorem,  it is useful to explain the choices of the fitting functional. The purpose of ratio learning in the present framework is not merely to approximate $\trueratio$ in an abstract sense, but to do so in a way that makes the weighted representation of the target risk reliable. Proposition \ref{prop:bias_control_via_ratio_approx} already shows that the bias induced by replacing $\trueratio$ with a candidate ratio $r$ can be controlled by $L^1(Q)$- or $L^2(Q)$- approximation error. This makes an $L^2(Q)$-based fitting principles especially natural. For this reason, consider the population functional
\begin{equation}
  \label{eq:populational_functional}
  \mathbb{J} (r) = \frac{1}{2} \expectation{Q}{r(Z)^2} - \expectation{P}{r(Z)}.
\end{equation}

\paragraph{} Using the change-of-measure identity with $\trueratio$ one has:
\begin{equation}
  \label{eq:populational_change_of_measure}
  \expectation{P}{r (Z)} = \expectation{Q}{\trueratio (Z) r (Z)}
\end{equation}
From (\ref{eq:populational_change_of_measure}) in (\ref{eq:populational_functional}):
\begin{equation}
  \label{eq:aux_populational_functional_transformed}
  \mathbb{J}(r) = \frac{1}{2} \expectation{Q}{r(Z)^2} - \expectation{Q}{\trueratio (Z) r (Z)}.
\end{equation}
On the other hand,
\begin{equation}
  \label{eq:aux_populational_functional_norm}
  \frac{1}{2} \| r - \trueratio \|^2_{L^2(Q)} = \frac{1}{2} \expectation{Q}{r(Z)^2} - \expectation{Q}{\trueratio (Z) r (Z)} + \frac{1}{2} \expectation{Q}{\trueratio (Z)^2}.
\end{equation}
From (\ref{eq:aux_populational_functional_norm}) in (\ref{eq:aux_populational_functional_transformed}):
\begin{equation}
  \label{eq:population_functional_final_form}
  \mathbb{J} (r) = \frac{1}{2} \| r - \trueratio \|^2_{L^2(Q)} - \frac{1}{2}\expectation{Q}{\trueratio (Z)}.
\end{equation}
Hence, the functional $\mathbb{J}$ differs from the squared $L^2(Q)$-distance to $\trueratio$ only by a constant independent of $r$. It follows that the minimization of $\mathbb{J}$ is equivalent to the minimization of the $L^2(Q)$-distance to the true density-ratio.

\begin{theo}\label{theo:r_star_uniqueness}
  Let $\mathcal{R} \subseteq L^2(Q)$ be a class of measurable functions $r: \mathcal{Z} \longrightarrow [0,\infty)$. Suppose that for every $r \in \mathcal{R}$, the expectations appearing below are well-defined and consider the population functional defined in (\ref{eq:populational_functional}). Assume further that $\trueratio \in L^2(Q)$. Then, the following hold:
  \begin{enumerate}
    \item For every $r \in \mathcal{R}$,
    \begin{equation}
      \label{eq:population_functional_difference}
      \mathbb{J}(r) - \mathbb{J}(\trueratio) = \frac{1}{2} \| \trueratio - r \|^2_{L^2(Q)};
    \end{equation}
    \item Consequently, every minimizer of $\mathbb{J}$ over $\mathcal{R}$ is also a minimizer of 
    \begin{equation}
      r \mapsto \| r - \trueratio \|^2_{L^2(Q)}
    \end{equation} 
    over $\mathcal{R}$ and conversely;
    \item If, in addition, the realizability condition $\trueratio \in \mathcal{R}$ holds, then $\trueratio$ is the unique minimizer of $\mathbb{J}$ over $\mathcal{R}$.
  \end{enumerate}
\end{theo}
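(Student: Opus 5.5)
The plan is to build everything on the identity \eqref{eq:population_functional_final_form}, re-deriving it carefully since the displayed version has a slip: the additive constant should be $-\tfrac12\expectation{Q}{\trueratio(Z)^2}$, not $-\tfrac12\expectation{Q}{\trueratio(Z)}$.

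\textbf{Item 1.} Fix $r\in\mathcal{R}$. Since $r,\trueratio\in L^2(Q)$, Cauchy--Schwarz gives $\trueratio r\in L^1(Q)$, so every term below is finite. By Proposition~\ref{prop:change_of_measure}, applied with $f=r$ (which lies in $L^1(P)$ because $\int |r|\,dP=\expectation{Q}{\trueratio r}<\infty$), we have $\expectation{P}{r}=\expectation{Q}{\trueratio r}$. Hence
\[
\mathbb{J}(r)=\tfrac12\|r-\trueratio\|^2_{L^2(Q)}-\tfrac12\|\trueratio\|^2_{L^2(Q)}.
\]
Applying this with $r=\trueratio$ gives $\mathbb{J}(\trueratio)=-\tfrac12\|\trueratio\|^2_{L^2(Q)}$. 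Note that $\mathbb{J}(\trueratio)$ is well defined because $\trueratio\in L^2(Q)$, even if $\trueratio\notin\mathcal{R}$. Subtracting the two expressions yields \eqref{eq:population_functional_difference}.

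\textbf{Item 2.} By item 1, $\mathbb{J}(r)=\tfrac12\|r-\trueratio\|^2_{L^2(Q)}+c$ on $\mathcal{R}$, where $c=\mathbb{J}(\trueratio)$ is a finite constant independent of $r$. Two functions differing by a constant have the same argmin sets and the same infimum. Concretely, $r_0$ minimizes $\mathbb{J}$ over $\mathcal{R}$ iff $\mathbb{J}(r_0)\le\mathbb{J}(r)$ for all $r\in\mathcal{R}$, which holds iff $\|r_0-\trueratio\|^2\le\|r-\trueratio\|^2$ for all $r\in\mathcal{R}$.

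\textbf{Item 3.} If $\trueratio\in\mathcal{R}$, then $\mathbb{J}(r)-\mathbb{J}(\trueratio)\ge0$ for all $r\in\mathcal{R}$, so $\trueratio$ is a minimizer. If $r_0$ is any minimizer, then $\mathbb{J}(r_0)=\mathbb{J}(\trueratio)$, so $\|r_0-\trueratio\|_{L^2(Q)}=0$, i.e.\ $r_0=\trueratio$ $Q$-a.s.

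The main obstacle is this uniqueness statement, which is necessarily only up to $Q$-null sets. I would state explicitly that elements of $\mathcal{R}$ are identified modulo $Q$-a.e.\ equality, as in $L^2(Q)$. This matches the $Q$-a.s.\ uniqueness of the Radon--Nikodym derivative noted in Section~\ref{sec:foundation}. Without that convention, literal pointwise uniqueness can fail: a version of $\trueratio$ modified on a $Q$-null set could also belong to $\mathcal{R}$ and would be another minimizer. The integrability bookkeeping in item 1 is the only other point needing care, and Cauchy--Schwarz handles it.
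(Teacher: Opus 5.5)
Your proposal is correct and follows the same route as the paper. You rewrite $\expectation{P}{r(Z)}=\expectation{Q}{\trueratio(Z)r(Z)}$, complete the square to get $\mathbb{J}(r)=\tfrac12\|r-\trueratio\|^2_{L^2(Q)}+\mathbb{J}(\trueratio)$, and then read off the argmin equivalence and the $Q$-a.e.\ uniqueness under realizability. Your corrected constant $-\tfrac12\expectation{Q}{\trueratio(Z)^2}$ is the value the paper itself uses in its item 2, so the $-\tfrac12\expectation{Q}{\trueratio(Z)}$ in (\ref{eq:population_functional_final_form}) is a typo. Your integrability bookkeeping is a harmless addition: since $r\geq 0$, the change-of-measure identity holds in $[0,\infty]$ before finiteness is known, so the argument is not circular.
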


\begin{proof}
  \textbf{1.} The first part comes directly from equations (\ref{eq:populational_change_of_measure}) to (\ref{eq:population_functional_final_form}).

  \paragraph{2.} The identity (\ref{eq:population_functional_difference}) shows that:
  \begin{equation}
    \mathbb{J} (r) = \frac{1}{2} \| r - \trueratio \|^2_{L^2(Q)} + \mathbb{J}(\trueratio),
  \end{equation}
  for all $r \in \mathcal{R}$. Note that, from (\ref{eq:population_functional_final_form}), $\mathbb{J}(\trueratio)$ is given by:
  \begin{equation}
    \mathbb{J} (\trueratio) = - \frac{1}{2} \expectation{Q}{\trueratio (Z)^2}.
  \end{equation}
  Since $\mathbb{J} (\trueratio)$ is a constant independent of $r$, minimizing $\mathbb{J}(r)$ over $\mathcal{R}$ is equivalent to minimizing $1/2 \| \trueratio - r \|^2_{L^2(Q)}$ over $\mathcal{R}$.

  \paragraph{3.} Assume that $\trueratio \in \mathcal{R}$. Then, for every $r \in \mathcal{R}$,
  \begin{equation}
    \mathbb{J}(r) - \mathbb{J}(\trueratio) = \frac{1}{2} \| \trueratio - r \|^2_{L^2(Q)} \geq 0.
  \end{equation}
  Thus, $\trueratio$ is a minimizer over $\mathcal{R}$. If $r \in \mathcal{R}$ is another minimizer, then
  \begin{equation}
    \mathbb{J} (r) = \mathbb{J}(\trueratio),
  \end{equation}
  so that
  \begin{equation}
    \frac{1}{2} \| \trueratio - r \|^2_{L^2(Q)} = 0.
  \end{equation}
  Therefore, 
  \begin{equation}
    \| \trueratio - r \|^2_{L^2(Q)} \Rightarrow r = \trueratio
  \end{equation}
  $Q$-almost everywhere. Hence, $\trueratio$ is the unique minimizer in $L^2(Q)$, equivalently unique to $Q$-almost everywhere equality.
\end{proof}

\subsection{Augmented Lagrangian method}

\paragraph{} Until now, the results identify the structural constraints that a learned density-ratio model should satisfy. In particular, normalization and moment-matching conditions are natural because they express defining integral identities of the true Radon-Nikodym derivative and ensure calibration of the induced law on a prescribed test class. What remains is to specify how such constraints are enforced during optimization. A convenient approach is provided by the augmented Lagrangian method, which contains the variational structure of the Lagrange multipliers with the numerical stabilizing effect of quadratic penalties. In the present setting, this yields a practical mechanism for fitting the ratio while simultaneously driving the calibration residuals downward zero.

\paragraph{} Let $\phi_1, ..., \phi_m: \mathcal{Z} \longrightarrow \mathbb{R}$ be measurable test functions. Consider the normalization residual:
\begin{equation}
  g_0 (\theta) = \expectation{Q}{\candidate (Z)} - 1,
\end{equation}
and the moment residuals
\begin{equation}
  g_j (\theta) = \expectation{Q}{\candidate (Z) \phi_j (Z)} - \expectation{P}{\phi_j (Z)}, \quad j = 1, \dots, m,
\end{equation}
whenever these expectations are well-defined. The constrained population problem takes the form
\begin{equation}
  \min \limits_{\theta \in \Theta} \mathbb{J} (\candidate )\; \text{subject to} \; g_0 (\theta) = 0, \; g_j (\theta) = 0, \; j = 1,...,m.  
\end{equation}
Considering (\ref{eq:populational_functional}), by Theorem~\ref{theo:r_star_uniqueness}, this criterion is equivalent, up to a constant independent of $\theta$, to the squared $L^2(Q)$-distance from $\candidate$ to the true ratio $\trueratio$. The role of the constraints is therefore not to replace ratio fitting, but to ensure that the fitted ratio also defines a calibrated change of measure.

\paragraph{} A pure quadratic-penalty approach would replace the constraints' problem by
\begin{equation}
  \mathbb{J}(\theta) + \frac{1}{2} \eta_0 g_0 (\theta)^2 + \frac{1}{2} \sum \limits^m_{j=1}\eta_j g_j (\theta)^2,
\end{equation}
with penalty parameters $\eta_0, \eta_j > 0$. Such formulation discourages constraint violation, but in general it does not enforce the constraints accurately unless the penalty parameters are taken large, which often leads to ill-conditioning. A pure Lagrangian formulation, 
\begin{equation}
  \mathbb{J}(\theta) + \lambda g_0 (\theta) + \sum \limits^m_{j=1} \mu_j g_j (\theta)
\end{equation}
correctly represents the equality constraints at the variational level, but it requires joint optimization over the primal variable and the multiplier $\lambda, \mu_1,...,\mu_m \in \mathbb{R}$, and in convex stochastic this may lead to unstable dynamics. The augmented Lagrangian contains both mechanisms. The multiplier terms encode the quality constraints, while the quadratic terms penalize violations directly and improve local conditioning.

\paragraph{} To this end, introduce a dual variable $\lambda \in \mathbb{R}$ for the normalization constraint and dual variables $\mu_1, ..., \mu_m \in \mathbb{R}$ for the moment constraints. For the penalty parameters $\eta_0, \eta_1, ....,\eta_m>0$, define the augmented Lagrangian functional
\begin{equation}
  \mathbb{L}_{AL} [\theta, \lambda, \bm{\mu}] = \mathbb{J}(\theta) + \lambda g_0 (\theta) + \frac{\eta_0}{2}g_0 (\theta)^2 + \sum \limits^m_{j=1}  \mu_j g_j(\theta) + \frac{\eta_j}{2} g_j (\theta)^2,
\end{equation}
with $\bm{\mu} = (\mu_1,...,\mu_m)$. The fit term $\mathbb{J}(\theta)$ promotes approximation of the true ratio. The linear dual terms $\lambda g_0 (\theta)$ and $\mu_j g_j (\theta)$ impose the equality constraints in variational form, and the quadratic penalty terms $\tfrac{\eta_0}{2} g_0(\theta)^2$ and $\tfrac{\eta_j}{2} g_j (\theta)^2$ make violation costly even before the dual variables have adapted.

\paragraph{} The effect of the constraints at the primal level becomes particularly transparent after differentiation with respect to $\theta$:
\begin{equation}
  \nabla_\theta \mathbb{L}_{AL} [\theta, \lambda, \bm{\mu}] = \nabla_\theta \mathbb{J} (\theta) + (\lambda + \eta_0 g_0 (\theta))\nabla_\theta g_0 (\theta) + \sum \limits^m_{j=1} (\mu_j + \eta_j g_j (\theta))\nabla_\theta g_j (\theta).
\end{equation}
Thus, each constraint contributes through an effective coefficient of the form $\mu_j + \eta_j g_j (\theta)$ (with $\mu_0 := \lambda$, $\eta_0$ as above, $j = 0,\dots,m$), that is, through the sum of an accumulated dual correction and an instantaneous residual-dependent penalty. Note that the quadratic part reacts immediately to current constraint violation, whereas the dual variable carries longer term information about persistent miscalibration.

\paragraph{} The associated optimization scheme alternates a descent step on the primal parameter $\theta$ with ascent steps on the dual variable. Since
\begin{equation}
  \frac{\partial}{\partial \lambda} \mathbb{L}_{AL}[\theta, \lambda, \bm{\mu}] = g_0(\theta), \; \frac{\partial}{\partial \mu_j}\mathbb{L}_{AL}[\theta, \lambda, \bm{\mu}] = g_j (\theta),
\end{equation}
the natural dual updates are
\begin{equation}
  \lambda^{(k+1)} = \lambda^{(k)} + \eta_0 g_0 (\theta^{(k+1)})
\end{equation}
and
\begin{equation}
  \mu_j^{(k+1)} = \mu_j^{(k)} + \eta_j g_j (\theta^{(k+1)}), \; j = 1, ..., m.
\end{equation}
In this way, the dual variables act as running corrections for persistent normalization and moment-transport errors. A positive residual increases the corresponding multiplier, and hence, strengthens the next primal correction in that direction. A negative residual has the opposite effect.

\paragraph{} At the empirical level, the population quantities are replaced by the sample averages. Given source observations $Z_1^Q,..., Z_{n_Q}^Q \sim Q$ and target observations $Z_1^P,...,Z_{n_P}^P \sim P$, define 
\begin{equation}
  \hat{g}_0 (\theta) = \frac{1}{n_Q} \sum \limits^{n_Q}_{i=1} \candidate (Z_i^Q)-1,
\end{equation}
and
\begin{equation}
  \hat{g}_j (\theta) = \frac{1}{n_Q} \sum \limits^{n_Q}_{i=1} \candidate (Z_i^Q)\phi_j(Z_i^Q) - \frac{1}{n_P} \sum \limits^{n_P}_{i=1}\phi_j (Z_i^P), \; j = 1,...,m.
\end{equation}
The empirical augmented Lagrangian objective is then 
\begin{equation}
  \hat{\mathbb{L}}_{AL}[\theta, \lambda, \bm{\mu}] = \hat{\mathbb{J}} (\theta) +\lambda\hat{g}_0(\theta) + \frac{\eta_0}{2} \hat{g}_0 (\theta)^2 + \sum \limits^m_{j=1} \mu_j \hat{g}_j (\theta) + \frac{\eta_j}{2} \hat{g}_j (\theta)^2,
\end{equation}
where
\begin{equation}
  \hat{\mathbb{J}} (\theta) = \frac{1}{2 n_Q}\sum \limits^{n_Q}_{i=1} \candidate (Z_i^Q)^2 -  \frac{1}{n_P} \sum \limits^{n_P}_{i=1} \candidate (Z_i^P).
\end{equation}
Training consists of stochastic or minibatch descent on $\theta$ for all $\hat{\mathbb{L}}_{AL}$, together with the empirical dual-ascent updates
\begin{equation}
  \lambda \longleftarrow \lambda + \eta_0 \hat{g}_0 (\theta), \quad \mu_j \longleftarrow \mu_j + \eta_j \hat{g}_j(\theta), \; j=1,...,m.
\end{equation}
This yields a simultaneous primal-dual procedure in which ratio-fitting and constraint enforcement are coupled at every iteration.

\paragraph{} The augmented Lagrangian terms do not merely regularize the optimization numerically. They enforce the transport identities that characterize a calibrated reweighting. Through Proposition \ref{prop:p_theta_constraint}, satisfaction of these constraints implies agreement of $P_\theta$ with $P$ on the entire linear span of the selected test functions. Thus, the augmented Lagrangian method should be understood as mechanism for enforcing calibration of the induced change of measure, rather than simply as a device for reducing the training loss.

\subsection{Other numerical implementation aspects}

\paragraph{} The constrained formulation above describes how calibration identities may be enforced during training through an augmented Lagrangian mechanism. In practice, however, ratio learning also involves transformations applied either to improve numerical stability or to correct residual normalization error after training. The most common are post-hoc normalization, clipping, and tempering. These transformations play different mathematical roles. Some preserve, at least approximately, the change of measure structure required of a valid density-ratio, whereas others primarily act as optimization or variance-reduction devices and should not be interpreted as preserving the Radon-Nikodym identities. It is therefore useful to distinguish these cases explicitly.

\paragraph{} The post-hoc normalization consists on a simple normalization correction that is obtained by rescaling $\candidate$ by its $Q$-mean. At the population level, define
\begin{equation}
  \tilde{\candidate} (Z) = \frac{\candidate (Z)}{\expectation{Q}{\candidate (Z)}},
\end{equation}
whenever $0 < \expectation{Q}{\candidate (Z)} < \infty$. Then,
\begin{equation}
  \expectation{Q}{\tilde{\candidate} (Z)} = \frac{\expectation{Q}{\candidate (Z)}}{\expectation{Q}{\candidate (Z)}} = 1.
\end{equation}
Thus, post-hoc normalization enforces the constraints exactly. In particular, if $\candidate \geq 0$, then $\tilde{\candidate} \geq 0$ as well, and Proposition \ref{prop:change_of_measure} implies that $\tilde{\candidate}$ induces a probability measure
\begin{equation}
  \tilde{P}_\theta (A) = \int \limits_A \tilde{\candidate} (z) dQ(z), \; A \in \mathcal{A}.
\end{equation}

\paragraph{} The limitations of this transformation is that it only rescales the ratio globally. It does not alter the shape of $\candidate$, and therefore cannot correct miscalibration beyond normalization itself. In particular, if the original ratio fails to transport selected moments correctly, then post-hoc normalization does not in general enforce
\begin{equation}
  \expectation{Q}{\tilde{\candidate}(Z) \phi (Z)} = \expectation{P}{\phi (Z)}.
\end{equation}
Accordingly, post-hoc normalization is best understood as a one-dimensional correction of total mass, not as a substitute for full calibration.

\paragraph{} At the empirical level, given source observations $Z_1^Q,...,Z_{n_Q}^Q \sim Q$, define
\begin{equation}
  \hat{\tilde{r}}_\theta (Z) = \frac{\candidate (Z)}{1/n_Q \sum^{n_Q}_{i=1} \candidate (Z_i^Q)}.
\end{equation}
Then, the empirical normalization identity
\begin{equation}
  \frac{1}{n_Q} \sum \limits^{n_Q}_{i=1} \hat{\tilde{r}}_\theta (Z_i^Q) = 1,
\end{equation}
holds exactly on the calibration sample. This provides a simple post-training normalization step when exact population normalization is not enforced during fitting.

\paragraph{} A different stabilization device is clipping. For a threshold $c>0$, define
\begin{equation}
  \candidate^c(Z) = \min \{ \candidate (Z), c \}.
\end{equation}
This truncates large weights and thereby reduces the influence of rare samples carrying excessive mass. Since
\begin{equation}
  0 \leq \candidate^c(Z) \leq \candidate (Z),
\end{equation}
for all $Z \in \mathcal{Z}$, clipping directly reduces the second moment, and hence improves variance behavior in the sense of Proposition \ref{prop:second_moment_control}. It is therefore a natural device for controlling instability induced by heavy-tailed weights.

\paragraph{} From the measure-theoretic point of view, clipping preserve nonnegativity, so the reweighted set function 
\begin{equation}
  P_\theta^c (A) = \int \limits_A \candidate^c(z)dQ (z), \; A \in \mathcal{A}
\end{equation}
is still well-defined. However, the normalization identity is in general no longer exact
\begin{equation}
  \expectation{Q}{\candidate^c(Z)} \leq \expectation{Q}{\candidate (Z)}
\end{equation}
with strict inequality whenever clipping occurs on a set of positive $Q$-measure. Thus, clipping typically removes mass from the tail and produces a sub-normalized ratio unless followed by an additional rescaling step. If one defines instead
\begin{equation}
  \tilde{r}^c_\theta (Z) = \frac{\candidate^c (Z)}{\expectation{Q}{\candidate^c(Z)}},
\end{equation} 
then again $\candidate^c$ satisfies $\expectation{Q}{\candidate^c (Z)}=1$ and induces a probability measure.

\paragraph{} Clipping may still be viewed as approximately identity preserving in a structural sense. It does not change the sign of the ratio, and after renormalization it continues to define a legitimate change of measure. What it changes is the distribution of mass, especially in the upper tail. Thus, clipping introduces a bias variance trade-off. It may improve stability substantially, but at the price of modifying the original density-ratio representation.

\paragraph{} Tempering is another common transformation, defined for $0 < \beta \leq 1$ by:
\begin{equation}
  \candidate^\beta (Z) = \candidate (Z)^\beta.
\end{equation}
When $\beta < 1$, this compress large weights and reduces weight dispersion. In heavy-tailed regimes, the effect on variance and ESS may be substantial. Tempering is therefore attractive as a numerical stabilizer. 

\paragraph{} In the mathematical aspect, however, tempering differs fundamentally from that post-hoc normalization or clipping. In general, $\candidate^\beta$ is not a density-ratio corresponding to the same target law. Even if $\candidate = \trueratio$, one does not have $\expectation{Q}{\candidate^\beta (Z)} = 1$. Indeed, since $x \mapsto x^\beta$ is concave on $\mathbb{R}_+$ for $0 < \beta < 1$, Jensen Inequality yields:
\begin{equation}
  \expectation{Q}{\trueratio (Z)^\beta} \leq (\expectation{Q}{\trueratio})^\beta \leq 1,
\end{equation}
with equality only on degenerate cases. Thus, the transformed ratio fails, in general, to satisfy the defining normalization identity of a Radon-Nikodym derivative. The same issue propagates to the moment constraint.

\paragraph{} Tempering should not be interpreted as preserving the change-of-measure structure of the learned ratio. Rather, it should be viewed as an optimization or variance reduction device applied to stabilize the fitting problem. If used, it is mathematically more coherent to regard the tempered ratio as an auxiliary object for training or numerical control, rather than as the final deployed ratio intended to represent $\trueratio$.

\section{PAC-Bayes}
\label{sec:pacbayes}

\paragraph{} A constrained density-ratio network produces a calibrated approximation $\candidate$ of the Radon-Nikodym derivative, but the central decomposition of Section~\ref{sec:foundation} remains a statement at the level of populations. To make the framework actionable on a finite source sample, the gap between the population weighted risk $R_{\candidate}(\rho)$ and its importance-weighted empirical counterpart $\weightedrisk{\theta}(\rho)$ has to be controlled by a high-probability inequality, valid simultaneously over a sufficiently rich class of posteriors $\rho$. PAC-Bayes is the natural learning-theoretic machinery for this task. It shifts the perspective from a single data-dependent hypothesis to a distribution over hypotheses, and it pays a Kullback-Leibler complexity term to absorb the freedom of choosing that distribution after seeing the source sample. The role of the present section is to instantiate PAC-Bayes on the DRN-induced weighted risk, to identify the closed-form posterior that the resulting variational problem selects, and to extend the resulting fixed-time certificate to a time-uniform one.

\paragraph{} Two structural choices organize the development. First, the PAC-Bayes machinery is applied to the weighted source risk $R_{\candidate}(\rho)$ rather than directly to the target risk $R_P(\rho)$. The covariate-only ratio $\candidate$ certifies the weighted source quantity, and the unweighted target risk is bounded by adding the ratio-bias term from the central decomposition. This separation keeps the PAC-Bayes inequality classical and isolates every source of error attributable to ratio miscalibration in a single explicit term. Second, the certificate is developed in two regimes. The fixed-time regime treats the sample size $t$ as a deterministic constant fixed before the data is observed, and yields the standard square-root and Bernoulli-KL bounds for the weighted risk, the corresponding Kullback-Leibler-regularized training objective, the DRN-weighted Gibbs posterior that uniquely minimizes it, and a stability statement that controls how perturbations of $\candidate$ propagate to the optimal posterior. The anytime regime drops the assumption that $t$ is fixed in advance and produces a single high-probability event on which the same bound holds simultaneously for every sample size $t \ge t_{\min}$, through a geometric-peeling construction over an unbounded sequence of epochs. The fixed-time bound is the object that the framework reports as a guarantee at a deployment sample size, while the anytime bound is the object that justifies inspecting the certificate during training, stopping training once the certificate becomes small enough, and selecting a checkpoint from a sequence of candidates without inflating the confidence level by post-hoc multiplicity correction. The mathematical formulation develops the fixed-time apparatus, and Section~\ref{sec:anytime} establishes the time-uniform extension and recombines it with the DRN bias decomposition to obtain an anytime target-risk bound. Both regimes are evaluated empirically in Section~\ref{sec:numerical}.

\subsection{Mathematical formulation}

\paragraph{} The DRN construction defines a weights learning problem in the data space. Source observations $Z_1,...,Z_t \sim Q$ are reweighted by the learned ration $\candidate$, producing the weighted empirical risk $\weightedrisk{\theta} (h)$ for each fixed hypothesis $h \in \mathcal{H}$. The PAC-Bayes layer changes the perspective from a single deterministic hypothesis to a distribution over hypotheses. This is the natural setting for deriving high probability bounds that hold uniformly over data-dependent learning rules. 

\paragraph{} Let $(\mathcal{H}, \mathcal{G})$ be a measurable hypothesis space, where $\mathcal{G}$ is a $\sigma$-algebra on $\mathcal{H}$. A prior is a probability measure $\pi$ on $(\mathcal{H}, \mathcal{G})$. The prior is chosen independently of the observed sample. After observing the data, the learning algorithm may choose a posterior probability measure $\rho$ on $(\mathcal{H}, \mathcal{G})$, typically satisfying $\rho \ll \pi$. The posterior may depend on the sample, whereas the prior is fixed before seeing the sample. 

\paragraph{}  The posterior $\rho$ defines a randomized predictor. One draws $h \sim \rho$ and predicts using the sample hypothesis. The target risk of this randomized predictor is 
\begin{equation}
  \label{eq:target_risk_randomized_predictor}
  R_P(\rho) = \expectation{h\sim \rho}{R_P(h)} = \int \limits_{\mathcal{H}} R_P(h)d \rho (h),
\end{equation} 
whenever this integral is well-defined. Since $R_P(h) = \expectation{P}{\mathcal{L}(h,Z)}$, equation (\ref{eq:target_risk_randomized_predictor}) ca be written as:
\begin{equation}
  R_P(\rho) = \int \limits_{\mathcal{H}} \int \limits_{\mathcal{Z}} \mathcal{L}(h,z)dP(z) d\rho (h).
\end{equation}
Similarly, for any candidate ratio $r: \mathcal{Z} \longrightarrow \mathbb{R}_+$, define the posterior averaged weighted population risk:
\begin{equation}
  R_r (\rho) = \posteriorexp{R_r(h)} = \int \limits_{\mathcal{H}}R_r(h) d\rho (h).
\end{equation}
Equivalently
\begin{equation}
  R_r (\rho) = \int \limits_{\mathcal{H}} \int \limits_{\mathcal{Z}} \mathcal{L}(h,z)dQ (z) d \rho (h).
\end{equation}
In particular, for the learned ratio $\candidate$:
\begin{equation}
  \label{eq:posterior_risk_learned_ratio}
  R_{\candidate} (\rho) = \posteriorexp{\expectation{Q}{\candidate (Z) \mathcal{L}(h,Z)}}.
\end{equation}
Given the source observations $Z_1,...,Z_t \sim Q$, the posterior averaged weighted empirical risk is:
\begin{equation}
  \weightedrisk{\theta}(\rho) = \posteriorexp{\weightedrisk{\theta}(h)} = \int \limits_{\mathcal{H}} \weightedrisk{\theta} (h) d\rho (h).
\end{equation}
Using the definition of $\weightedrisk{\theta}$, this becomes:
\begin{equation}
  \label{eq:posterior_empirical_risk_learned_ratio}
  \weightedrisk{\theta} (\rho) = \int \limits_{\mathcal{H}} \left( \frac{1}{t} \sum \limits_{i=1}^t \candidate (Z_i) \mathcal{L}(h,Z_i) \right)d \rho (h) = \frac{1}{t} \sum \limits^t_{i=1}\candidate (Z_i) \posteriorexp{\mathcal{L}(h,Z_i)}
\end{equation}

\paragraph{} The complexity of the posterior relative to the prior is measured by the Kullback-Leibler divergence:
\begin{equation}
  \mathbb{KL}(\rho || \pi) = \int \limits_{\mathcal{H}} \log_2 \left( \frac{d\rho}{d \pi} (h)\right)d \rho (h)
\end{equation}
when $\rho \ll \pi$, and
\begin{equation}
  \mathbb{KL}(\rho || \pi) = +\infty
\end{equation}
otherwise. This divergence is the price, in PAC-Bayes theory, of choosing a data-dependent posterior $\rho$ rather than retaining the prior $\pi$. The PAC-Bayes bound will therefore be applied to the weighted loss induced by $\candidate$. Under appropriate boundedness or scaling assumptions, it provides a high-probability upper bound on $R_{\candidate}(\rho)$ in terms of $\weightedrisk{\theta}(\rho)$, $\mathbb{KL}(\rho || \pi)$, $t$ and $\delta > 0$. The result bound is then combined with the DRN bias term to obtain a bound on the true target risk $R_P(\rho)$. This distinction between the intermediate weighted risk and the final target risk is essential for the structure of the theory.

\paragraph{} It was introduced the posterior distribution over hypotheses, and it was identified the quantity controlled by the PAC-Bayes layer. That is the between the weighted population risk $R_{\candidate}(\rho)$ and the weighted empirical risk $\weightedrisk{\theta} (\rho)$. To apply a standard PAC-Bayes theorem, it is useful to rewrite this weighted learning problem as an ordinary learning problem with a modified loss. The only change is that the original loss $\mathcal{L}(h,Z)$ is replaced by the DRN-induced weighted loss:
\begin{equation}
  \label{eq:drn_induced_loss}
  \tilde{\mathcal{L}}_\theta (h,Z) = \candidate (Z) \mathcal{L}(h,Z).
\end{equation}
Once this substitution is made, the PAC-Bayes mechanism applies to the pair consisting of the source law $Q$ and the weighted loss $\tilde{\mathcal{L}}(h,Z)$. The following lemma is purely notational, but it is the entry point for PAC-Bayes.

\begin{lemma}\label{lemma:induced_loss}
  Define $\tilde{\mathcal{L}}(h,Z)$ as in (\ref{eq:drn_induced_loss}). Then, for every posterior $\rho$
  \begin{equation}
    R_{\candidate} (\rho) = \posteriorexp{\expectation{Q}{\tilde{\mathcal{L}}(h,Z)}},
  \end{equation}
  and for $Z_1,...,Z_t \sim Q$,
  \begin{equation}
    \weightedrisk{\theta}(\rho) = \posteriorexp{\frac{1}{t} \sum \limits^t_{i=1}\tilde{\mathcal{L}}(h,Z)}.
  \end{equation}
\end{lemma}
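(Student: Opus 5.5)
The plan is to verify both identities by direct substitution of the definition (\ref{eq:drn_induced_loss}) into the expressions for $R_{\candidate}(\rho)$ and $\weightedrisk{\theta}(\rho)$ established just above. The only non-cosmetic point is to make sure each integral is well defined, so the argument should state the standing integrability assumption. For $\rho$-almost every $h$ I assume $\candidate\mathcal{L}(h,\cdot)\in L^1(Q)$, and that $h\mapsto R_{\candidate}(h)$ is $\rho$-integrable. This is the same condition under which (\ref{eq:posterior_risk_learned_ratio}) was written.

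For the population identity, I start from (\ref{eq:posterior_risk_learned_ratio}), $R_{\candidate}(\rho)=\posteriorexp{\expectation{Q}{\candidate(Z)\mathcal{L}(h,Z)}}$. For each fixed $h$, the integrand of the inner expectation is, pointwise in $Z$, exactly $\tilde{\mathcal{L}}(h,Z)$. The inner expectations therefore coincide as functions of $h$, and integrating against $\rho$ gives the first claim. No interchange of integrals is needed here, because the identity holds hypothesis-by-hypothesis before the outer integration.

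For the empirical identity, I use the definition $\weightedrisk{\theta}(\rho)=\int_{\mathcal{H}}\weightedrisk{\theta}(h)\,d\rho(h)$ together with (\ref{eq:weighted_empirical_risk_theta}). For each fixed sample and each $h$,
\begin{equation}
  \weightedrisk{\theta}(h)=\frac{1}{t}\sum_{i=1}^t \candidate(Z_i)\mathcal{L}(h,Z_i)=\frac{1}{t}\sum_{i=1}^t\tilde{\mathcal{L}}(h,Z_i),
\end{equation}
which is again a pointwise substitution. Integrating in $h$ against $\rho$ yields the claim, where the summand in the statement is read as $\tilde{\mathcal{L}}(h,Z_i)$. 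If one wants the form matching (\ref{eq:posterior_empirical_risk_learned_ratio}), the finite sum and the $\rho$-integral can be exchanged by linearity. This requires only that each $h\mapsto\mathcal{L}(h,Z_i)$ be $\rho$-integrable.

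The only step that needs any care is measurability of $(h,z)\mapsto\tilde{\mathcal{L}}(h,z)$ with respect to $\mathcal{G}\otimes\mathcal{A}$, so that the iterated integrals make sense. This follows because $\tilde{\mathcal{L}}$ is the product of the measurable maps $(h,z)\mapsto\mathcal{L}(h,z)$ and $(h,z)\mapsto\candidate(z)$. Beyond that, the lemma is purely notational, as the paper indicates.
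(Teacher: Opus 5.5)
Your proposal is correct and follows the paper's proof, which likewise gets both identities by substituting the definition \eqref{eq:drn_induced_loss} hypothesis-by-hypothesis into \eqref{eq:posterior_risk_learned_ratio} and \eqref{eq:posterior_empirical_risk_learned_ratio}. Your added integrability and joint-measurability bookkeeping, and your reading of the summand as $\tilde{\mathcal{L}}(h,Z_i)$, are sensible refinements of a step the paper simply calls immediate.
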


\begin{proof}
  This follows immediately from (\ref{eq:posterior_risk_learned_ratio}), (\ref{eq:posterior_empirical_risk_learned_ratio}), and (\ref{eq:drn_induced_loss}).
\end{proof}

\paragraph{} The theorem below is a fixed-time PAC-Bayes guarantee for this weighted risk. It is a classical PAC-Bayes bound applied after the weighted loss substitution. Recall that its role in the present framework is to control the generalization gap $R_{\candidate}(\rho) - \weightedrisk{\theta} (\rho)$.

\begin{theo}\label{theo:adapted_mcallester}
  Let $(\mathcal{H}, \mathcal{G})$ be a measurable hypothesis space, and let $\pi$ be a prior probability measure on $(\mathcal{H}, \mathcal{G})$, chosen independently of the source sample. Let $Z_1, ..., Z_t \stackrel{\mathrm{iid}}{\sim} Q $. Let $\candidate: \mathcal{Z}\longrightarrow \mathbb{R}_+$ be fixed independently of $Z_1,...,Z_t$, or condition on any independent data used to construct $\candidate$. Define $\tilde{\mathcal{L}}_\theta$ as in Lemma \ref{lemma:induced_loss}. Assume that:
  \begin{equation}
    0 \leq \tilde{\mathcal{L}}_\theta (h,Z) \leq 1,
  \end{equation}
  for all $(h,Z) \in \mathcal{H} \times \mathcal{Z}$. Then, for any $\delta \in (0,1)$, with probability at least $1-\delta$ over the draw of $Z_1, ..., Z_t \stackrel{\mathrm{iid}}{\sim} Q$, the following inequality holds simultaneously for every posterior probability measure $\rho \ll \pi$:
  \begin{equation}
    R_{\candidate} (\rho)  \leq \weightedrisk{\theta} (\rho) + \sqrt{\frac{\mathbb{KL}(\rho || \pi)+\ln (1/\delta) + \ln t+ 2}{2t - 1}}.
  \end{equation}
\end{theo}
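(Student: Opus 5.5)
The plan is to reduce the statement to the classical McAllester bound for a bounded loss, using Lemma~\ref{lemma:induced_loss}, and then prove that bound by the standard Donsker--Varadhan route. We condition on the data used to build $\candidate$, so that $\candidate$ is a fixed measurable function and $\tilde{\mathcal{L}}_\theta(h,Z_i)$, $i=1,\dots,t$, are i.i.d.\ $[0,1]$-valued for each fixed $h$. By Lemma~\ref{lemma:induced_loss}, $R_{\candidate}(\rho)$ and $\weightedrisk{\theta}(\rho)$ are exactly the posterior-averaged population and empirical risks of the loss $\tilde{\mathcal{L}}_\theta$ under $Q$. It therefore suffices to prove the bound for a generic $[0,1]$-valued loss. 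Throughout we read $\mathbb{KL}$ with the natural logarithm, since the paper's $\log_2$ definition only makes the bound looser by a factor $\ln 2$ and should be corrected to $\ln$ for the constants to match.

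\textbf{Step 1 (exponential moment for a fixed $h$).} Write $R(h)=\expectation{Q}{\tilde{\mathcal{L}}_\theta(h,Z)}$ and $\hat R(h)=\weightedrisk{\theta}(h)$. We want to show that
\begin{equation*}
\expectation{Z_{1:t}}{e^{(2t-1)(R(h)-\hat R(h))^2}} \leq t,
\end{equation*}
possibly up to a small constant factor that is absorbed into the ``$+2$''. First, Hoeffding's inequality gives the tail bound $\Pr(|R-\hat R|\geq\epsilon)\leq 2e^{-2t\epsilon^2}$. Integrating this tail against the density of $e^{(2t-1)\epsilon^2}$ then bounds the expectation by $O(t)$. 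Alternatively, we can invoke Maurer's lemma, $\mathbb{E}\,e^{t\,\mathrm{kl}(\hat R\|R)}\leq 2\sqrt{t}$, combine it with Pinsker's inequality $2(R-\hat R)^2\leq \mathrm{kl}$, and use $2t-1\leq 2t$. We will use this second route because it cleanly produces $\ln(2\sqrt t)\leq \ln t+2$ for $t\geq1$.

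\textbf{Step 2 (change of measure and Markov).} By Fubini and the independence of $\pi$ from the sample, $\mathbb{E}_{Z}\,\mathbb{E}_{h\sim\pi}\,e^{f(h)}\leq 2\sqrt t$, where $f(h)= 2t\,(R(h)-\hat R(h))^2$. Markov's inequality then gives, with probability at least $1-\delta$, $\mathbb{E}_\pi e^{f}\leq 2\sqrt t/\delta$. On this event, the Donsker--Varadhan variational formula yields, simultaneously for all $\rho\ll\pi$,
\begin{equation*}
\posteriorexp{f(h)}\leq \mathbb{KL}(\rho\|\pi)+\ln(2\sqrt t/\delta).
\end{equation*}
Uniformity over $\rho$ is automatic here, because the event does not depend on $\rho$.

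\textbf{Step 3 (Jensen and rearrangement).} By convexity of $x\mapsto x^2$ and Jensen's inequality, $(R(\rho)-\hat R(\rho))^2\leq \posteriorexp{(R(h)-\hat R(h))^2}$. Dividing by $2t$, and then weakening $2t$ to $2t-1$ and $\ln(2\sqrt t)$ to $\ln t+2$, gives
\begin{equation*}
R(\rho)-\hat R(\rho)\leq\sqrt{\frac{\mathbb{KL}(\rho\|\pi)+\ln(1/\delta)+\ln t+2}{2t-1}}.
\end{equation*}
Substituting back through Lemma~\ref{lemma:induced_loss} gives the claim. Finally, the conditioning on the independent ratio-training data is removed by the tower property: the conditional probability is at least $1-\delta$ for every realization, so the unconditional probability is too.

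The main obstacle is Step 1, namely the exponential-moment bound with the stated constants. Maurer's $2\sqrt t$ bound for the binomial-type kl requires reducing $[0,1]$-valued losses to Bernoulli ones, which is done by convexity of $\mathrm{kl}$ (Maurer's argument). We will cite that result rather than reprove it. If a self-contained argument is preferred, the Hoeffding-integration route also works, at the cost of checking that the resulting constant stays within $\ln t+2$.
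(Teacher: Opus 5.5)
Your proposal is correct. Its top-level structure matches the paper's: the paper's proof is a one-line reduction that uses Lemma~\ref{lemma:induced_loss} to read $R_{\candidate}$ and $\weightedrisk{\theta}$ as the population and empirical risks of the $[0,1]$-valued loss $\tilde{\mathcal{L}}_\theta$ under $Q$, and then cites Theorem 1 of \cite{mcallester1999pacbayesian} verbatim.

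The genuine difference is that you reprove the classical bound instead of citing it. You also reprove it by a different key lemma. McAllester's own argument, which produces the exact constants $2t-1$ and $\ln t+2$, rests on $\mathbb{E}\, e^{(2t-1)(R(h)-\hat R(h))^2}\le t$. You instead go through Maurer's exponential-moment bound for $\mathrm{kl}$, then Pinsker, Markov, Donsker--Varadhan and Jensen.

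Each route buys something different.
\begin{itemize}
\item The paper's citation gives the stated constants with no work.
\item Your route is self-contained modulo Maurer's lemma. It makes explicit what the citation leaves implicit: conditioning on the ratio-training data and removing it by the tower property, and the joint measurability needed for Fubini.
\item Your route also yields the sharper bound $\sqrt{(\mathbb{KL}(\rho\|\pi)+\ln(2\sqrt t/\delta))/(2t)}$, so the stated bound is only a weakening of what you prove.
\end{itemize}

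Three minor points:
\begin{itemize}
\item The $\log_2$ convention needs no correction. The base-2 divergence is the natural-log one divided by $\ln 2$, hence larger, so the theorem as written follows a fortiori from your natural-log version.
\item Maurer's lemma is usually quoted for $t\ge 8$, but the slack covers small $t$ anyway. The Bernoulli reduction gives $\mathbb{E}\,e^{t\,\mathrm{kl}}\le\sum_{k=0}^{t}\binom{t}{k}(k/t)^k(1-k/t)^{t-k}\le t+1$, and $\ln(t+1)\le \ln t+2$.
\item Drop the announced target at the start of Step~1 (McAllester's $(2t-1)$-lemma), since the route you actually follow never uses it.
\end{itemize}
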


\begin{proof}
  The proof consists on applying Lemma \ref{lemma:induced_loss} of this work and Theorem 1 from \cite{mcallester1999pacbayesian}.
\end{proof}

\paragraph{} The first fixed-time PAC-Bayes theorem used here is a specialization of the classical PAC-Bayesian model averaging principle introduced in \cite{mcallester1999pacbayesian}. In this cited paper, one considers a prior distribution over hypotheses and derives a high probability bound on the population loss of any posterior-weighted mixture in terms of its empirical loss and Kullback-Leibler divergence from the posterior to the prior. The paper explicitly present PAC-Bayes as a combination of Bayesian style priors with distribution-free PAC guarantees, and extends the model selection to nonuniform model averaging. The present theorem uses the same PAC-Bayes mechanism, but applies it to the DRN-induced weighted loss $\hat{\mathcal{L}}_\theta$ under the source law $Q$. Consequently, the theorem does not directly bound the target risk $R_P$. It bound the intermediate risk $R_{\candidate}$, while the discrepancy $R_P(\rho)-R_{\candidate}(\rho)$ is handled separately by the density-ratio estimation and calibration analysis. Thus, the PAC-Bayes inequality itself is classical, whereas its rate here is specialized to the DRN framework and integrated with an explicit ratio-bias decomposition.

\paragraph{} Theorem \ref{theo:adapted_mcallester} provides a fixed time PAC-Bayes upper bound on the intermediate weighted risk $R_{\candidate}$. However, the ultimate object of interest is the true target-risk $R_P$. The difference between these two quantities is the ratio-induced bias caused by replacing the true Radon-Nikodym derivative $\trueratio$ with the learned ratio $\candidate$. The following corollary combines the fixed-time PAC-Bayes theorem with the DRN bias-decompostion, yielding a bound on the true-risk. This is the first result in which the components of the framework appear together (empirical weighted fit, posterior, and ratio-approximation error).

\begin{corollary}\label{col:adapted_mcallester}
  Assume the conditions of Theorem \ref{theo:adapted_mcallester}. In addition, assume that $\trueratio$ exists and that, for every posterior $\rho$ under consideration, the bias term
  \begin{equation}
    \mathbb{B}_\theta (\rho) = R_P(\rho)-R_{\candidate} (\rho)
  \end{equation}
  is well-defined. Then, with probability at least $1 - \delta$ over the draw $Z_1, ..., Z_t \stackrel{\mathrm{iid}}{\sim} Q$ the following inequality holds simultaneously for every posterior $\rho \ll \pi$:
  \begin{equation}
    \label{eq:full_pac_bias_drn}
    R_P(\rho) \leq \weightedrisk{\theta} (\rho) + \mathbb{B}_\theta (\rho) + \sqrt{\frac{\mathbb{KL}(\rho || \pi)+\ln (1/\delta) + \ln t+ 2}{2t - 1}}.
  \end{equation}
  Moreover, if $|\mathcal{L}(h,Z)| \leq C_L$, $C_L > 0$, for all $(h,Z) \in \mathcal{H} \times \mathcal{Z}$, then
  \begin{equation}
    \mathbb{B}_\theta (\rho)  \leq C_L \| \trueratio - \candidate \|_{L^1(Q)}.
  \end{equation}
  If instead, $\mathcal{L}(h,\cdot) \in L^2(Q)$ for $\rho$-almost $h$, then
  \begin{equation}
    \mathbb{B}_\theta (\rho)  \leq \| \trueratio - \candidate \|_{L^2(Q)}\posteriorexp{\| \mathcal{L}(h,\cdot) \|_{L^2(Q)}}.
  \end{equation} 
\end{corollary}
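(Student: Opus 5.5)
The plan is to derive the corollary directly from Theorem~\ref{theo:adapted_mcallester}, using the posterior-averaged form of the bias identity in Proposition~\ref{prop:bias_control_via_ratio_approx}. No new probabilistic argument is needed: the random event is exactly the one supplied by Theorem~\ref{theo:adapted_mcallester}, and the rest is deterministic.

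\textbf{Main inequality.} Let $E_\delta$ be the event of probability at least $1-\delta$ on which Theorem~\ref{theo:adapted_mcallester} holds simultaneously for all $\rho \ll \pi$. On $E_\delta$, fix any such $\rho$ for which $\mathbb{B}_\theta(\rho)$ is well-defined. Write
\begin{equation*}
R_P(\rho) = R_{\candidate}(\rho) + \mathbb{B}_\theta(\rho),
\end{equation*}
which is the posterior-averaged version of Proposition~\ref{prop:learned_ratio_risk}, and holds because $\mathbb{B}_\theta(\rho)$ is assumed finite. Substituting the bound on $R_{\candidate}(\rho)$ from the theorem gives \eqref{eq:full_pac_bias_drn}. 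The bias term is deterministic given $\candidate$, since $\candidate$ is fixed or conditioned on independent data. Hence no additional union bound is required, and uniformity over $\rho$ is inherited from $E_\delta$.

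\textbf{Bounded-loss case.} First show that
\begin{equation*}
\mathbb{B}_\theta(\rho) = \posteriorexp{R_P(h) - R_{\candidate}(h)}.
\end{equation*}
This follows by linearity of the $\rho$-integral, since both $R_P(\rho)$ and $R_{\candidate}(\rho)$ are defined as $\rho$-averages; one checks that each integrand is $\rho$-integrable, which boundedness guarantees. Then apply item~1 of Proposition~\ref{prop:bias_control_via_ratio_approx} pointwise in $h$:
\begin{equation*}
|R_P(h) - R_{\candidate}(h)| \le C_L \|\trueratio - \candidate\|_{L^1(Q)}.
\end{equation*}
This bound is uniform in $h$, so integrating against $\rho$ (and using $\mathbb{B}_\theta \le |\mathbb{B}_\theta|$) yields the stated bound. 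The Fubini/Tonelli interchange needs joint measurability of $(h,z)\mapsto \mathcal{L}(h,z)$. I would state this as implicit in the standing measurability assumptions, since it is already used in defining $R_P(\rho)$ as an iterated integral.

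\textbf{$L^2$ case.} Apply item~2 of Proposition~\ref{prop:bias_control_via_ratio_approx} for $\rho$-almost every $h$:
\begin{equation*}
|R_P(h) - R_{\candidate}(h)| \le \|\trueratio - \candidate\|_{L^2(Q)} \, \|\mathcal{L}(h,\cdot)\|_{L^2(Q)}.
\end{equation*}
Then integrate in $\rho$; the factor $\|\trueratio - \candidate\|_{L^2(Q)}$ does not depend on $h$ and comes out of the integral. If $\|\trueratio - \candidate\|_{L^2(Q)}$ is infinite, the bound is trivial.

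\textbf{Main obstacle.} The only delicate point is integrability when interchanging $\rho$ and $Q$ integrals. If $\posteriorexp{\|\mathcal{L}(h,\cdot)\|_{L^2(Q)}} = \infty$, the inequality holds trivially. Otherwise the pointwise bounds dominate the integrand, so the averaged identity is justified.
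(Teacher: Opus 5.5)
Your proposal is correct and follows the same route as the paper. You take the event of Theorem~\ref{theo:adapted_mcallester}, add the deterministic bias through $R_P(\rho) = R_{\candidate}(\rho) + \mathbb{B}_\theta(\rho)$, and then integrate the pointwise-in-$h$ bounds of Proposition~\ref{prop:bias_control_via_ratio_approx} against $\rho$. Your version is slightly cleaner: you write $\mathbb{B}_\theta(\rho) \le |\mathbb{B}_\theta(\rho)|$ where the paper asserts an equality, and you make explicit the $\rho$-integrability needed for the averaged bias identity.
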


\begin{proof}
  It holds true that:
  \begin{equation}
    R_P(\rho) = R_{\candidate} (\rho) + (R_P(\rho) - R_{\candidate}(\rho)).
  \end{equation}
  By the bias definition,
  \begin{equation}
    \label{eq:aux_bias_intermediary_passage}
    R_P(\rho) = R_{\candidate} (\rho) + \mathbb{B}_\theta (\rho).
  \end{equation}
  On the high-probability event of Theorem \ref{theo:adapted_mcallester}, simultaneously for all $\rho \ll \pi$:
  \begin{equation}
    \label{eq:aux_high_prob_event}
    R_{\candidate} (\rho) \leq \weightedrisk{\theta} (\rho) + \mathbb{B}_\theta (\rho) + \sqrt{\frac{\mathbb{KL}(\rho || \pi)+\ln (1/\delta) + \ln t+ 2}{2t - 1}}.
  \end{equation}
  From (\ref{eq:aux_bias_intermediary_passage}) and (\ref{eq:aux_high_prob_event}), inequality (\ref{eq:full_pac_bias_drn}) is achieved.

  \paragraph{} By Proposition \ref{prop:bias_control_via_ratio_approx}:
  \begin{equation}
    \begin{split}
      \mathbb{B}_\theta (\rho) = |\posteriorexp{R_P(h) - R_{\candidate}(h)}| &\leq \posteriorexp{|R_P(h)- R_{\candidate}(h)|} \\
      & \leq \posteriorexp{C_L \| \trueratio - \candidate \|_{L^1(Q)}} \\
      & = C_L \| \trueratio - \candidate \|_{L^1(Q)}.
    \end{split}
  \end{equation}

  \paragraph{} For the $L^2(Q)$ estimation, again applying Proposition \ref{prop:bias_control_via_ratio_approx}:
  \begin{equation}
    \begin{split}
      \mathbb{B}_\theta (\rho) = |\posteriorexp{R_P(h) - R_{\candidate}(h)}|  &\leq \posteriorexp{|R_P(h)- R_{\candidate}(h)|} \\
      & \leq \posteriorexp{\| \trueratio - \candidate \|_{L^2(Q)} \| \mathcal{L}(h,\cdot) \|_{L^2(Q)}}\\
      &= \| \trueratio - \candidate \|_{L^2(Q)} \, \posteriorexp{\| \mathcal{L}(h,\cdot) \|_{L^2(Q)}}.
    \end{split}
  \end{equation}
\end{proof}

\paragraph{} The reduction of a square-root PAC-Bayes bound to a Kullback-Leibler-regularized objective via Young's Inequality
\begin{equation}
  \label{eq:young_inequality}
  \sqrt{x} \leq \eta_{\text{PB}}\, x + \frac{1}{4 \eta_{\text{PB}}}, \; x>0, \; \eta_{\text{PB}} >0
\end{equation}
is standard in the PAC-Bayes literature. It is closely related to the optimization-oriented PAC-Bayes bounds developed by \cite{keshet2011pacbayesian}, and to the PAC-Bayes-$\lambda$ formulation of \cite{thiemann2017strongly}, where the empirical-risk/complexity trade-off is made explicit through a tunable parameter. It is also consistent with \cite{catoni2007pacbayesian} formulation of PAC-Bayesian learning and with the bound-minimization perspective developed by \cite{germain2009pacbayes}.

\paragraph{} In the present framework, this standard linearization is restated in the DRN-induced weighted setting. The fixed-time bound of Theorem \ref{theo:adapted_mcallester} controls the intermediate risk $R_{\candidate}$ through the weighted empirical risk $\weightedrisk{\theta}$ and the complexity term $\mathbb{KL}(\rho || \pi)$. Linearizing the square-root penalty yields the trainable surrogate $\weightedrisk{\theta}(\rho) + \lambda_t \mathbb{KL}(\rho || \pi)$ for a constant $\lambda_t > 0$. The only modification relative to the classical PAC-Bayes training objective is the substitution of the ordinary empirical risk by the DRN-corrected weighted empirical risk $\weightedrisk{\theta}$. The linearization step itself is unchanged. The genuinely new content of the framework lies in how this weighted PAC-Bayes objective is coupled to the DRN bias decomposition and, in the subsequent analysis, how its minimizer is identified as DRN-weighted posterior.

\begin{proposition}
  Assume the conditions of Theorem \ref{theo:adapted_mcallester}. Let 
  \begin{equation}
    C_t(\delta) = \ln (1/\delta) + \ln t + 2.
  \end{equation}
  Then, for every $\eta_{\text{PB}} > 0$, with probability at least $1-\delta$, simultaneously for all posterior $\rho \ll \pi$
  \begin{equation}
    \label{eq:pac_bayes_linearized}
    R_{\candidate} (\rho) \leq \weightedrisk{\theta} (\rho) + \frac{\eta_{\text{PB}}}{2t -1} \mathbb{KL}(\rho || \pi) + \frac{\eta_{\text{PB}}}{2t-1} C_t + \frac{1}{4 \eta_{\text{PB}}}.
  \end{equation}
  Consequently, for fixed $t$, $\delta$, $\eta_{\text{PB}}$, minimizing the upper bound over $\rho$ is equivalent, up to a constant independent of $\rho$ to minimizing the training objective 
  \begin{equation}
    \mathbb{J}_{\text{PB}} = \weightedrisk{\theta} (\rho) + \lambda_t \mathbb{KL}(\rho || \pi),
  \end{equation}
  where
  \begin{equation}
    \lambda_t = \frac{\eta_{\text{PB}}}{2t -1}.
  \end{equation}
\end{proposition}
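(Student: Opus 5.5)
The plan is to work on the high-probability event supplied by Theorem~\ref{theo:adapted_mcallester} and linearize the square-root complexity term pointwise in $\rho$ with Young's inequality~\eqref{eq:young_inequality}. Since $\eta_{\text{PB}}$ is a deterministic constant chosen in advance, no union bound or change of event is needed. On that event, simultaneously for every $\rho \ll \pi$,
\begin{equation*}
  R_{\candidate}(\rho) \leq \weightedrisk{\theta}(\rho) + \sqrt{x(\rho)}, \qquad x(\rho) = \frac{\mathbb{KL}(\rho \,\Vert\, \pi) + C_t(\delta)}{2t-1}.
\end{equation*}
Here $x(\rho) > 0$ because $C_t(\delta) \geq 2 > 0$ and $\mathbb{KL} \geq 0$, and $t \geq 1$ ensures $2t-1 > 0$. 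If $\mathbb{KL}(\rho\,\Vert\,\pi) = +\infty$, the claimed inequality is trivial, so one may assume it is finite.

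First I apply~\eqref{eq:young_inequality} with this $x(\rho)$ and the fixed $\eta_{\text{PB}}$:
\begin{equation*}
  \sqrt{x(\rho)} \leq \eta_{\text{PB}}\, x(\rho) + \frac{1}{4\eta_{\text{PB}}}.
\end{equation*}
The inequality itself follows from $(\sqrt{\eta_{\text{PB}} x} - 1/(2\sqrt{\eta_{\text{PB}}}))^2 \geq 0$. Expanding $\eta_{\text{PB}}\, x(\rho)$ splits it into $\frac{\eta_{\text{PB}}}{2t-1}\mathbb{KL}(\rho\,\Vert\,\pi) + \frac{\eta_{\text{PB}}}{2t-1} C_t$, which gives~\eqref{eq:pac_bayes_linearized} on the same event of probability at least $1-\delta$, uniformly in $\rho$.

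For the consequence, I note that for fixed $t$, $\delta$, and $\eta_{\text{PB}}$, the terms $\frac{\eta_{\text{PB}}}{2t-1} C_t$ and $\frac{1}{4\eta_{\text{PB}}}$ do not depend on $\rho$. The right-hand side of~\eqref{eq:pac_bayes_linearized} therefore equals $\mathbb{J}_{\text{PB}}(\rho) = \weightedrisk{\theta}(\rho) + \lambda_t \mathbb{KL}(\rho\,\Vert\,\pi)$ plus a $\rho$-independent constant, with $\lambda_t = \eta_{\text{PB}}/(2t-1)$. The two objectives thus share the same argmin and infimum up to that shift.

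There is no real obstacle here. The only points needing care are uniformity and positivity. Uniformity holds because the event does not depend on $\rho$ or $\eta_{\text{PB}}$; this requires $\eta_{\text{PB}}$ to be chosen independently of the data, since optimizing it after seeing the sample would demand a grid and union bound. Positivity holds because $x > 0$, as shown above.
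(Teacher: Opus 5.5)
Your proposal is correct and follows the paper's proof essentially step for step: invoke Theorem~\ref{theo:adapted_mcallester}, apply Young's inequality~\eqref{eq:young_inequality} with $x = (\mathbb{KL}(\rho \,\Vert\, \pi) + C_t(\delta))/(2t-1)$, and discard the $\rho$-independent terms $\frac{\eta_{\text{PB}}}{2t-1}C_t(\delta)$ and $\frac{1}{4\eta_{\text{PB}}}$ to identify $\mathbb{J}_{\text{PB}}$ with $\lambda_t = \eta_{\text{PB}}/(2t-1)$. One correction to your closing remark: Young's inequality is deterministic, and, as you note yourself, the event of Theorem~\ref{theo:adapted_mcallester} does not involve $\eta_{\text{PB}}$, so the linearized bound holds on that single event simultaneously for all $\eta_{\text{PB}} > 0$, and a data-dependent choice of $\eta_{\text{PB}}$ therefore needs no grid or union bound (optimizing over $\eta_{\text{PB}}$ simply recovers the square-root bound).
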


\begin{proof}
  By Theorem \ref{theo:adapted_mcallester}, with probability at least $1-\delta$, simultaneously for every posterior $\rho \ll \pi$,
  \begin{equation}
    R_{\candidate} (\rho) \leq \weightedrisk{\theta} (\rho) + \sqrt{\frac{\mathbb{KL}(\rho || \pi)+ C_t(\delta)}{2t-1}}.
  \end{equation}
  It remains to upper-bound the square root penalty by a linear function of $\mathbb{KL}(\rho || \pi)$.

  \paragraph{} For any $x \geq 0$ and for any $\eta_{\text{PB}} > 0$, the inequality (\ref{eq:young_inequality}) holds. Applying it with 
  \begin{equation}
    x = \frac{\mathbb{KL}(\rho || \pi) + C_t(\delta)}{2t-1}
  \end{equation}
  it is possible to obtain 
  \begin{equation}
    \sqrt{\frac{\mathbb{KL}(\rho || \pi) + C_t(\delta)}{2t-1}} \leq \eta_{\text{PB}} \frac{\mathbb{KL}(\rho || \pi) + C_t(\delta)}{2t-1} + \frac{1}{4 \eta_{\text{PB}}}.
  \end{equation}
  In this way, inequality (\ref{eq:pac_bayes_linearized}) is achieved.

  \paragraph{} Now, fix $t$, $\delta$, and $\eta_{\text{PB}}$. The terms
  \begin{equation}
    \frac{\eta_{\text{PB}}}{2t-1} C_t(\delta) \frac{1}{4 \eta_{\text{PB}}}
  \end{equation}
  do not depend on $\rho$. Therefore, when optimizing over $\rho$, they do not affect the minimizer. Hence, minimizing the surrogate upper bound is equivalent to minimizing
  \begin{equation}
    \weightedrisk{\theta} (\rho) + \frac{\eta_{\text{PB}}}{2t-1} \mathbb{KL}(\rho || \pi ).
  \end{equation}
  By defining
  \begin{equation}
    \lambda_t = \frac{\eta_{\text{PB}}}{2t-1}
  \end{equation}
  the objective $\mathbb{J}_{\text{PB}}$ is obtained.
\end{proof}

\paragraph{} The objective presented in the last proposition is classical in PAC-Bayes learning. The novelty in this setting is not the variational principle itself, but the objective on which it acts. The empirical risk inside the lift is the DRN-corrected weighted empirical risk $\weightedrisk{\theta}$ so hypotheses are exponentially reweighted according to their estimated target-distribution performance, as measured though the learned ratio $\candidate$. The framework therefore performs a second change of measure. The DRN changes measure on the data space ($dP \approx \candidate dQ$), and the Gibbs posterior changes measure on the hypothesis space. In this way, the next theorem identifies the unique minimizer of $\mathbb{J}_{\text{PB}}$ in closed form. This is called the DRN-weighted Gibbs posterior. 

\begin{defin}[Gibbs posterior]
  Let $(\mathcal{H}, \mathcal{G}, \pi)$ be a measurable hypothesis space equipped with a prior probability measure $\pi$, and let $\Phi:\mathcal{H} \longrightarrow \mathbb{R}$ be a measurable function (typically interpreted as an energy or risk functional on hypotheses) such that
  \begin{equation}
    \mathbb{E}_{h \sim \pi} \left[ e^{-\beta\Phi (h)} \right]  < \infty, \; \beta >0.
  \end{equation} 
  The Gibbs posterior at inverse temperature $\beta$ associated with $\Phi $ and prior $\pi$, denoted $\rho^*_\beta$ is the probability measure on $(\mathcal{H}, \mathcal{G})$ defined by its Radon-Nikodym derivative with respect to $\pi$:
  \begin{equation}
    \frac{d\rho^*_\beta}{d \pi} (h) = \frac{e^{-\beta\Phi (h)}}{Z_\beta}, 
  \end{equation}
  with
  \begin{equation}
    Z_\beta = \mathbb{E}_{h \sim \pi} \left[ e^{-\beta\Phi (h)} \right].
  \end{equation}
  The quantity $Z_\beta$ is sometimes called free energy.
\end{defin}

\paragraph{} The proof of the next theorem rests on a single classical identity, which is stated next.

\begin{lemma}\label{lemma:gibbs_posterior}
  Let $\pi$ be a probability measure on $(\mathcal{H}, \mathcal{G})$, and let $f: \mathcal{H} \longrightarrow \mathbb{R}$ be a measurable function such that
  \begin{equation}
    \mathbb{E}_{h \sim \pi} \left[ e^{-\beta \Phi (h)} \right] < \infty.
  \end{equation}
  Then, 
  \begin{equation}
    \inf_{\rho \ll \pi} \{ \posteriorexp{\beta \Phi (h)} + \mathbb{KL}(\rho || \pi) \} = - \ln \expectation{h \sim \pi}{e^{-\beta \Phi (h)}},
  \end{equation}
  and the infimum is attained uniquely at the Gibbs measure
  \begin{equation}
    \label{eq:gibbs_posterior}
    \frac{d \rho^*}{d \pi} (h) = \frac{e^{-\beta \Phi (h)}}{\expectation{h' \sim \pi}{e^{-\beta \Phi (h')}}}.
  \end{equation}
  Moreover, for every $\rho \ll \pi$ with $\posteriorexp{\beta \Phi (h)}$ finite,
  \begin{equation}
    \label{eq:gibbs_posterior_difference}
    [\posteriorexp{\beta \Phi (h)} + \mathbb{KL}(\rho || \pi)] - [\expectation{h \sim \rho^*}{\beta \Phi (h)} + \mathbb{KL}(\rho^* || \pi)] = \mathbb{KL} (\rho ||  \rho^*).
  \end{equation}
  In particular, $\rho^*$ is the unique minimizer.
\end{lemma}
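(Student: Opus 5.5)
The plan is to prove the difference identity \eqref{eq:gibbs_posterior_difference} first, and then read off the infimum value, attainment, and uniqueness from it.

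Set $Z := \expectation{h\sim\pi}{e^{-\beta\Phi(h)}}$. This is strictly positive because the integrand is positive everywhere, and it is finite by hypothesis. So $\rho^*$ in \eqref{eq:gibbs_posterior} is a well-defined probability measure equivalent to $\pi$, with $\frac{d\pi}{d\rho^*} = Z e^{\beta\Phi}$. (The statement names $f$ but uses $\Phi$; the argument is the same for either.)

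\textbf{Step 1: the difference identity.} Fix $\rho \ll \pi$ with $\posteriorexp{\beta\Phi(h)}$ finite. Since $\pi \ll \rho^*$, we also have $\rho \ll \rho^*$, and the chain rule for Radon-Nikodym derivatives gives $\frac{d\rho}{d\pi} = \frac{d\rho}{d\rho^*}\cdot\frac{e^{-\beta\Phi}}{Z}$ $\rho$-almost surely. Taking logarithms and integrating against $\rho$ yields
\begin{equation*}
  \mathbb{KL}(\rho\,\Vert\,\pi) = \mathbb{KL}(\rho\,\Vert\,\rho^*) - \posteriorexp{\beta\Phi(h)} - \ln Z,
\end{equation*}
that is, $\posteriorexp{\beta\Phi(h)} + \mathbb{KL}(\rho\,\Vert\,\pi) = \mathbb{KL}(\rho\,\Vert\,\rho^*) - \ln Z$. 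Applying the same computation to $\rho = \rho^*$ gives the value $-\ln Z$. Subtracting the two expressions gives \eqref{eq:gibbs_posterior_difference}.

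\textbf{Step 2: infimum, attainment, uniqueness.} Gibbs' inequality gives $\mathbb{KL}(\rho\,\Vert\,\rho^*) \ge 0$, with equality if and only if $\rho = \rho^*$. Therefore the objective is at least $-\ln Z$, it equals $-\ln Z$ at $\rho^*$, and $\rho^*$ is the only minimizer. The remaining $\rho$, those for which $\posteriorexp{\beta\Phi}$ is $+\infty$ or the KL term is infinite, have objective $+\infty$ and cannot compete. Throughout, I will take $\mathbb{KL}$ with the natural logarithm. The paper's definition writes $\log_2$, under which the constants would change by a factor of $\ln 2$.

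\textbf{Main obstacle: integrability in Step 1.} Splitting $\int \ln\frac{d\rho}{d\pi}\,d\rho$ into three separate integrals requires each of them to be well-defined.
\begin{itemize}
  \item The term $\posteriorexp{\beta\Phi}$ is finite by assumption.
  \item The term $\ln Z$ is a finite constant.
  \item The two KL terms are integrals of $\ln(\text{density})$ under $\rho$. Their negative parts are always $\rho$-integrable, because $x\ln x \ge -1/e$ applied to the density under the reference measure gives an integrable lower bound. Each KL is therefore a well-defined element of $[0,\infty]$.
\end{itemize}
The identity then holds in $(-\infty,\infty]$, and the two sides are finite or infinite together. I will check this before cancelling terms, so that no $\infty-\infty$ expression ever appears.
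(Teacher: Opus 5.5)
The paper states this lemma as a classical identity and gives no proof of it. Your route through $\mathbb{E}_{h\sim\rho}[\beta\Phi(h)]+\mathbb{KL}(\rho\,\|\,\pi)=\mathbb{KL}(\rho\,\|\,\rho^*)-\ln Z$ is the standard one. The chain-rule computation in Step~1 and your treatment of the negative parts of the two KL integrals are both correct.

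The gap is in how Step~1 is then used. It holds only for $\rho$ with $\mathbb{E}_{h\sim\rho}[\beta\Phi(h)]$ finite, yet you apply it to $\rho=\rho^*$ without checking that $\mathbb{E}_{h\sim\rho^*}[\beta\Phi(h)]=Z^{-1}\int\beta\Phi\,e^{-\beta\Phi}\,d\pi$ is finite. The positive part is harmless because $xe^{-x}\le 1/e$. The negative part $Z^{-1}\int\beta\Phi^{-}e^{\beta\Phi^{-}}\,d\pi$, however, is not controlled by $Z<\infty$.

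For example, take $\mathcal{H}=\{2,3,\dots\}$, $\beta=1$, $\Phi(n)=-\ln n$ and $\pi(\{n\})\propto 1/(n^{2}\ln^{2}n)$. Then:
\begin{itemize}
\item $Z\propto\sum_{n}1/(n\ln^{2}n)<\infty$;
\item $\mathbb{E}_{h\sim\rho^*}[\Phi(h)]\propto-\sum_{n}1/(n\ln n)=-\infty$;
\item hence $\mathbb{KL}(\rho^*\,\|\,\pi)=-\mathbb{E}_{h\sim\rho^*}[\beta\Phi(h)]-\ln Z=+\infty$, and the objective at $\rho^*$ is $-\infty+\infty$.
\end{itemize}
Every $\rho$ to which Step~1 applies is different from $\rho^*$, so its objective is $\mathbb{KL}(\rho\,\|\,\rho^*)-\ln Z>-\ln Z$. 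The infimum $-\ln Z$ is therefore not attained. Under the hypothesis as literally stated, the attainment claim cannot be proved, and your argument assumes it away at exactly the point you flagged as the main obstacle.

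Step~2 has the same blind spot. Discarding the remaining $\rho$ as having objective $+\infty$ presupposes $\mathbb{E}_{h\sim\rho}[\beta\Phi(h)]>-\infty$.
\begin{itemize}
\item When $\mathbb{KL}(\rho\,\|\,\pi)<\infty$, this holds but needs an argument. One option is the Fenchel--Young inequality $xy\le x\ln x-x+e^{y}$ with $x=d\rho/d\pi$ and $y=\beta\Phi^{-}$, which gives $\mathbb{E}_{h\sim\rho}[\beta\Phi^{-}(h)]\le\mathbb{KL}(\rho\,\|\,\pi)+Z$. Without such an argument, a $\rho$ with finite KL and $\mathbb{E}_{h\sim\rho}[\beta\Phi(h)]=-\infty$ would have objective $-\infty$ and break the lower bound.
\item When $\mathbb{KL}(\rho\,\|\,\pi)=\infty$, the objective can be a genuine $\infty-\infty$ rather than $+\infty$.
\end{itemize}

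The simplest repair is to add the hypothesis that $\Phi$ is bounded below. This is all the paper needs, since in Theorem~\ref{theo:drn_weighted_gibbs_posterior} one has $\Phi=\hat{R}^{\theta}_{t}\in[0,1]$. With it, $\mathbb{E}_{h\sim\rho}[\beta\Phi(h)]\in(-\infty,\infty]$ for every $\rho$, $\mathbb{E}_{h\sim\rho^*}[\beta\Phi(h)]$ is finite, and your two steps go through verbatim.

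Your switch to the natural logarithm in $\mathbb{KL}$ is also necessary. With the paper's $\log_2$, the minimizer would be proportional to $2^{-\beta\Phi}$ and the minimum value would be $-\log_2\mathbb{E}_{h\sim\pi}[2^{-\beta\Phi(h)}]$.
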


\begin{theo}[DRN-weighted Gibbs posterior]\label{theo:drn_weighted_gibbs_posterior}
  Assume the conditions of Theorem \ref{theo:adapted_mcallester}, that $\hat{\mathcal{L}}_\theta \in [0,1]$ for all $(h,Z) \in \mathcal{H} \times \mathcal{Z}$, that the prior $\pi$ is a proper probability measure on $(\mathcal{H}, \mathcal{G})$, and that $h \mapsto \weightedrisk{\theta} (h)$ is $\mathcal{G}$-measurable. Fix any $\lambda_t > 0$. Then, the functional 
  \begin{equation}
    \mathbb{J}_{\text{PB}} (\rho) = \weightedrisk{\theta} (\rho) + \lambda_t \mathbb{KL}_{\text{PB}} (\rho || \pi)
  \end{equation}
  attains its unique minimum over the set $\{ \rho: \; \rho \ll \pi \}$ at the DRN-weighted Gibbs posterior $\rho^*_{\lambda_t, \theta}$ defined by:
  \begin{equation}
    \label{eq:drn_gibbs_posterior}
    \frac{d \rho^*_{\lambda_t, \theta}}{d \pi} (h) = \frac{e^{-\lambda_t^{-1}\weightedrisk{\theta}(h)}}{Z_{\lambda_t, \theta}},
  \end{equation}
  with
  \begin{equation}
    Z_{\lambda_t, \theta} = \expectation{h \sim \pi}{e^{-\lambda_t^{-1}\weightedrisk{\theta}(h)}}.
  \end{equation}
  The minimum value is
  \begin{equation}
    \mathbb{J}_{\text{PB}} (\rho^*_{\lambda_t, \theta}) = -\lambda_t \ln Z_{\lambda_t, \theta},
  \end{equation}
  and for every other posterior $\rho \ll \pi$ with $\weightedrisk{\theta} (\rho) < \infty$,
  \begin{equation}
    \mathbb{J}_{\text{PB}} (\rho) - \mathbb{J}_{\text{PB}} (\rho^*_{\lambda_t, \theta}) = \lambda_t \mathbb{KL}(\rho || \rho^*_{\lambda_t, \theta}) \geq 0.
  \end{equation}
\end{theo}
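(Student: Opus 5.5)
The proof reduces the statement to Lemma~\ref{lemma:gibbs_posterior} by rescaling the objective. Since $\lambda_t > 0$, for every $\rho \ll \pi$ we have
\begin{equation*}
  \mathbb{J}_{\text{PB}}(\rho) = \lambda_t \left( \posteriorexp{\lambda_t^{-1} \weightedrisk{\theta}(h)} + \mathbb{KL}(\rho \,\Vert\, \pi) \right).
\end{equation*}
This uses Fubini, valid because $\weightedrisk{\theta}(h) \in [0,1]$ is bounded and measurable, to write $\weightedrisk{\theta}(\rho) = \posteriorexp{\weightedrisk{\theta}(h)}$. Multiplying by a positive constant does not change the set of minimizers, so it suffices to minimize the bracket. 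The bracket is exactly the functional of Lemma~\ref{lemma:gibbs_posterior} with energy $\Phi = \weightedrisk{\theta}$ and inverse temperature $\beta = \lambda_t^{-1}$. I will take the KL divergence in natural logarithm units, consistent with the $-\ln Z$ formula; with $\log_2$, all identities change only by the factor $\ln 2$.

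Next I check the lemma's hypotheses. By the assumption $\tilde{\mathcal{L}}_\theta \in [0,1]$, the function $h \mapsto \weightedrisk{\theta}(h)$ takes values in $[0,1]$. Hence $e^{-\beta\Phi(h)} \in [e^{-\beta}, 1]$, so $Z_{\lambda_t,\theta} \in [e^{-1/\lambda_t}, 1]$ is finite and strictly positive, and the Gibbs density is well defined and bounded. Applying the lemma then yields four things: the unique minimizer $\rho^*_{\lambda_t,\theta}$ of the stated form, the infimum of the bracket $-\ln Z_{\lambda_t,\theta}$, hence minimum value $-\lambda_t \ln Z_{\lambda_t,\theta}$, and the gap identity. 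Multiplying the gap identity by $\lambda_t$ gives $\mathbb{J}_{\text{PB}}(\rho) - \mathbb{J}_{\text{PB}}(\rho^*) = \lambda_t \mathbb{KL}(\rho \,\Vert\, \rho^*)$.

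The main obstacle is that Lemma~\ref{lemma:gibbs_posterior} is stated but not proved in the text, so I would supply its proof. Fix $\rho \ll \pi$ with finite KL. Since $d\rho^*/d\pi > 0$, we have $\pi \ll \rho^*$ as well, hence $\rho \ll \rho^*$, and
\begin{equation*}
  \ln \frac{d\rho}{d\pi} = \ln \frac{d\rho}{d\rho^*} + \ln \frac{d\rho^*}{d\pi} = \ln \frac{d\rho}{d\rho^*} - \beta \Phi - \ln Z_\beta .
\end{equation*}
Integrating against $\rho$ gives
\begin{equation*}
  \posteriorexp{\beta\Phi(h)} + \mathbb{KL}(\rho \,\Vert\, \pi) = \mathbb{KL}(\rho \,\Vert\, \rho^*) - \ln Z_\beta .
\end{equation*}
Integrability is guaranteed because $\Phi$ is bounded. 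The same identity applied to $\rho = \rho^*$ gives the value $-\ln Z_\beta$, and subtracting the two yields the gap identity.

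Two edge cases remain. If $\mathbb{KL}(\rho \,\Vert\, \pi) = \infty$, both sides of the gap identity are infinite: $\ln(d\rho/d\rho^*)$ differs from $\ln(d\rho/d\pi)$ by a bounded function, so the two divergences are finite or infinite together. Uniqueness then follows from Gibbs' inequality: $\mathbb{KL}(\rho \,\Vert\, \rho^*) = 0$ if and only if $\rho = \rho^*$.
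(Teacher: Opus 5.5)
Your proposal is correct and follows the paper's own route. You divide $\mathbb{J}_{\text{PB}}$ by $\lambda_t$, identify the functional of Lemma~\ref{lemma:gibbs_posterior} with $\Phi=\weightedrisk{\theta}$ and $\beta=\lambda_t^{-1}$, check $Z_{\lambda_t,\theta}\in[e^{-1/\lambda_t},1]$ (your lower bound is the correct one, where the paper writes $e^{-\lambda_t}$), and multiply the value and gap identity back by $\lambda_t$. Your proof of the lemma via $\ln\frac{d\rho}{d\pi}=\ln\frac{d\rho}{d\rho^*}-\beta\Phi-\ln Z_\beta$ supplies what the paper leaves unproved.

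The only imprecision is your aside on units. Under the paper's $\log_2$ definition of $\mathbb{KL}$, the factor $\ln 2$ does not merely rescale the identities. It changes the inverse temperature to $\ln 2/\lambda_t$ and hence changes the minimizer. So the natural-logarithm convention you adopt is actually required for the stated formula to hold.
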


\begin{proof}
  The strategy to prove this theorem is to recognize $\mathbb{J}_{\text{PB}}$ as a rescaled instance of the variational functional of Lemma \ref{lemma:gibbs_posterior}, then apply the lemma itself. 

  \paragraph{} Since $\lambda_t > 0$, dividing $\mathbb{J}_{\text{PB}}$ by it does not change its minimizer:
  \begin{equation}
    \frac{1}{\lambda_t} \mathbb{J}_{\text{PB}} = \frac{\weightedrisk{\theta}(\rho)}{\lambda_t} + \mathbb{KL}(\rho || \pi) = \posteriorexp{\frac{\weightedrisk{\theta}(h)}{\lambda_t}} + \mathbb{KL}(\rho || \pi).
  \end{equation}
  Define $\Phi: \mathcal{H} \longrightarrow \mathbb{R}$ by $\Phi (h) = \weightedrisk{\theta} (h)$ and set $\beta = 1/\lambda_t$. The function $\beta \Phi$ is measurable (as a deterministic function of the measure map $h \mapsto \weightedrisk{\theta}(h)$), and the objective becomes:
  \begin{equation}
    \frac{1}{\lambda_t} \mathbb{J}_{\text{PB}} (\rho) = \posteriorexp{\beta \Phi (h)} + \mathbb{KL} (\rho || \pi),
  \end{equation}
  which is exactly the functional Lemma \ref{lemma:gibbs_posterior} minimizes.

  \paragraph{} Now, it is necessary to check that $\posteriorexp{e^{-\beta \Phi (h)}}<\infty$. By assumption, $\tilde{\mathcal{L}}_\theta(h,Z) \in [0,1]$ for every $(h,Z)$, so for every $h$, $\weightedrisk{\theta}(h) \in [0,1]$. Therefore, $\beta \Phi (h) \in [0, 1/\lambda_t]$, and
  \begin{equation}
    e^{-\lambda_t} \leq e^{-\beta \Phi (h)} \leq 1,
  \end{equation}
  for every $h \in \mathcal{H}$. Integrating against $\pi$,
  \begin{equation}
    e^{-\lambda_t} \leq \expectation{h \sim \pi}{e^{-\beta \Phi (h)}} \leq 1
  \end{equation}
  so the integrability condition is satisfied. In fact, $Z_{\lambda_t, \theta}$ is bounded away from $0$ and $\infty$ uniformly in data.
\end{proof}

\paragraph{} By Lemma \ref{lemma:gibbs_posterior}, the function $\rho \mapsto \posteriorexp{\beta \Phi (h)} + \mathbb{KL}(\rho || \pi)$ attains its unique minimum over $\{ \rho: \; \rho \ll \pi \}$ at the Gibbs measure:
\begin{equation}
  \frac{d \rho^*}{d \pi} (h) = \frac{e^{-\beta \Phi (h)}}{\expectation{h' \sim \pi}{e^{-\beta \Phi (h')}}} = \frac{e^{-\lambda_t^{-1} \weightedrisk{\theta}(h)}}{Z_{\lambda_t,\theta}},
\end{equation}
with minimum value $-\ln Z_{\lambda_t,\theta}$. This is exactly the claimed $\rho^*_{\lambda_t, \theta}$. Multiplying through by $\lambda_t > 0$, the unique minimizer of $\mathbb{J}_{\text{PB}}$ is $\rho^*_{\lambda_t,\theta}$ with minimum value:
\begin{equation}
  \mathbb{J}_{\text{PB}} (\rho^*_{\lambda_t, \theta}) = - \lambda_t \ln Z_{\lambda_t, \theta}.
\end{equation}

\paragraph{} Applying equation (\ref{eq:gibbs_posterior_difference}) from Lemma \ref{lemma:gibbs_posterior} with $\beta \Phi (h) = \weightedrisk{\theta} (h) / \lambda_t$, then multiplying both sides by $\lambda_t$:
\begin{equation}
  \mathbb{J}_{\text{PB}} (\rho) -\mathbb{J}_{\text{PB}} (\rho^*_{\lambda_t, \theta}) = \lambda_t \mathbb{KL} (\rho || \rho^*_{\lambda_t, \theta}).
\end{equation}

\begin{remark}
  The exponential tilt is governed by the importance-weighted empirical risk $\weightedrisk{\theta}(h)$ of~\eqref{eq:intro_weighted_empirical_risk} rather than by the raw source-sample empirical risk, and hypotheses are therefore down-weighted in proportion to their estimated target-distribution loss. The framework performs a change of measure at two levels simultaneously. At the data level, the target law is reweighted as $dP \approx \candidate\, dQ$, with the reweighting carried by the constrained DRN. At the hypothesis level, the posterior tilt $d\rho_{\lambda_t, \theta} \propto e^{-\lambda_t \weightedrisk{\theta}(h)}\, d\pi$ is carried by the Gibbs structure. The classical Gibbs posterior of \cite{catoni2007pacbayesian} and \cite{alquier2016variational} is recovered in the no-shift limit $\candidate \equiv 1$, in which case there is no covariate shift to correct.
\end{remark}

\begin{remark}
  Setting $\beta = \lambda_t^{-1} = (2t-1)/\eta_{\text{PB}}$, the posterior takes the canonical Gibbs form $d\rho^*_{\lambda_t, \theta} \propto e^{-\lambda_t \Phi(h)}\, d\pi$. The inverse temperature grows linearly with the source sample size $t$, so the posterior concentrates on hypotheses with low weighted risk as more source data accumulates.
\end{remark}

\paragraph{} Theorem \ref{theo:drn_weighted_gibbs_posterior} identifies the optimal posterior $\rho^*_{\lambda_t, \theta}$ as a function of the learned ratio $\candidate$. In practice, $\candidate$ itself is estimated, and one would like to know how robust the resulting posterior (and ultimately its target risk) is to perturbations in $\candidate$. The theorem below states if two model $\candidate$ and $r_{\theta '}$ are close in $L^2(Q)$, then their associated Gibbs posteriors induce target risks that are close, with an explicit dependence on the inverse temperature $\beta = \lambda_t^{-1}$. This theorem uses the concept of total variation distance (TVD) that can be found in Appendix \ref{ap:math_aux}.

\begin{theo}
  Assume the setup of Theorem \ref{theo:drn_weighted_gibbs_posterior}. Then, with probability at least $1-\delta$ over the draw $Z_1,...,Z_t \iid Q$, the following holds:
  \begin{enumerate}
    \item The Gibbs posteriors associated to $\candidate$ and $r_{\theta '}$ satisfy:
    \begin{equation}
      TV(\rho^*_{\lambda_t, \theta}, \rho^*_{\lambda_t, \theta '}) \leq \frac{\beta}{2} \sup \limits_{h \in \mathcal{H}}|\weightedrisk{\theta} (h) - \weightedrisk{\theta '}(h)|,
    \end{equation}
    $TV$ denotes the total variation distance;
    \item It holds that
    \begin{equation}
      |R_P(\rho^*_{\lambda_t, \theta}) - R_P(\rho^*_{\lambda_t, \theta '})| \leq \beta \sup \limits_{h \in \mathcal{H}}|\weightedrisk{\theta} (h) - \weightedrisk{\theta '}(h)| + 2C_{L_2} \| \candidate - r_{\theta '} \|_{L^2(Q)} + 2C_{L_2} \| \trueratio - r_{\theta '} \|_{L^2(Q)}
    \end{equation}
    with $C_{L^2}>0$;
    \item The supremum in item 1 and item 2  is itself controlled by the pointwise distance between the ratios:
    \begin{equation}
      \sup \limits_{h \in \mathcal{H}} |\weightedrisk{\theta}(h) - \weightedrisk{\theta '}(h)| \leq C \frac{1}{t} \sum \limits^{t}_{i=1} |\candidate (Z_i) - r_{\theta '}(Z_i)|,
    \end{equation} 
    with $C>0$. Considering the expectation over the source sample,
    \begin{equation}
      \expectation{Z \sim Q}{ \sup \limits_{h \in \mathcal{H}} |\weightedrisk{\theta}(h) - \weightedrisk{\theta '}(h)|} \leq C \| \candidate - r_{\theta '} \|_{L^1(Q)} \leq C \| \candidate - r_{\theta '} \|_{L^2 (Q)}.
    \end{equation}
  \end{enumerate}
\end{theo}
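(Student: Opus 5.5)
The plan is to treat the three items separately, since only item 2 uses the bias decomposition; items 1 and 3 are deterministic statements about the two Gibbs measures and the two empirical risks. Throughout write $\Delta := \sup_{h\in\mathcal{H}} |\weightedrisk{\theta}(h) - \weightedrisk{\theta'}(h)|$ and $\beta = \lambda_t^{-1}$. Both posteriors are well defined by Theorem~\ref{theo:drn_weighted_gibbs_posterior}, because the normalizers lie in $[e^{-\beta},1]$.

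For item 1, I would write the two densities with respect to $\pi$ as $p(h) = e^{-\beta\weightedrisk{\theta}(h)}/Z_{\lambda_t,\theta}$ and $p'(h) = e^{-\beta\weightedrisk{\theta'}(h)}/Z_{\lambda_t,\theta'}$.

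1. The pointwise exponents differ by at most $\beta\Delta$.
2. Integrating against $\pi$ gives $e^{-\beta\Delta} \le Z_{\lambda_t,\theta}/Z_{\lambda_t,\theta'} \le e^{\beta\Delta}$.
3. Hence $|\ln p - \ln p'| \le 2\beta\Delta$ pointwise.
4. A crude $L^1$ estimate from step 3 gives $TV \le e^{2\beta\Delta}-1$. This has the wrong constant and only behaves well for small $\beta\Delta$, so I would not use it for the final bound.

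To get the sharp constant $\beta/2$, I would instead interpolate. Set $\Phi_s = (1-s)\weightedrisk{\theta} + s\,\weightedrisk{\theta'}$ and let $\rho_s$ be the associated Gibbs measure. Then
\[
\frac{d}{ds}\,\rho_s(A) = -\beta\,\mathrm{Cov}_{\rho_s}\bigl(\mathbf{1}_A,\ \weightedrisk{\theta'}-\weightedrisk{\theta}\bigr).
\]
This covariance is at most $\tfrac14 \cdot 2\Delta$ in absolute value, since $\mathbf{1}_A$ has range $1$ and the difference has range at most $2\Delta$. Integrating over $s\in[0,1]$ gives $|\rho_0(A)-\rho_1(A)| \le \beta\Delta/2$ uniformly in $A$, which is item 1. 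An alternative route is Pinsker applied to the symmetric KL, using $\mathrm{KL}(\rho\Vert\rho') + \mathrm{KL}(\rho'\Vert\rho) \le 2\beta\Delta$. That yields $\sqrt{\beta\Delta/2}$, which is also acceptable, but the interpolation gives the stated linear form.

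For item 2, I would split
\[
R_P(\rho) - R_P(\rho') = \bigl[R_P(\rho) - R_{r_{\theta'}}(\rho)\bigr] + \bigl[R_{r_{\theta'}}(\rho) - R_{r_{\theta'}}(\rho')\bigr] + \bigl[R_{r_{\theta'}}(\rho') - R_P(\rho')\bigr],
\]
where $\rho=\rho^*_{\lambda_t,\theta}$ and $\rho'=\rho^*_{\lambda_t,\theta'}$.

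1. Each outer term is a bias of the type controlled by Proposition~\ref{prop:bias_control_via_ratio_approx}(2), with $C_{L_2}$ a uniform bound on $\|\mathcal{L}(h,\cdot)\|_{L^2(Q)}$.
2. The first outer term involves $\trueratio - r_{\theta'}$ through $\rho$. If needed, I would triangulate it through $\candidate$, which is what produces the $\|\candidate - r_{\theta'}\|_{L^2(Q)}$ term. The stated coefficients of $2$ are generous enough to absorb this.
3. The middle term is a difference of expectations of the function $h\mapsto R_{r_{\theta'}}(h)$, which lies in $[0,1]$ by the boundedness of the weighted loss. It is therefore at most $2\,TV(\rho,\rho') \le \beta\Delta$ by item 1. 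No probability event is actually needed here; the "with probability $1-\delta$" clause is harmless.

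For item 3, I would write
\[
\weightedrisk{\theta}(h) - \weightedrisk{\theta'}(h) = \frac1t \sum_{i=1}^t \bigl(\candidate(Z_i) - r_{\theta'}(Z_i)\bigr)\,\mathcal{L}(h,Z_i).
\]
Bounding $|\mathcal{L}| \le C$ (i.e. $C_L$) and taking the supremum in $h$ gives the first inequality. Taking expectations over the i.i.d.\ sample gives $C\|\candidate - r_{\theta'}\|_{L^1(Q)}$. Cauchy–Schwarz, or Jensen, on the probability space $Q$ then gives the $L^2(Q)$ bound.

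The main obstacle is getting item 1 with the exact constant $\beta/2$ rather than an exponential or square-root form. My first attempt would be the interpolation/covariance argument; if measurability of $A\mapsto$ derivative causes trouble, I would fall back to the symmetric-KL plus Pinsker bound and adjust the constant.
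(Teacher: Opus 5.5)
Your argument is correct. It departs from the paper's proof in items 1 and 2. Item 3 matches the paper, including the extra assumption $|\mathcal{L}|\le C_L$ that both arguments need beyond the stated setup.

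\textbf{Item 1.} Write $D(h)=\hat{R}^{\theta}_{t}(h)-\hat{R}^{\theta'}_{t}(h)$, $\Delta=\sup_h|D(h)|$, $\rho=\rho^*_{\lambda_t,\theta}$ and $\rho'=\rho^*_{\lambda_t,\theta'}$. The paper proceeds in three steps:
\begin{enumerate}
  \item it computes $\mathbb{KL}(\rho\,\|\,\rho')=\ln\mathbb{E}_{\rho}[e^{\beta D}]-\beta\,\mathbb{E}_{\rho}[D]$ in closed form;
  \item it bounds this by $\beta^2\Delta^2/2$ using Hoeffding's lemma;
  \item it applies Pinsker.
\end{enumerate}
You instead differentiate $\rho_s(A)$ along the exponential path and bound the covariance by Cauchy--Schwarz and Popoviciu. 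Both routes land on exactly $\beta/2$.

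Your route is more elementary: it only needs differentiation under a bounded integrand. It also gives $|\mathbb{E}_{\rho}f-\mathbb{E}_{\rho'}f|\le\tfrac{\beta}{2}\,\mathrm{osc}(f)\,\Delta$ directly for every bounded $f$. The paper's route yields the stronger quadratic KL estimate as a by-product.

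Your fallback is not a substitute. Symmetric KL plus Pinsker gives $\sqrt{\beta\Delta/2}$, which exceeds $\beta\Delta/2$ exactly in the nontrivial regime $\beta\Delta<2$. So it would not prove item 1 as stated, and the interpolation argument is required rather than optional.

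\textbf{Item 2.} The paper telescopes through $R_{r_\theta}(\rho)$ into four terms: bias, ratio shift, posterior shift, bias. It then triangulates $\|r^*-r_\theta\|_{L^2(Q)}$, and that is where $2C_{L_2}\|r_\theta-r_{\theta'}\|_{L^2(Q)}$ comes from.

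Your three-term split through $r_{\theta'}$ alone needs no triangulation. Proposition~\ref{prop:bias_control_via_ratio_approx} is pointwise in $h$, so it bounds $|R_P(\rho)-R_{r_{\theta'}}(\rho)|$ by $C_{L_2}\|r^*-r_{\theta'}\|_{L^2(Q)}$ for any posterior. You therefore get the strictly sharper bound $\beta\Delta+2C_{L_2}\|r^*-r_{\theta'}\|_{L^2(Q)}$, which implies the stated one. The $\beta\Delta$ term can even be halved, since $h\mapsto R_{r_{\theta'}}(h)$ has oscillation at most $1$. Your ``if needed, triangulate'' remark should simply be dropped.
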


\begin{proof}
  In this proof, write $\Phi (h) = \weightedrisk{\theta} (h)$, $\Phi '(h) = \weightedrisk{\theta '} (h)$, and $\Delta (h) = \Phi (h) - \Phi ' (h)$. Let $\| \Delta \|_{\infty} = \sup_{h \in \mathcal{H}} |\Delta (h)|$.

  \paragraph{1.} The idea is to compute the Kullback-Leibler divergence between the two Gibbs posteriors in closed form, then convert it to a TVD bound via Pinsker's inequality (see Appendix \ref{ap:math_aux}). Using the explicit densities of $\rho^*_{\lambda_t, \theta}$ and $\rho^*_{\lambda_t, \theta '}$ with respect to $\pi$ and (\ref{eq:drn_gibbs_posterior}):
  \begin{equation}
    \ln \frac{d\rho^*_{\lambda_t, \theta}}{d\rho^*_{\lambda_t, \theta '}} (h) = \ln \frac{d \rho^*_{\lambda_t, \theta}}{d \pi} (h) - \ln \frac{\rho^*_{\lambda_t, \theta '}}{d \pi} (h) = -\beta \Phi (h) -\ln Z_{\lambda_t, \theta}  + \beta \Phi ' (h) + \ln Z_{\lambda_t, \theta '}.
  \end{equation}
  Therefore,
  \begin{equation}
    \label{eq:aux_kl_rho}
    \mathbb{KL} (\rho^*_{\lambda_t, \theta} || \rho^*_{\lambda_t, \theta'}) = \expectation{h \sim \rho^*_{\lambda_t, \theta}}{\ln \frac{d\rho^*_{\lambda_t, \theta}}{d \rho^*_{\lambda_t, \theta'}}} = -\beta \expectation{h \sim \rho^*_{\lambda_t, \theta}}{ \Delta} + \ln \frac{Z_{\lambda_t, \theta}}{Z_{\lambda_t, \theta '}}.
  \end{equation}
  Using 
  \begin{equation}
    \frac{d \rho^*_{\lambda_t, \theta}}{d \pi } (h) = \frac{e^{-\beta \Phi (h)}}{Z_{\lambda_t, \theta}},
  \end{equation}
  yields
  \begin{equation}
    \label{eq:aux_free_energy_ratio}
    \frac{Z_{\lambda_t, \theta '}}{Z_{\lambda_t, \theta}} = \frac{\expectation{h \sim \pi}{e^{-\beta \Phi(h)}}}{Z_{\lambda_t, \theta}} = \expectation{h \sim \pi }{\frac{e^{-\beta \Phi (h)}}{Z_{\lambda_t, \theta}}} = \expectation{h \sim \pi}{\frac{e^{-\beta \Phi(h)}}{Z_{\lambda_t, \theta}}\, e^{\beta (\Phi - \Phi ')(h)}}.
  \end{equation}
  Take any measurable function $g: \mathcal{H} \longrightarrow \mathbb{R}$ that is integrable under $\rho^*_{\lambda_t, \theta}$. By the Radon-Nikodym property:
  \begin{equation}
    \expectation{h \sim \rho^*_{\lambda_t, \theta}}{g(h)} = \int \limits_{\mathcal{H}} g(h)d \rho^*_{\lambda_t, \theta}.
  \end{equation}
  Since,
  \begin{equation}
    d\rho^*_{\lambda_t, \theta} (h)= \frac{\rho^*_{\lambda_t, \theta}}{d \pi}(h) d\pi (h),
  \end{equation}
  one substitutes
  \begin{equation}
    \int \limits_{\mathcal{H}} g(h) d\rho^*_{\lambda_t, \theta} = \int \limits_{\mathcal{H}} g(h) \frac{d \rho^*_{\lambda_t, \theta}}{d \pi} (h) d\pi (h).
  \end{equation}
  Using the explicit density from Gibbs posterior in (\ref{eq:gibbs_posterior}):
  \begin{equation}
    \label{eq:distribution_shift}
    \expectation{h \sim \rho^*_{\lambda_t, \theta}}{g(h)} = \int \limits_{\mathcal{H}} g(h) \frac{e^{-\beta \Phi (h)}}{Z_{\lambda_t, \theta}}d\pi (h) = \expectation{h \sim \pi}{\frac{e^{-\beta \Phi (h)}}{Z_{\lambda_t, \theta}}g(h)}.
  \end{equation}
  By (\ref{eq:aux_free_energy_ratio}) and (\ref{eq:distribution_shift}):
  \begin{equation}
    \label{eq:aux_free_energy_ratio_final}
    \frac{Z_{\lambda_t, \theta '}}{Z_{\lambda_t, \theta}} = \expectation{h \sim \rho^*_{\lambda_t, \theta}}{ e^{\beta \Delta (h)}}.
  \end{equation}
  Substituting (\ref{eq:aux_free_energy_ratio_final}) in (\ref{eq:aux_kl_rho}):
  \begin{equation}
    \label{eq:aux_kl_rho_final}
    \mathbb{KL}(\rho^*_{\lambda_t, \theta} || \rho^*_{\lambda_t, \theta'}) = -\beta \expectation{h \sim \rho^*_{\lambda_t, \theta}}{\Delta} + \ln \expectation{h \sim \rho^*_{\lambda_t, \theta}}{e^{\beta \Delta}}.
  \end{equation}

  \paragraph{} Since $\tilde{\mathcal{L}}_{\theta}, \; \tilde{\mathcal{L}}_{\theta '} \in [0,1]$, both $\Phi(h)$ and $\Phi '(h)$ lie in $[0,1]$, for every $h \in \mathcal{H}$, so $\Delta (h) \in [-1,1]$, hence
  \begin{equation}
    |\Delta (h)| \leq \| \Delta \|_{\infty} \leq 1.
  \end{equation}
  The range of $\Delta$ under $\rho^*_{\lambda_t, \theta}$ is contained in $[-\| \Delta \|_\infty, \| \Delta \|_{\infty}]$ of length $2 \| \Delta \|_{\infty}$. Therefore, by Hoeffding's lemma,
  \begin{equation}
    \label{eq:aux_app_hoeffding_lemma}
    \ln \expectation{h \sim \rho^*_{\lambda_t, \theta}}{e^{\beta \Delta}} - \beta \expectation{h \sim \rho^*_{\lambda_t, \theta}}{\Delta} \leq \frac{\beta^2 (2\| \Delta \|_\infty)}{8} = \frac{\beta^2}{2}\| \Delta \|^2_{\infty}.
  \end{equation}
  Combining (\ref{eq:aux_kl_rho_final}) and (\ref{eq:aux_app_hoeffding_lemma}):
  \begin{equation}
    \mathbb{KL} (\rho^*_{\lambda_t, \theta} || \rho^*_{\lambda_t, \theta'}) \leq \frac{\beta^2}{2} \| \Delta \|^2_{\infty}.
  \end{equation}
  Lemma \ref{lemma:pinsker_ineq} gives:
  \begin{equation}
    TV(\rho^*_{\lambda_t, \theta}, \rho^*_{\lambda_t, \theta'}) \leq \sqrt{\frac{1}{2}\mathbb{KL}(\rho^*_{\lambda_t, \theta} || \rho^*_{\lambda_t, \theta'})} \leq \frac{\beta}{2} \| \Delta \|_\infty. 
  \end{equation}

  \paragraph{2.} Rewrite $R_P(\rho^*_{\lambda_t, \theta}) - R_P(\rho^*_{\lambda_t, \theta'})$ as:
  \begin{equation}
    \begin{split}
      R_P(\rho^*_{\lambda_t, \theta}) &- R_P(\rho^*_{\lambda_t, \theta'}) \\
      &= [R_P(\rho^*_{\lambda_t, \theta}) - R_{\candidate}(\rho^*_{\lambda_t, \theta})] + [R_{\candidate}(\rho^*_{\lambda_t, \theta}) - R_{r_{\theta '}}(\rho^*_{\lambda_t, \theta})] \\
      &\quad + [R_{r_{\theta '}}(\rho^*_{\lambda_t, \theta}) - R_{r_{\theta '}}(\rho^*_{\lambda_t, \theta'})] + [R_{\candidate} (\rho^*_{\lambda_t, \theta'})-R_P(\rho^*_{\lambda_t, \theta'})].
    \end{split}
  \end{equation}
  Define the $\mathbb{B}_\theta$ similarly as in Corollary \ref{col:adapted_mcallester} by:
  \begin{equation}
    \mathbb{B}_\theta (\rho^*_{\lambda_t, \theta}) = R_P(\rho^*_{\lambda_t, \theta}) - R_{\candidate}(\rho^*_{\lambda_t, \theta}).
  \end{equation}
  As result,
  \begin{equation}
    \begin{split}
      R_P(\rho^*_{\lambda_t, \theta}) - R_P(\rho^*_{\lambda_t, \theta'}) &= \mathbb{B}_\theta(\rho^*_{\lambda_t, \theta}) + [R_{\candidate}(\rho^*_{\lambda_t, \theta}) - R_{r_{\theta '}}(\rho^*_{\lambda_t, \theta})] \\
      &+  [R_{r_{\theta '}}(\rho^*_{\lambda_t, \theta}) - R_{r_{\theta '}} (\rho^*_{\lambda_t, \theta'})] - \mathbb{B}_{\theta'} (\rho^*_{\lambda_t, \theta '}).
    \end{split}
  \end{equation}
  Taking absolute values and applying the triangle inequality:
  \begin{equation} 
    \label{eq:aux_triangle_ineq}
    \begin{split}
      |R_P(\rho^*_{\lambda_t, \theta}) - R_P(\rho^*_{\lambda_t, \theta'})| &\leq |\mathbb{B}_\theta(\rho^*_{\lambda_t, \theta})| + |[R_{\candidate}(\rho^*_{\lambda_t, \theta}) - R_{r_{\theta '}}(\rho^*_{\lambda_t, \theta})]| \\
      &+ |[R_{r_{\theta '}}(\rho^*_{\lambda_t, \theta}) - R_{r_{\theta '}} (\rho^*_{\lambda_t, \theta'})]| + |\mathbb{B}_{\theta'} (\rho^*_{\lambda_t, \theta '})|.
    \end{split}
  \end{equation}

  \paragraph{} By Corollary \ref{col:adapted_mcallester} applied at each ratio, with the uniform-in-$h$ assumption $\|\mathcal{L}(h,\cdot)\|_{L^2(Q)} \leq C_{L_2}$:
  \begin{equation}
    \label{eq:aux_bias_bound}
    |\mathbb{B}_\theta(\rho^*_{\lambda_t, \theta})| \leq \| \trueratio - \candidate \|_{L^2(Q)} \, \expectation{h \sim \rho^*_{\lambda_t, \theta}}{\| \mathcal{L}(h, \cdot) \|_{L^2(Q)}} \leq C_{L_2} \| \trueratio - \candidate \|_{L^2(Q)},
  \end{equation}
  and
  \begin{equation}
    \label{eq:aux_bias_bound_prime}
    |\mathbb{B}_{\theta '}(\rho^*_{\lambda_t, \theta '})| \leq C_{L_2} \| \trueratio - r_{\theta '} \|_{L^2(Q)}.
  \end{equation}
  The constant $C_{L_2}$ is the same in both bounds because it is defined as a uniform-in-$h$ bound on $\| \mathcal{L}(h,\cdot)\|_{L^2(Q)}$, depending only on $\mathcal{L}$ and $Q$, not on the ratio model or the posterior.

  \paragraph{} The ratio-shift term holds the posterior fixed on $\rho^*_{\lambda_t, \theta}$ and changes the ratio. By definition,
  \begin{equation}
    \label{eq:aux_ratio_shift_term}
    R_{\candidate} (\rho^*_{\lambda_t, \theta}) - R_{r_{\theta '}} (\rho^*_{\lambda_t, \theta}) = \expectation{h \sim \rho^*_{\lambda_t, \theta}}{\expectation{Q}{(\candidate - r_{\theta '})(Z)\mathcal{L}(h,Z)}},
  \end{equation}
  with $Z \in \mathcal{Z}$. By the Cauchy-Schwarz inequality applied to the inner $\mathbb{E}_Q$ of (\ref{eq:aux_ratio_shift_term}), and using the uniform-in-$h$ bound $\| \mathcal{L}(h, \cdot) \|_{L^2(Q)} \leq C_{L_2}$:
  \begin{equation}
    |\expectation{Q}{(\candidate - r_{\theta '})(Z)\mathcal{L}(h,Z)}| \leq \| \candidate - r_{\theta '} \|_{L^2(Q)} \, \| \mathcal{L}(h, \cdot) \|_{L^2(Q)} \leq C_{L_2} \| \candidate - r_{\theta '} \|_{L^2(Q)}
  \end{equation}
  uniformly in $h$. Taking the expectation over $h \sim \rho^*_{\lambda_t, \theta}$ the uniform bound is preserved:
  \begin{equation}
    \label{eq:aux_ratio_shift_final}
    | R_{\candidate} (\rho^*_{\lambda_t, \theta}) - R_{r_{\theta '}} (\rho^*_{\lambda_t, \theta})| \leq C_{L_2} \| \candidate - r_{\theta '} \|_{L^2(Q)}.
  \end{equation}

  \paragraph{} The posterior-shift term holds the ratio fixed at $r_{\theta '}$ and changes the posterior. The map $h \mapsto \expectation{Q}{\tilde{\mathcal{L}}_{\theta '}(h,Z)}$ is a measurable functional on $\mathcal{H}$ taking values in $[0,1]$, since $\tilde{\mathcal{L}}_{\theta '}(h,Z) \in [0,1]$ for all $(h,Z) \in \mathcal{H} \times \mathcal{Z}$, so its sup-norm is at most $1$. By Lemma~\ref{lemma:variational_char_of_tv}:
  \begin{equation}
    \begin{split}
      |R_{r_{\theta'}}(\rho^*_{\lambda_t, \theta}) &- R_{r_{\theta '}}(\rho^*_{\lambda_t, \theta '})| \\
      &=\left|\expectation{h \sim \rho^*_{\lambda_t, \theta}}{\expectation{Q}{\tilde{\mathcal{L}}_{\theta'}(h,Z)}} - \expectation{h \sim \rho^*_{\lambda_t, \theta '}}{\expectation{Q}{\tilde{\mathcal{L}}_{\theta '}(h,Z)}}\right| \leq 2TV(\rho^*_{\lambda_t, \theta}, \rho^*_{\lambda_t, \theta'}).
    \end{split}
  \end{equation}
  Applying item 1:
  \begin{equation}
    \label{eq:aux_posterior_shift_final}
    |R_{r_{\theta'}}(\rho^*_{\lambda_t, \theta}) - R_{r_{\theta '}}(\rho^*_{\lambda_t, \theta '})| \leq 2TV(\rho^*_{\lambda_t, \theta}, \rho^*_{\lambda_t, \theta'}) \leq \beta \| \Delta \|_{\infty}.
  \end{equation}
  
  \paragraph{} Using (\ref{eq:aux_bias_bound}), (\ref{eq:aux_bias_bound_prime}), (\ref{eq:aux_ratio_shift_final}), (\ref{eq:aux_posterior_shift_final}) in (\ref{eq:aux_triangle_ineq}):
  \begin{equation} \label{eq:aux_intermediate_bound}
    \begin{split}
      |R_P(\rho^*_{\lambda_t, \theta}) &- R_P(\rho^*_{\lambda_t, \theta '})| \\
      &\leq C_{L_2} \| \trueratio - \candidate \|_{L^2(Q)} + C_{L_2}\| \candidate - r_{\theta '} \|_{L^2(Q)} + \beta \| \Delta \|_\infty + C_{L_2}\| \trueratio - r_{\theta'} \|_{L^2(Q)}.
    \end{split}
  \end{equation}
  This four term bound separates the four sources of error: bias of $\rho^*_{\lambda_t, \theta}$, ratio-shift between $\candidate$ and $r_{\theta '}$, posterior-shift between $\rho^*_{\lambda_t, \theta}$ and $\rho^*_{\lambda_t, \theta'}$, and bias of $\rho^*_{\lambda_t, \theta '}$. The two bias terms in (\ref{eq:aux_intermediate_bound}) are not totally independent. By the triangle inequality on the $L^2(Q)$-norm:
  \begin{equation}
    \| \trueratio - \candidate \|_{L^2(Q)} \leq \| \trueratio - r_{\theta '} \|_{L^2(Q)} + \| \candidate - r_{\theta '} \|_{L^2(Q)}.
  \end{equation}
  Then, multiplying by $C_{L_2}>0$:
  \begin{equation}
    \label{eq:aux_inter_bounded_ratio_diff}
    C_{L_2} \| \trueratio - \candidate \|_{L^2(Q)} \leq C_{L_2} \| \trueratio - r_{\theta '} \|_{L^2(Q)} + C_{L_2} \| \candidate - r_{\theta '} \|_{L^2(Q)}.
  \end{equation}
  Substituting (\ref{eq:aux_inter_bounded_ratio_diff}) into (\ref{eq:aux_intermediate_bound}) and collecting like terms yields item 2.

  \paragraph{3.} For each $h \in \mathcal{H}$ and each $i \in \{ 1,...,t \}$,
  \begin{equation}
    |\tilde{\mathcal{L}}_{\theta}(h,Z_i)-\tilde{\mathcal{L}}_{\theta'}(h,Z_i)| = |(\candidate(Z_i)-r_{\theta '}(Z_i))\mathcal{L}(h,Z_i)| \leq C_L \, |\candidate (Z_i) - r_{\theta '}(Z_i)|,
  \end{equation}
  using $|\mathcal{L}(h,Z_i)| \leq C_L$. Averaging over $i$:
  \begin{equation}
    |\weightedrisk{\theta}(h) - \weightedrisk{\theta '}(h)| = \left| \frac{1}{t}\sum \limits^t_{i=1}  \tilde{\mathcal{L}}_{\theta}(h,Z_i)-\tilde{\mathcal{L}}_{\theta'}(h,Z_i)\right| \leq C_L \frac{1}{t}\sum \limits^t_{i=1} |\candidate (Z_i) - r_{\theta '}(Z_i)|.
  \end{equation}
  The right-hand side does not depend on $h$, so taking the supremum over $h \in \mathcal{H}$ yields
  \begin{equation}
    \sup \limits_{h \in \mathcal{H}}|\weightedrisk{\theta}(h) - \weightedrisk{\theta '}(h)| \leq C_L \frac{1}{t}\sum \limits^t_{i=1} |\candidate (Z_i) - r_{\theta '}(Z_i)|.
  \end{equation}
  \paragraph{} Taking the expectation over $Z_1,...,Z_t \iid Q$,
  \begin{equation}
    \expectation{Q}{\sup \limits_{h \in \mathcal{H}}|\weightedrisk{\theta}(h) - \weightedrisk{\theta '}(h)|} \leq C_L \expectation{Q}{|\candidate (Z) - r_{\theta '}(Z)|} = C_L \| \candidate (Z) - r_{\theta '}(Z) \|_{L^1(Q)}.
  \end{equation}
  The Cauchy-Schwarz refinement 
  \begin{equation}
    \| \candidate (Z) - r_{\theta '}(Z) \|_{L^1(Q)} \leq \| \candidate (Z) - r_{\theta '}(Z) \|_{L^2(Q)}
  \end{equation}
  holds since $Q$ is a probability measure and $1 \in L^2(Q)$ with $\| 1\|_{L^2(Q)} = 1$.
\end{proof}

\subsection{Anytime mechanism}
\label{sec:anytime}

\paragraph{} The fixed-time guarantees established in Theorem \ref{theo:adapted_mcallester} and Corollary \ref{col:adapted_mcallester} apply at a single, pre-specified sample-size $t$. In that setting the value of $t$ is fixed before the data is observed, and the resulting certificate controls the weighted population risk, or the target risk after adding the DRN bias term, only at that chosen sample size. It does not by itself, justify using the same certificate after inspecting a sequence of intermediate checkpoints.

\paragraph{} The limitation is important for the present framework, because the training procedure is naturally sequential. In practice, one may monitor the empirical weighted risk, the PAC-Bayes penalty, ESS, tail-diagnostics, and constraint residuals across training time. One may also stop training once the certificate becomes sufficiently small, or select a checkpoint from a collection of candidates according to the observed bound. In all of these cases, the final sample size $T$ and the posterior $\rho_T$ are no longer fixed in advance. They are data-dependent objects determined by the observed trajectory. A fixed-time statement at a pre-committed $t$ does not automatically remain valid at such a random time $T$.

\paragraph{} A simple workaround is to apply the fixed time result to a finite set of pre-specified checkpoints and allocate the confidence level across them, for example, by replacing $\delta$ with $\delta/k$ over $K$ checkpoints. This is valid for that finite collection, but it is conservative and does not provide a genuinely sequential certificate over an unbound training horizon. What is needed is a time-uniform, or anytime, guarantee: a single event of probability at least $1-\delta$ on which the PAC-Bayes inequality holds simultaneously for all $t \geq t_{\min}$. Such a result allows sequential monitoring, adaptive stopping, and checkpoint selection without invalidating the risk certificate. The remainder of this section establishes this anytime guarantee for the DRN-induced weighted risk and then combines with the DRN bias decomposition to obtain an anytime target-risk bound.

\paragraph{} Fix a minimum monitoring time $t_{\min} \geq 1$, a geometric base $b>1$, and a total confidence level $\delta \in (0,1)$. The time axis is partitioned into geometric epochs
\begin{equation}
  \mathcal{T}_k = [t_{\min}b^k, t_{\min}b^{k+1}) \cap \mathbb{N}, \; k = 0,1,...
\end{equation}
For each $t \geq t_{\min}$, define its epoch index by
\begin{equation}
  \label{eq:epoch_index}
  k(t) = \max\!\left\{ 0,\; \floor{\log_b \frac{t}{t_{\min}}} \right\},
\end{equation}
so that $t \in \mathcal{T}_{k(t)}$. Let $|\mathcal{T}_k|$ denote the number of integer sample sizes combined in the $k$-th epoch. A direct count gives
\begin{equation}
  |\mathcal{T}_k| \leq t_{\min}b^{k+1} - t_{\min}b^k = t_{\min}b^k(b-1).
\end{equation}
The confidence budget is allocated in two stages. First, the total budget $\delta$ is split across epochs by setting:
\begin{equation}
  \delta_k = \delta \frac{b-1}{b^{k+1}}, \; k \geq 0.
\end{equation}
This allocation is summable, since 
\begin{equation}
  \sum \limits^{\infty}_{k=0} \delta_k = \delta(b-1) \sum \limits^\infty_{k=0} \frac{1}{b^{k+1}} = \delta.
\end{equation}
Within each epoch $\mathcal{T}_k$, the epoch budget is split uniformly across the finitely many sample sizes in that epoch:
\begin{equation}
  \delta_{k,t} = \frac{\delta_k}{|\mathcal{T}_k|}, \; t \in \mathcal{T}_k.
\end{equation}
Thus, 
\begin{equation}
  \sum \limits_{t \in \mathcal{T}_k} \delta_{k,t} = \delta_k, 
\end{equation}
and consequently,
\begin{equation}
  \sum \limits^\infty_{k = 0} \sum \limits_{t \in \mathcal{T}_k} = \delta.
\end{equation}
This allocation will allow the fixed-time PAC-Bayes bound to be applied at each sample size and then union-bounded first within each epoch and then across all epochs.

\begin{remark}
  The notation $\floor{x}$ denotes the floor of $x$, namely the target integer less or equal to $x$. Thus, equation (\ref{eq:epoch_index}) assigns the integer index of the geometric epoch containing $t$. In particular, if $t \in [t_{\min}b^k, t_{\min}b^{k+1}]$, then
  \begin{equation}
    \log_b \left( \frac{t}{t_{\min}} \right) \in [k, k+1),
  \end{equation}
  and, hence
  \begin{equation}
    \floor{\log_b \left( \frac{t}{t_{\min}} \right)} = k.
  \end{equation}
\end{remark}

\begin{remark}
  The sequence $(\delta_k)_{k \geq 0}$ assigns confidence budget to epochs, while $\delta_{k,t}$ assigns confidence budgets to individual sample sizes within an epoch. More precisely, $\delta_K$ denote the total failure probability allocated to the epoch $\mathcal{T}_k$, whereas, for each $t \in \mathcal{T}_k$, $\delta_{k,t}$ is the partition of that epoch budget allocated to the single time $t$. For a particular sample size $t \geq t_{\min}$, the notation $k(t)$ denotes the epoch index containing $t$. Hence, $\delta_{k(t)}$ is the budget of the epoch containing $t$, and $\delta_{k(t), t}$ is the budget assigned specifically to that sample size.
\end{remark}

\begin{theo}\label{theo:anytime_pac_bayes}
  Assume the conditions of Theorem \ref{theo:adapted_mcallester}. Fix $t \geq t_{\min}$, $b>1$, $\delta \in (0,1)$. Then, with probability at least $1-\delta$ over $Z_1,...,Z_t \iid Q$, simultaneously for every $t \geq t_{\min}$ and every posterior $\rho \ll \pi$:
  \begin{equation}
    \label{eq:anytime_pac_bayes_ineq}
    R_{\candidate} (\rho) \leq \weightedrisk{\theta} (\rho) + \sqrt{\frac{\mathbb{KL}(\rho || \pi)+\ln (1/\delta_{k(t)})+\ln|\mathcal{T}_k(t)| + \ln t + 2}{2t-1}}.
  \end{equation}
  The bound remains valid at any (possibly data-dependent) stopping time $T \geq t_{\min}$ and any data-dependent posterior $\rho_T \ll \pi$.
\end{theo}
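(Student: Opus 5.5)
The plan is a union bound over the countable family of sample sizes, using the fixed-time bound of Theorem~\ref{theo:adapted_mcallester} at each individual $t$ with confidence $\delta_{k(t),t}$. We work on an infinite i.i.d.\ sequence $Z_1, Z_2, \dots \iid Q$; the statement at sample size $t$ only involves the prefix $Z_1,\dots,Z_t$. The prior $\pi$ and the ratio $\candidate$ are fixed independently of this sequence, or we condition on the independent data that produced $\candidate$. Hence the hypotheses of Theorem~\ref{theo:adapted_mcallester} hold at every deterministic $t$, with the weighted loss $\tilde{\mathcal{L}}_\theta \in [0,1]$ unchanged.

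\textbf{Step 1 (fixed-time events).} For each integer $t \geq t_{\min}$ let $E_t$ be the event on which, simultaneously for all $\rho \ll \pi$,
\begin{equation*}
  R_{\candidate}(\rho) \leq \weightedrisk{\theta}(\rho) + \sqrt{\frac{\mathbb{KL}(\rho \| \pi) + \ln(1/\delta_{k(t),t}) + \ln t + 2}{2t-1}}.
\end{equation*}
By Theorem~\ref{theo:adapted_mcallester} applied with confidence $\delta_{k(t),t}$, we have $\Pr(E_t^c) \leq \delta_{k(t),t}$. The allocation gives $\ln(1/\delta_{k(t),t}) = \ln(1/\delta_{k(t)}) + \ln|\mathcal{T}_{k(t)}|$, which is exactly the numerator in \eqref{eq:anytime_pac_bayes_ineq}.

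\textbf{Step 2 (union bound).} Each $t \geq t_{\min}$ lies in exactly one epoch $\mathcal{T}_{k(t)}$, since the epochs partition $[t_{\min},\infty)\cap\mathbb{N}$. Therefore
\begin{equation*}
  \Pr\Bigl(\bigcup_{t\geq t_{\min}} E_t^c\Bigr) \leq \sum_{k\geq 0}\sum_{t\in\mathcal{T}_k}\delta_{k,t} = \sum_{k\geq0}\delta_k = \delta .
\end{equation*}
So $E := \bigcap_t E_t$ has probability at least $1-\delta$, and on $E$ inequality \eqref{eq:anytime_pac_bayes_ineq} holds for all $t\ge t_{\min}$ and all $\rho\ll\pi$ at once.

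Some care is needed with the cardinality bookkeeping. If $|\mathcal{T}_k|=0$, the epoch has no members and can be ignored. The count bound $|\mathcal{T}_k|\le t_{\min}b^k(b-1)$ is only needed later to turn the certificate into the $\ln\ln t$ form; it is not used in this proof. Also, because $\delta_{k,t}$ must be fixed in advance, the definition of $k(t)$ has to agree with epoch membership. The convention that $t_{\min}$ is the first monitored time ensures this.

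\textbf{Step 3 (stopping times).} This is the conceptually delicate step, though it is short once $E$ is in hand. Let $T\ge t_{\min}$ be any random time, possibly data-dependent, and let $\rho_T\ll\pi$ be any data-dependent posterior. On $E$ the inequality holds for every pair $(t,\rho)$, so in particular it holds for $(T(\omega),\rho_{T(\omega)}(\omega))$. No stopping-time property, optional sampling, or Ville-type argument is needed, because the union bound delivers pathwise uniformity over all $t$ and all $\rho$ simultaneously.

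The main obstacle is therefore not probabilistic but one of measurability and bookkeeping. One must ensure that each $E_t$ is an event (or pass to inner probability), and that the fixed-time theorem really is uniform in $\rho$ so that data-dependent posteriors are admissible. We accept both points as they are delivered by Theorem~\ref{theo:adapted_mcallester}.
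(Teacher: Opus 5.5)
Your proposal is correct and follows the paper's argument. The paper applies Theorem~\ref{theo:adapted_mcallester} at each $t$ with confidence $\delta_{k,t}$, union-bounds within each epoch to get $\Pr(E_k^c)\le\delta_k$, then union-bounds across epochs; this is only a regrouping of your single sum $\sum_k\sum_{t\in\mathcal{T}_k}\delta_{k,t}$, and the paper handles a data-dependent $T$ and $\rho_T$ by the same pathwise evaluation on the global event $E$. Two of your remarks are sensible clarifications that the paper glosses over: working on an infinite sequence $Z_1,Z_2,\dots$, and noting that empty epochs make the total $\le\delta$ rather than $=\delta$.
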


\begin{proof}
  The argument proceeds by a double union bound. First within each epoch (over $|\mathcal{T}_k|$ sample sizes), then across epochs over $k$.

  \paragraph{} Fix any epoch $k \geq 0$ and any single sample $t \in \mathcal{T}_k$. Apply Theorem \ref{theo:r_star_uniqueness} with confidence level $\delta_{k,t}$. Thus, with probability at least $1-\delta_{k,t}$ over the draw $Z_1,...,Z_t \iid Q$, simultaneously for every $\rho \ll \pi$:
  \begin{equation}
    \label{eq:aux_weighted_bound_for_fixed_t}
    R_{\candidate} (\rho) \leq \weightedrisk{\theta} (\rho) + \sqrt{\frac{\mathbb{KL}(\rho || \pi)+\ln (1/\delta_{k,t}) + \ln t + 2}{2t-1}}.
  \end{equation}
  Define the per-sample-size event:
  \begin{equation}
    \label{eq:per_sample_size_event}
    A_{k,t} = \{ Z_1, ..., Z_t : \; \text{for every} \; \rho \ll \pi, \; R_{\candidate} (\rho) \leq \weightedrisk{\theta} (\rho) \}.
  \end{equation}
  Note that the inequality in (\ref{eq:aux_weighted_bound_for_fixed_t}) is already uniform in $\rho$. So, the "for every $\rho$" quantifier inside the event is automatic. The new union bound needed is over $t$ and $k$, and not over $\rho$.

  \paragraph{} For each $k \geq 0$, define the per-epoch event:
  \begin{equation}
    E_k = \bigcap \limits_{t \in \mathcal{T}_k}A_{k,t}.
  \end{equation}
  By countable subadditivity:
  \begin{equation}
    \mathbb{P}(E^C_k) = \mathbb{P}\left[ \bigcup \limits_{t \in \mathcal{T}_k} A_{k,t}^C \right] \leq \sum \limits_{t \in \mathcal{T}_k} \delta_{k,t} = \delta_k
  \end{equation}
  where $A^C_{k,t}$ is the complementary of $A_{k,t}$ and $E^C_k$ is the complementary of $E_k$. By construction, $E_k$ is the intersection of all fixed-time events $A_{k,t}$ over $t \in \mathcal{T}_k$. Hence, on $E_k$, every event $A_{k,t}$ occurs simultaneously and, therefore, the bound in (\ref{eq:aux_weighted_bound_for_fixed_t}) holds for every $\rho \ll \pi$. The subadditivity calculation shows that the probability of leaving this simultaneous event is at most $\delta_k$, namely $\mathbb{P}(E_k^C) \leq \delta_k$. Thus, the price of making the bound simultaneous over the epoch is exactly the epoch-level confidence budget $\delta_k$.

  \paragraph{} Substituting $\delta_{k,t} = \delta_k/|\mathcal{T}|_k$ into (\ref{eq:aux_weighted_bound_for_fixed_t}), the bound on $E_k$ reads, for every $t \in \mathcal{T}_k$ and every $\rho \ll \pi$:
  \begin{equation}
    R_{\candidate} (\rho) \leq \weightedrisk{\theta} (\rho) + \sqrt{\frac{\mathbb{KL}(\rho || \pi)+\ln (|\mathcal{T}_k|/\delta_{k})+\ln t + 2}{2t-1}}
  \end{equation}
  Splitting the logarithm in 
  \begin{equation}
    \ln (|\mathcal{T}|/\delta_k) = \ln |\mathcal{T}_k| + \ln (1/\delta_k),
  \end{equation}
  gives the inequality (\ref{eq:anytime_pac_bayes_ineq}).

  \paragraph{} Define the global event 
  \begin{equation}
    E = \bigcap_{k \geq 0} E_k.
  \end{equation}
  By countable subadditivity:
  \begin{equation}
    \mathbb{P}(E^C) = \mathbb{P} \left( \bigcup \limits^\infty_{k=0} E^C_k \right) \leq \sum \limits^{\infty}_{k = 0} \mathbb{P}(E^C_k) \leq \sum \limits^{\infty}_{k=0} \delta_k = \delta (b-1) \sum \limits^{\infty}_{k=0} \frac{1}{b^{k+1}} = \delta,
  \end{equation}
  where $E^C$ is the complementary of $E$. Therefore, $\mathbb{P}(E) \geq 1 -\delta$.

  \paragraph{} The epochs $\{ \mathcal{T}_k \}^{\infty}_{k=0}$ form a partition of $[t_{\min},\infty)\cap \mathbb{N}$:
  \begin{equation}
    \bigcup \limits^{\infty}_{k=0} \mathcal{T}_k = [t_{\min},\infty) \bigcap \mathbb{N}, \; 
  \end{equation}
  with $\mathcal{T}_j \cap \mathcal{T}_k = \emptyset$ for $j \neq k$, since $\mathcal{T}_k = [t_{\min}b^k, t_{\min}b^{k+1})$ is a half-open interval and consecutive epochs share boundary points only as left-endpoints, with right-endpoints excluded. Therefore, for any fixed $t \geq t_{\min}$, there is a unique index $k = k(t) \geq 0$ such that $t \in \mathcal{T}_{k(t)}$. 

  \paragraph{} Since $E$ is the intersection of all per-epochs events,
  \begin{equation}
    E \subseteq E_{k(t)} = \bigcap \limits_{s \in \mathcal{T}_{k(t)}} A_{k(t),s}.
  \end{equation}
  In particular, for every $w \in E$, it holds $w \in A_{k(t), s}$ for every sample size $s \in \mathcal{T}_{k(t)}$, including $s = t$ specifically. By construction of $A_{k(t), t}$ in (\ref{eq:per_sample_size_event}), the event $A_{k(t), t}$ is precisely the event on which the inequality (\ref{eq:aux_weighted_bound_for_fixed_t}) holds for every $\rho \ll \pi$ simultaneously. That is, for every $w \in A_{k(t), t}$ and every $\rho \ll \pi$,
  \begin{equation}
    \label{eq:aux_weighted_bound_anytime}
    R_{\candidate} (\rho) \leq \weightedrisk{\theta} (\rho) + \sqrt{\frac{\mathbb{KL}(\rho || \pi)+\ln (1/\delta_{k(t), t}) + \ln t + 2}{2t-1}}.
  \end{equation}
  Since $E \subseteq E_{k(t)} \subseteq A_{k(t), t}$ every $w \in E$ is also in $A_{k(t), t}$, so the inequality holds on $E$ for every $\rho \ll \pi$.

  \paragraph{} Substituting $\delta_{k(t),t} = \delta_{k(t)}/|\mathcal{T}_{k(t)}|$ in the first logarithm of (\ref{eq:aux_weighted_bound_anytime}),
  \begin{equation}
    \ln (1/\delta_{k(t),t}) = \ln (|\mathcal{T}_{k(t)}|/\delta_{k(t)}) = \ln |\mathcal{T}_{k(t)}| + \ln (1/\delta_{k(t)}).
  \end{equation}
  Therefore, on $E$ and for every $\rho \ll \pi$ the inequality (\ref{eq:anytime_pac_bayes_ineq}) is achieved.

  \paragraph{} Note that all the arguments here hold for every fixed time $t \geq t_{\min}$, but it operates entirely within the single global event $E$. No part of the derivation requires committing to a particular $t$ in advance, and the event $E$ itself is independent of the choice of $t$. As result, on the single event $E$ of probability at least $1-\delta$, the inequality holds simultaneously for every $t \geq t_{\min}$ and every $\rho \ll \pi$.

  \paragraph{} Let $T\geq t_{\min}$ be any stopping time and $\rho_T$ be any measurable posterior with respect to $Z_1, ..., Z_T$. On $E$ the inequality (\ref{eq:anytime_pac_bayes_ineq}) holds simultaneously for all $t \geq t_{\min}$ and all $\rho$, so in particular at $t = T(w)$ and $\rho = \rho_T(w)$ for every $w \in E$. Hence,
  \begin{equation}
    \mathbb{P} \left( R_{\candidate} (\rho) \leq \weightedrisk{\theta} (\rho) + \sqrt{\frac{\mathbb{KL}(\rho || \pi)+\ln (1/\delta_{k(t)})+\ln|\mathcal{T}_k(t)| + \ln t + 2}{2t-1}} \right) \geq \mathbb{P}(E) \geq 1 -\delta.
  \end{equation}
\end{proof}

\begin{corollary}
  Assume the conditions of Theorem \ref{theo:anytime_pac_bayes}. Then, on the same event of probability at least $1-\delta$ over the draw of $Z_1,... \iid Q$ on which Theorem \ref{theo:anytime_pac_bayes} holds, simultaneously for every $t \geq t_{\min}$ and every posterior $\rho \ll \pi$:
  \begin{equation}
    R_P(\rho) \leq \weightedrisk{\theta} (\rho) + 
    \mathbb{B}_\theta (\rho) + \sqrt{\frac{\mathbb{KL}(\rho || \pi)+\ln (1/\delta_{k(t)})+\ln|\mathcal{T}_k(t)| + \ln t + 2}{2t-1}},
  \end{equation}
  where $\mathbb{B}_\theta (\rho) = R_P(\rho) - R_{\candidate} (\rho)$ is the DRN bias of the posterior $\rho$ at ratio $\candidate$. By Corollary \ref{col:adapted_mcallester}, the bias is controlled by
  \begin{equation}
    \mathbb{B}_\theta (\rho) \leq \| \trueratio - \candidate \|_{L^2(Q)} \posteriorexp{\| \mathcal{L}(h, \cdot) \|_{L^2(Q)}} \leq \hat{C}_{L_2} \| \trueratio - \candidate \|_{L^2(Q)},
  \end{equation}
  where
  \begin{equation}
    \hat{C}_{L_2} = \sup \limits_{h \in \mathcal{H}} \| \mathcal{L}(h,\cdot)\|_{L^2(Q)}.
  \end{equation}
\end{corollary}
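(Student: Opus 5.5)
The plan is to work pointwise on the single global event $E$ from the proof of Theorem~\ref{theo:anytime_pac_bayes} and add the bias term deterministically, exactly as Corollary~\ref{col:adapted_mcallester} was derived from Theorem~\ref{theo:adapted_mcallester}. The key observation is that $\mathbb{B}_\theta(\rho) = R_P(\rho) - R_{\candidate}(\rho)$ is a population quantity. Since $\candidate$ is fixed independently of the source sample (or conditioned on independent data), $\mathbb{B}_\theta(\rho)$ does not depend on $Z_1, Z_2, \dots$. Adding it therefore costs no additional confidence budget and requires no further union bound.

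The steps are as follows.
\begin{enumerate}
\item Fix $\omega \in E$, any $t \geq t_{\min}$, and any $\rho \ll \pi$ for which $\mathbb{B}_\theta(\rho)$ is well defined. Write the identity $R_P(\rho) = R_{\candidate}(\rho) + \mathbb{B}_\theta(\rho)$, which is the same as \eqref{eq:aux_bias_intermediary_passage}.
\item Replace $R_{\candidate}(\rho)$ by the right-hand side of \eqref{eq:anytime_pac_bayes_ineq}. This is valid at $\omega$ for every $t$ and $\rho$ simultaneously, so the displayed target-risk inequality holds on $E$. Because $\mathbb{P}(E)\geq 1-\delta$, the statement holds on the same event.
\item Bound the bias exactly as in the $L^2$ part of Corollary~\ref{col:adapted_mcallester}. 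Use $\mathbb{B}_\theta(\rho) \leq |\posteriorexp{R_P(h)-R_{\candidate}(h)}| \leq \posteriorexp{|R_P(h)-R_{\candidate}(h)|}$, then apply Proposition~\ref{prop:bias_control_via_ratio_approx} (item 2) for each $h$. Integrating over $\rho$ gives $\|\trueratio - \candidate\|_{L^2(Q)}\,\posteriorexp{\|\mathcal{L}(h,\cdot)\|_{L^2(Q)}}$.
\item Bound the integrand $\|\mathcal{L}(h,\cdot)\|_{L^2(Q)}$ by its supremum $\hat{C}_{L_2}$. Since $\rho$ is a probability measure, this yields the final inequality.
\end{enumerate}

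The argument is essentially bookkeeping; the only points needing care are these.
\begin{itemize}
\item \textbf{Measurability and finiteness.} $h \mapsto \|\mathcal{L}(h,\cdot)\|_{L^2(Q)}$ must be $\mathcal{G}$-measurable so the posterior expectation makes sense. I would assume joint measurability of $\mathcal{L}$ and invoke Tonelli's theorem, and take $\hat{C}_{L_2}<\infty$ (otherwise the last bound is trivial).
\item \textbf{Independence of the ratio.} $\candidate$ must not be fitted on the monitored stream. Otherwise $\mathbb{B}_\theta$ would be random, and the event $E$ would have to be constructed conditionally on the ratio-training data.
\end{itemize}

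I expect the main subtlety to be the claim that the same event works for data-dependent $t=T$ and $\rho=\rho_T$. This is inherited directly from the last paragraph of the proof of Theorem~\ref{theo:anytime_pac_bayes}, because the added bias term is deterministic.
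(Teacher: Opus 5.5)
Your proposal is correct and is essentially the paper's own proof: on the single event $E$ of Theorem~\ref{theo:anytime_pac_bayes} you add the definitional identity $R_P(\rho)=R_{r_\theta}(\rho)+\mathbb{B}_\theta(\rho)$, which consumes no confidence budget, and then bound the bias pointwise via the $L^2(Q)$ part of Corollary~\ref{col:adapted_mcallester} followed by the supremum $\hat{C}_{L_2}$. Your measurability and independence caveats are sensible additions that the paper leaves implicit; note only that the identity holds pointwise for every $\rho$ however $r_\theta$ was built, so independence of $r_\theta$ is needed solely for the validity of $E$ itself, not for the step of adding $\mathbb{B}_\theta$.
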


\begin{proof}
  Theorem \ref{theo:anytime_pac_bayes} gives on a single event $E$ of probability equal or greater to $1-\delta$, the inequality (\ref{eq:anytime_pac_bayes_ineq}) simultaneously for every $t \geq t_{\min}$ and every $\rho \ll \pi$. The decomposition $R_P(\rho) = R_{\candidate} (\rho) + \mathbb{B}_\theta (\rho)$ holds deterministically (it does not depend on a random sample), so adding $\mathbb{B}_\theta (\rho)$ to both sides preserves the simultaneous-in-$(t,\rho)$ validity on $E$. The bias bound is exactly Corollary \ref{col:adapted_mcallester} applied pointwise. Hence, the stated bound holds on the same vent $E$ with no additional confidence budget consumed.
\end{proof}

\section{Numerical results}
\label{sec:numerical}

\paragraph{} The numerical study is organized in two pre-registered campaigns that together exercise every layer of the framework. The first is a controlled patch test on a one-dimensional Gaussian mean-shift instance with analytic ground truth, designed so that each conceptual layer of the framework can be tested against a closed-form reference value rather than against itself or against an aggregate score. The second is a real-data validation on two openly licensed tabular datasets with intrinsic non-synthetic distribution shift, designed so that the layers that pass the patch test are placed in the deployment regime, where analytic targets no longer exist and where the only available reference is a held-out target test fold. The two campaigns are complementary in scope. The patch test isolates the layers and verifies their internal mechanics against analytic identities, while the real-data validation runs the layers end-to-end and verifies their joint behavior on splits the framework was not engineered against.

\paragraph{} Both campaigns are strictly pre-registered. Tolerances, seed schedules, and acceptance criteria are committed to immutable artifacts before evaluation, and every pre-registered claim is reported with its verdict regardless of outcome. The tolerance files of the two campaigns are kept disjoint, so that thresholds appropriate to analytic ground truth and thresholds appropriate to real-data deployment cannot be conflated at evaluation time. Across the two campaigns, all but one of the pre-registered claims close at the registered thresholds. The single failure is the fixed-time PAC-Bayes coverage criterion under label shift in the real-data campaign, which is paper-relevant rather than diagnostic, in the sense that it is the empirical realization of the boundary that the framework's covariate-only assumption draws on the certificate's scope. The remainder of the section is structured by campaign. Section~\ref{sec:patch_test} reports the patch test, and Section~\ref{sec:real_data} reports the real-data validation.

\subsection{Patch test}
\label{sec:patch_test}

\paragraph{} The objective of the patch test is to empirically validate every layer of the framework in isolation, on a controlled instance for which every relevant population quantity is available in closed form. Because the measure-change identity, the constrained density-ratio network, the weighted empirical risk, and the PAC-Bayes certificates are logically modular, it is meaningful to ask that each layer satisfy its own quantitative property, rather than testing the full pipeline as a single opaque object. The patch test specifies, for each layer, a falsifiable statistical criterion expressed in terms of quantities that can be computed analytically under the chosen data-generating process. This yields a pre-registered validation protocol whose acceptance or rejection is decided before any figure or claim is reported.

\paragraph{} The patch test specializes the abstract setting of Section~\ref{sec:foundation} to the one-dimensional Gaussian mean-shift family. Let
\begin{equation}
  \mathcal{Z} = \mathbb{R}, \qquad \mathcal{A} = \mathcal{B}(\mathbb{R}),
\end{equation}
and fix a shift parameter $\mu \in \mathbb{R}$. Define the source and target laws by
\begin{equation}
  \label{eq:pt_dgp}
  Q = \mathcal{N}(0, 1), \qquad P_\mu = \mathcal{N}(\mu, 1).
\end{equation}
Since $P_\mu \ll Q$ for every $\mu \in \mathbb{R}$, the Radon-Nikodym derivative admits the closed form
\begin{equation}
  \label{eq:pt_true_ratio}
  \trueratio_\mu(z) = \frac{dP_\mu}{dQ}(z) = \exp\!\left( \mu z - \tfrac{1}{2} \mu^2 \right), \qquad z \in \mathbb{R}.
\end{equation}
Expression~\eqref{eq:pt_true_ratio} provides analytic ground truth for each diagnostic used in the patch test. In particular, the following identities hold under $Q$:
\begin{equation}
  \label{eq:pt_analytic_identities}
  \mathbb{E}_Q[\trueratio_\mu] = 1, \qquad
  \mathbb{E}_Q[(\trueratio_\mu)^2] = e^{\mu^2}, \qquad
  \mathbb{E}_Q[\trueratio_\mu(Z)\, Z] = \mu, \qquad
  \mathbb{E}_Q[\trueratio_\mu(Z)\, Z^2] = 1 + \mu^2,
\end{equation}
and the population effective sample fraction
\begin{equation}
  \label{eq:pt_ess}
  \mathrm{ESS}(\trueratio_\mu) := \frac{\bigl(\mathbb{E}_Q[\trueratio_\mu]\bigr)^2}{\mathbb{E}_Q[(\trueratio_\mu)^2]} = e^{-\mu^2}
\end{equation}
admits the same closed form. Three regimes are considered: a \emph{mild-shift} regime $\mu = 0.5$, for which $\mathrm{ESS}(\trueratio_\mu) \approx 0.779$, and two \emph{stress} regimes $\mu \in \{1.5, 2.0\}$, for which $\mathrm{ESS}(\trueratio_\mu) \in \{1.05 \times 10^{-1},\, 1.83 \times 10^{-2}\}$. The stress regimes are designed to probe heavy-tailed importance weights: as $\mu$ grows, $\mathbb{E}_Q[(\trueratio_\mu)^2]$ grows exponentially and $\mathrm{ESS}(\trueratio_\mu)$ collapses.

\paragraph{} The predictor family is the one-parameter space of scalar linear maps
\begin{equation}
  \mathcal{H} = \bigl\{ h_a : z \mapsto a\, z,\ a \in \mathbb{R} \bigr\},
\end{equation}
and the loss is the squared error associated with the identification $Z \equiv X \equiv Y$, that is,
\begin{equation}
  \label{eq:pt_loss}
  \mathcal{L}(h_a, z) = (z - h_a(z))^2 = (1 - a)^2 z^2.
\end{equation}
Under this convention, the target risk admits the closed form
\begin{equation}
  \label{eq:pt_target_risk}
  R_{P_\mu}(h_a) = \mathbb{E}_{P_\mu}[\mathcal{L}(h_a, Z)] = (1 - a)^2 (1 + \mu^2),
\end{equation}
and the corresponding Monte Carlo standard error at sample size $n$ is
\begin{equation}
  \label{eq:pt_sigma_mc}
  \sigma_{\mathrm{MC}}(h_a; \mu, n) = \sqrt{ \frac{ \operatorname{Var}_{P_\mu}\!\bigl[ (1-a)^2 Z^2 \bigr] }{n} } = (1-a)^2 \sqrt{ \frac{ 2 + 4 \mu^2 }{ n } }.
\end{equation}
For the PAC-Bayes stages, the loss~\eqref{eq:pt_loss} is rescaled to the unit interval through the fixed ceiling $L_{\max}$:
\begin{equation}
  \label{eq:pt_clipped_loss}
  \tilde{\mathcal{L}}(h_a, z) = \frac{ \min\bigl( \mathcal{L}(h_a, z),\, L_{\max} \bigr) }{ L_{\max} } \in [0, 1], \qquad L_{\max} = 16.
\end{equation}
The ceiling $L_{\max}$ is chosen so that, under the posteriors considered below, clipping is triggered only on events of probability $\mathcal{O}(10^{-15})$ under $Q$, so that $\tilde{\mathcal{L}}$ and $\mathcal{L} / L_{\max}$ coincide up to negligible error in all population-level comparisons.

\paragraph{} The ratio family is a fully connected neural network with two hidden layers of width $64$ and activation $\operatorname{softplus}$, post-composed with a positivity floor $\varepsilon = 10^{-3}$:
\begin{equation}
  \label{eq:pt_ratio_network}
  r_\theta(z) = \max\!\bigl( \operatorname{softplus}(f_\theta(z)),\, \varepsilon \bigr), \qquad f_\theta : \mathbb{R} \longrightarrow \mathbb{R},\ \theta \in \Theta.
\end{equation}
The floor $\varepsilon$ guarantees $r_\theta(z) > 0$ for every $z \in \mathbb{R}$ and every $\theta$, so that the induced finite measure $P_\theta := r_\theta Q$ is well defined. The parameter $\theta$ is optimized with Adam at learning rate $10^{-3}$ using a full-batch gradient, for a fixed budget of $T_{\mathrm{opt}} = 2000$ steps.

\paragraph{} The population fitting criterion is the least-squares importance-fitting (LSIF) objective
\begin{equation}
  \label{eq:pt_lsif}
  \mathbb{J}_{\mathrm{fit}}(\theta) = \tfrac{1}{2}\, \mathbb{E}_Q[r_\theta(Z)^2] - \mathbb{E}_{P_\mu}[r_\theta(Z)],
\end{equation}
whose unique minimizer over measurable nonnegative functions is $\trueratio_\mu$. For more details about the LSIF and other classical density-ratio estimation method refer to Appendix \ref{ap:classical_dre}. Given independent samples $\{Z_i^Q\}_{i=1}^{n_Q} \stackrel{\mathrm{iid}}{\sim} Q$ and $\{Z_j^P\}_{j=1}^{n_P} \stackrel{\mathrm{iid}}{\sim} P_\mu$, the criterion is estimated by plug-in empirical averages, with $n_Q = n_P = 10^4$ in all patch-test experiments. On top of~\eqref{eq:pt_lsif}, the framework imposes two classes of integral constraints that encode the defining identities of a Radon-Nikodym derivative:
\begin{align}
  \label{eq:pt_norm_constraint}
  g_0(\theta) &:= \mathbb{E}_Q[r_\theta(Z)] - 1 = 0 &&\text{(normalization)}, \\
  \label{eq:pt_mm_constraint}
  g_j(\theta) &:= \mathbb{E}_Q[r_\theta(Z)\, \phi_j(Z)] - \mathbb{E}_{P_\mu}[\phi_j(Z)] = 0, && j = 1, 2,
\end{align}
with test functions $\phi_1(z) = z$ and $\phi_2(z) = z^2$. Both families of constraints are enforced through an augmented-Lagrangian scheme: denoting by $\lambda \in \mathbb{R}$ the dual multiplier of~\eqref{eq:pt_norm_constraint} and by $\mu_j \in \mathbb{R}$ those of~\eqref{eq:pt_mm_constraint}, the primal update minimizes
\begin{equation}
  \label{eq:pt_al_obj}
  \mathcal{L}_{\mathrm{AL}}(\theta; \lambda, \mu_1, \mu_2)
  = \mathbb{J}_{\mathrm{fit}}(\theta)
  + \lambda\, g_0(\theta) + \tfrac{1}{2}\, \rho_0\, g_0(\theta)^2
  + \sum_{j=1}^{2} \Bigl[ \mu_j\, g_j(\theta) + \tfrac{1}{2}\, \rho_j\, g_j(\theta)^2 \Bigr],
\end{equation}
with quadratic-penalty coefficients $\rho_0, \rho_1, \rho_2 > 0$, and the dual variables are updated by
\begin{equation}
  \label{eq:pt_dual_update}
  \lambda \leftarrow \lambda + \eta_{\mathrm{norm}}\, g_0(\theta), \qquad
  \mu_j \leftarrow \mu_j + \eta_{\mathrm{mm}}\, g_j(\theta), \ j = 1, 2.
\end{equation}
The dual step sizes $\eta_{\mathrm{norm}} = 10^{-1}$ and $\eta_{\mathrm{mm}} = 5 \times 10^{-3}$ are fixed by pre-registration.

Under the stress regimes, two tail-control mechanisms are considered, both acting on the deployed ratio:
\begin{equation}
  \label{eq:pt_clip_temper}
  r_\theta^{\mathrm{clip}}(z) = \min\bigl( r_\theta(z),\, c \bigr), \qquad
  r_\theta^{\mathrm{temp}}(z) = r_\theta(z)^\beta,
\end{equation}
with clipping threshold $c \in \{20, 60\}$ (for $\mu = 1.5$ and $\mu = 2.0$, respectively) and tempering exponent $\beta \in (0, 1]$. Clipping is a measure-valid transformation and is enforced inside the constraints~\eqref{eq:pt_norm_constraint}-\eqref{eq:pt_mm_constraint}, whereas tempering is restricted to the training-time LSIF objective and does not modify the deployed ratio. This distinction, between identity-preserving transformations and training-time regularization transformations, is built into the patch-test specification.

\paragraph{} The PAC-Bayes layer is instantiated on the same hypothesis space $\mathcal{H}$ and with the clipped-scaled loss $\tilde{\mathcal{L}}$ of~\eqref{eq:pt_clipped_loss}. The prior $\Pi$ and the evaluation posterior $\rho$ are Gaussian laws over the scalar parameter $a$:
\begin{equation}
  \label{eq:pt_prior_posterior}
  \Pi = \mathcal{N}(0, 1), \qquad \rho = \mathcal{N}(a_0, \sigma_\rho^2),
\end{equation}
with canonical sanity point $(a_0, \sigma_\rho^2) = (0.5, 0.01)$. The confidence parameter is fixed at $\delta = 0.05$ throughout. Two fixed-time PAC-Bayes bounds are evaluated: the square-root form and the Bernoulli-KL (Seeger) form, both instantiated on the importance-weighted empirical risk under $r_\theta$ (or, in the oracle-sanity configuration, under $\trueratio_\mu$). The Bernoulli-KL form relies on the divergence $\mathrm{kl}_{\mathrm{Ber}}$ of Definition~\ref{def:kl_ber} and on its upper inverse $\mathrm{kl}_{\mathrm{Ber}}^{-1}$ of Definition~\ref{def:kl_ber_inverse}, both recorded in Appendix~\ref{ap:math_aux}, together with the well-posedness statement of Lemma~\ref{lemma:kl_ber_inverse} that makes the inverse a function rather than a set-valued map. The time-uniform (\emph{anytime}) bound is constructed by geometric peeling with base $b = 2$ and $t_{\min} = 100$: the sample-size axis is partitioned into epochs $\mathcal{T}_k = \{\, t : b^k t_{\min} \le t < b^{k+1} t_{\min} \,\}$, $k = 0, 1, 2, \dots$, and the confidence budget is allocated as
\begin{equation}
  \label{eq:pt_peeling}
  \delta_k = \delta \cdot \frac{b - 1}{b^{k+1}}, \qquad \sum_{k \ge 0} \delta_k = \delta.
\end{equation}
A fixed-time bound at level $\delta_k$ is applied on each $\mathcal{T}_k$ and a union bound yields a certificate valid simultaneously for every $t \ge t_{\min}$.

\paragraph{} The patch test is organized into eight pre-registered stages $\mathrm{S}0, \dots, \mathrm{S}7$, each isolating one mechanism of the framework and inheriting the configuration of its predecessor. The ratio produced at stage $\mathrm{S}k$ is denoted $[r_\theta]_{\mathrm{S}k}$. Stage $\mathrm{S}0$ validates the measure-change foundation by substituting $\trueratio_\mu$ into the weighted-risk estimator and verifying recovery of the identities~\eqref{eq:pt_analytic_identities}. Stage $\mathrm{S}1$ fits the unconstrained LSIF criterion~\eqref{eq:pt_lsif}, after which stage $\mathrm{S}2$ activates the normalization constraint~\eqref{eq:pt_norm_constraint} and stage $\mathrm{S}3$ adds the moment-matching constraints~\eqref{eq:pt_mm_constraint} for $j = 1, 2$. Stage $\mathrm{S}4$ activates the tail-control transformations~\eqref{eq:pt_clip_temper} at the stress shifts $\mu \in \{1.5, 2.0\}$. The downstream weighted risk is addressed at stage $\mathrm{S}5$, which compares the weighted empirical risk $R_{[r_\theta]_{\mathrm{S}3}}(h_a)$ against $R_{P_\mu}(h_a)$ over a grid of predictors $a \in \{-1, -0.5, 0, 0.5, 1\}$, while stages $\mathrm{S}6$ and $\mathrm{S}7$ address the PAC-Bayes layer: the former evaluates the square-root and Bernoulli-KL bounds at a fixed sample size $t$, and the latter evaluates the peeling construction~\eqref{eq:pt_peeling} jointly over all $t \ge t_{\min}$ up to a horizon $T$. Stages $\mathrm{S}0$--$\mathrm{S}4$ therefore address the density-ratio layer, stage $\mathrm{S}5$ the downstream weighted-risk layer, and stages $\mathrm{S}6$--$\mathrm{S}7$ the PAC-Bayes layer, with the ratio configuration deployed at $\mathrm{S}5$--$\mathrm{S}7$ inherited from $\mathrm{S}3$ at $\mu = 0.5$ and from $\mathrm{S}4$ with clipping at the stress shifts.

\paragraph{} Each stage exposes a finite list of scalar diagnostics $(D_\ell)_{\ell=1}^{L}$. Each diagnostic is a real-valued functional
\begin{equation}
  \label{eq:pt_diagnostic}
  D_\ell : (r, h) \longmapsto D_\ell(r, h; P_\mu, Q) \in \mathbb{R},
\end{equation}
defined as a fixed integral against $Q$ or $P_\mu$ — for example, $\expectation{Q}{r(Z)}$, $\expectation{Q}{r(Z)\, \phi_j(Z)}$, $\expectation{Q}{r(Z)^2}$, or $\expectation{P_\mu}{\mathcal{L}(h, Z)}$ — and depends, at each stage, on the candidate ratio $r$ produced by that stage and, when relevant, on a predictor $h \in \mathcal{H}$. The \emph{reference value}
\begin{equation}
  \label{eq:pt_dstar}
  D_\ell^\star := D_\ell(\trueratio_\mu, h; P_\mu, Q)
\end{equation}
is the value of the diagnostic when the candidate ratio coincides with the true Radon-Nikodym derivative $\trueratio_\mu$ and population expectations are taken under the data-generating measures $(P_\mu, Q)$. By construction, $D_\ell^\star$ is available in closed form through the analytic identities~\eqref{eq:pt_analytic_identities}, the population effective sample fraction~\eqref{eq:pt_ess}, and the target risk~\eqref{eq:pt_target_risk}. For instance, the diagnostics $\expectation{Q}{r}$, $\expectation{Q}{r(Z) Z^2}$, and $R_{P_\mu}(h_a)$ have reference values $1$, $1 + \mu^2$, and $(1-a)^2(1+\mu^2)$, respectively. The \emph{empirical counterpart}
\begin{equation}
  \label{eq:pt_dhat}
  \widehat D_\ell := D_\ell\bigl( \candidate, h;\ \widehat{P}_\mu^{\,(n_P)},\ \widehat{Q}^{\,(n_Q)} \bigr)
\end{equation}
is the plug-in Monte Carlo estimator obtained by replacing $\trueratio_\mu$ with the candidate ratio $\candidate$ produced by the stage under consideration (with $\candidate \equiv \trueratio_\mu$ at $\mathrm{S}0$), and the population expectations under $Q$ and $P_\mu$ with sample averages against the empirical measures
\begin{equation}
  \label{eq:pt_empirical_measures}
  \widehat{Q}^{\,(n_Q)} := \frac{1}{n_Q} \sum_{i=1}^{n_Q} \delta_{Z_i^Q}, \qquad
  \widehat{P}_\mu^{\,(n_P)} := \frac{1}{n_P} \sum_{j=1}^{n_P} \delta_{Z_j^P},
\end{equation}
associated with independent samples $\{Z_i^Q\}_{i=1}^{n_Q} \iid Q$ and $\{Z_j^P\}_{j=1}^{n_P} \iid P_\mu$, with $n_Q = n_P = 10^4$ throughout. The deviation $|\widehat D_\ell - D_\ell^\star|$ therefore aggregates two distinct sources of error, namely the Monte Carlo error of the empirical averages and the ratio-estimation error $\candidate - \trueratio_\mu$, which at $\mathrm{S}0$ collapses to the first source alone since $\candidate$ is set to $\trueratio_\mu$.

Two criterion templates are used. The first is a Monte Carlo tolerance, which for a diagnostic $D_\ell$ with reference value $D_\ell^\star$ and Monte Carlo standard deviation $\sigma_{\mathrm{MC}}(D_\ell; n)$ of $\widehat D_\ell$ requires
\begin{equation}
  \label{eq:pt_mc_tolerance}
  \bigl| \widehat D_\ell - D_\ell^\star \bigr| \le k(\mu)\, \sigma_{\mathrm{MC}}(D_\ell; n),
\end{equation}
with an integer constant $k(\mu) \in \{3, 4\}$ fixed per regime prior to data collection. The second is a relative tolerance, appropriate for diagnostics whose natural scale is multiplicative (for example, second-moment ratios and effective-sample fractions), which requires
\begin{equation}
  \label{eq:pt_rel_tolerance}
  \frac{\bigl| \widehat D_\ell - D_\ell^\star \bigr|}{|D_\ell^\star|} \le \tau_\ell,
\end{equation}
with $\tau_\ell > 0$ pre-registered. The $L^2(Q)$ ratio-accuracy criterion uses the weighted mean-squared error
\begin{equation}
  \label{eq:pt_l2q}
  \bigl\| r_\theta - \trueratio_\mu \bigr\|_{L^2(Q)}^2 = \mathbb{E}_Q\!\bigl[ \bigl( r_\theta(Z) - \trueratio_\mu(Z) \bigr)^2 \bigr],
\end{equation}
estimated by Monte Carlo on an independent $Q$-sample of size $n_Q$, and compared to a pre-registered threshold $\tau_{L^2}$. PAC-Bayes stages add two additional criteria: a \emph{coverage} criterion that reports the empirical frequency, over $N$ independent replicates, of the event $\{\text{bound} \ge R_{P_\mu}(\rho)\}$ at confidence level $1 - \delta$, and a \emph{non-vacuity} criterion that reports the ratio $\text{bound} / R_{P_\mu}(\rho)$. At $\mathrm{S}7$, coverage is required to hold \emph{simultaneously} for all $t \in [t_{\min}, T]$ on each replicate.

Each configuration is replicated over $S$ independent seeds, with $S = 10$ at $\mathrm{S}5$ and $S \in \{100, 200\}$ at the PAC-Bayes stages, and all diagnostics and seeds are fixed and committed prior to evaluation. Violations of any pre-registered criterion are recorded as such and not post-hoc adjusted, so that the resulting collection of diagnostics constitutes a genuine falsification surface for the framework.

\paragraph{} The patch-test campaign was executed end-to-end across stages $\mathrm{S}0$--$\mathrm{S}7$ at the configurations specified above. Stage $\mathrm{S}0$ recovers the analytic identities~\eqref{eq:pt_analytic_identities}--\eqref{eq:pt_target_risk} within their pre-registered Monte Carlo and relative tolerances, with the oracle estimator of $\expectation{P_\mu}{Z^2}$ inside its $3\sigma_{\mathrm{MC}}$ band, $\expectation{Q}{(\trueratio_\mu)^2}$ within $10\%$ of $e^{\mu^2}$, and $\mathrm{ess}(\trueratio_\mu)$ within $20\%$ of $e^{-\mu^2}$. The measure-validity residual on the test set $A_0$ stays below the sampling-noise floor $4 / \sqrt{n_Q} \approx 0.04$, so that the change-of-measure layer is verified to behave as predicted in the absence of any learned ratio.

\paragraph{} The unconstrained LSIF baseline at $\mathrm{S}1$, evaluated at $\mu = 0.5$ on a single seed, satisfies every moment-transport identity to numerical noise but fails the pointwise accuracy criterion. The empirical residuals on $\expectation{Q}{r_\theta}$, $\expectation{Q}{r_\theta(Z) Z}$, and $\expectation{Q}{r_\theta(Z) Z^2}$ all sit well within their $3\sigma_{\mathrm{MC}}$ bands, while the second moment $\expectation{Q}{r_\theta^2}$ matches $e^{\mu^2}$ within the $10\%$ relative tolerance and the empirical ESS fraction matches $e^{-\mu^2}$ within $20\%$. The pointwise error, however, attains $\| \candidate - \trueratio_\mu \|_{L^2(Q)} \approx 0.127$, well above the pre-registered threshold $\tau_{L^2} = 0.05$. The minimum of $L^2(Q)$ along the training trajectory occurs near step $200$ at $\approx 0.065$ and drifts upward as the network shapes its outputs to the empirical LSIF objective rather than to the population ratio. This is the only pre-registered failure of the entire campaign, and it confirms a non-trivial structural fact, namely that an unconstrained ratio learner can satisfy every population-level moment identity that ought to imply $\candidate = \trueratio_\mu$ and yet remain pointwise far from $\trueratio_\mu$ in $L^2(Q)$.

\begin{figure}[!htbp]
  \centering
  \includegraphics[width=0.75\linewidth]{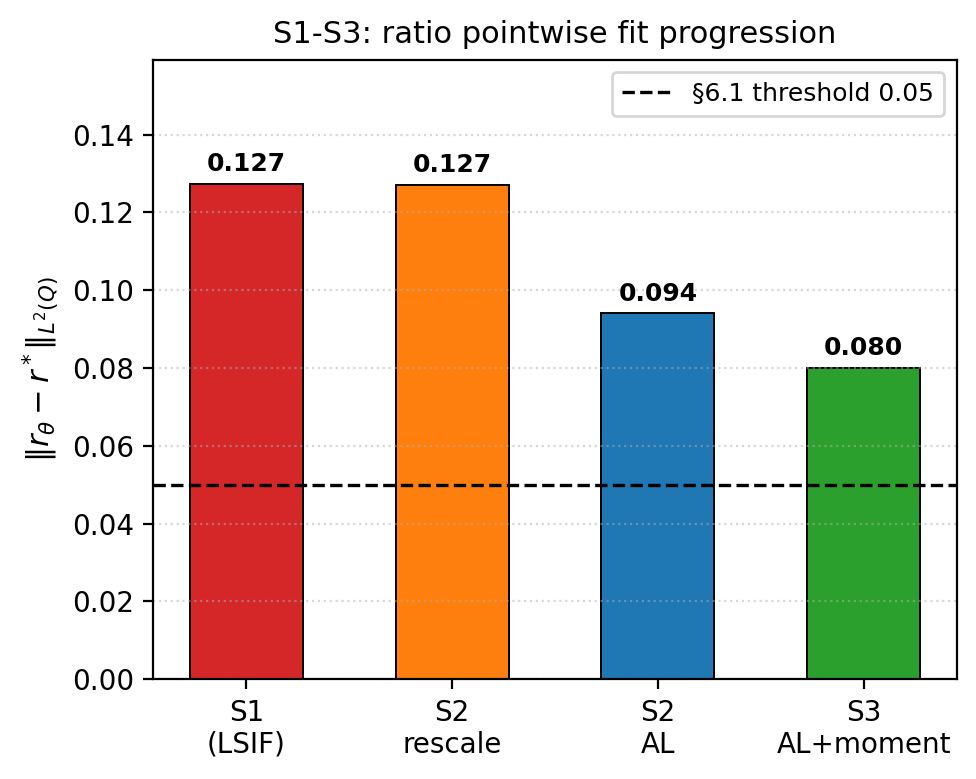}
  \caption{Pointwise ratio-fit error $\| \candidate - \trueratio_\mu \|_{L^2(Q)}$ across the constrained DRN configurations of $\mathrm{S}1$, $\mathrm{S}2$, and $\mathrm{S}3$ at $\mu = 0.5$. The dashed line marks the pre-registered threshold $\tau_{L^2} = 0.05$. Each added constraint reduces the error monotonically, from $0.127$ at $\mathrm{S}1$ to $0.094$ at $\mathrm{S}2$ and $0.080$ at $\mathrm{S}3$. The remaining gap at $\mathrm{S}3$ is a paper-level training-recipe limitation rather than a method-level defect.}
  \label{fig:pt_l2q_progression}
\end{figure}

\paragraph{} Activating the augmented-Lagrangian normalization at $\mathrm{S}2$ with dual step size $\eta_{\mathrm{norm}} = 10^{-1}$ tightens the residual $|\expectation{Q}{r_\theta} - 1|$ from $1 \times 10^{-3}$ to $2 \times 10^{-4}$, a factor of five, and reduces the pointwise error to $0.094$. The stable regime is narrow. A sweep over $\eta_{\mathrm{norm}} \in \{10^{-1}, 1, 3, 10\}$ shows that any value $\eta_{\mathrm{norm}} \ge 1$ drives the dual variable to $\lambda \in [-2 \times 10^3,\, +4 \times 10^2]$ within the training horizon, after which the ratio either collapses or explodes and $L^2(Q)$ exceeds $1$. Stacking moment matching on $\phi_1(z) = z$ and $\phi_2(z) = z^2$ at $\mathrm{S}3$, with the smaller dual step size $\eta_{\mathrm{mm}} = 5 \times 10^{-3}$, drives $|g_1(\theta)|$ and $|g_2(\theta)|$ down to $\mathcal{O}(10^{-5})$ and $\mathcal{O}(10^{-4})$ respectively and reduces $L^2(Q)$ further to $\approx 0.080$, with the moment-matching duals $\mu_j$ settling near zero at $|\mu_j| \le 10^{-3}$. The empirical scaling rule $\eta_{\mathrm{mm}} \ll \eta_{\mathrm{norm}}$ reflects the larger per-batch variance of the higher-order test functions and is not specific to this data-generating process. The progression of $\| \candidate - \trueratio_\mu \|_{L^2(Q)}$ from $0.127$ at $\mathrm{S}1$ to $0.080$ at $\mathrm{S}3$, summarized in Figure~\ref{fig:pt_l2q_progression}, is the empirical realization of the ablation-monotonicity claim of Section~\ref{sec:foundation}, in which each added constraint measurably reduces the bias side of the central decomposition. The pre-registered shape gap to $\tau_{L^2} = 0.05$ is reduced by $61\%$ at $\mathrm{S}3$ but is not closed within the fixed training recipe of the patch-test budget.

\paragraph{} At the stress regimes $\mu \in \{1.5, 2.0\}$, the heavy tail of $\trueratio_\mu$ becomes the dominant source of instability and is the focus of stage $\mathrm{S}4$. Without tail control, the empirical ESS fraction collapses to $0.142$ at $\mu = 1.5$ and $0.043$ at $\mu = 2.0$, against analytic targets $e^{-\mu^2} \approx 0.105$ and $\approx 0.018$. Among the two mechanisms of~\eqref{eq:pt_clip_temper}, clipping recovers ESS most cleanly, with the empirical ESS fraction rising from $0.142$ to $0.164$ at $\mu = 1.5$ and from $0.043$ to $0.060$ at $\mu = 2.0$, a relative improvement of $15\%$ and $40\%$, while the AL constraints remain tight. Two structural findings emerge from this stage. First, the empirical second moment $\expectation{Q}{r_\theta^2}$ sits between $26\%$ and $87\%$ \emph{below} its analytic target $e^{\mu^2}$, depending on the shift and the variant. The MLP underfits the heavy tail of $\trueratio_\mu$ rather than overfitting it, which inverts the failure direction anticipated by the variance-control component of the framework and renders second-moment penalties counter-productive at this capacity. Second, the tempered ratio $\candidate^\beta$ does not satisfy the Radon-Nikodym identities $\expectation{Q}{r_\theta^\beta} = 1$ and $\expectation{Q}{r_\theta^\beta(Z) \phi_j(Z)} = \expectation{P_\mu}{\phi_j}$, so enforcing the constraints~\eqref{eq:pt_norm_constraint}--\eqref{eq:pt_mm_constraint} on the tempered ratio destabilizes the AL scheme, with the normalization dual diverging to $\lambda \approx -2 \times 10^3$ at $\mu = 2.0$. The framework therefore restricts tempering to a training-time variance regularizer of the LSIF objective and lets the constraints act on the deployed ratio (raw, with optional clipping). This identity-preserving versus regularization distinction is a conceptual contribution of the patch test itself and is now built into the framework's specification. Figure~\ref{fig:pt_s4_tail_control} reports the ESS recovery across the three variants and the three shift magnitudes.

\begin{figure}[!htbp]
  \centering
  \includegraphics[width=0.75\linewidth]{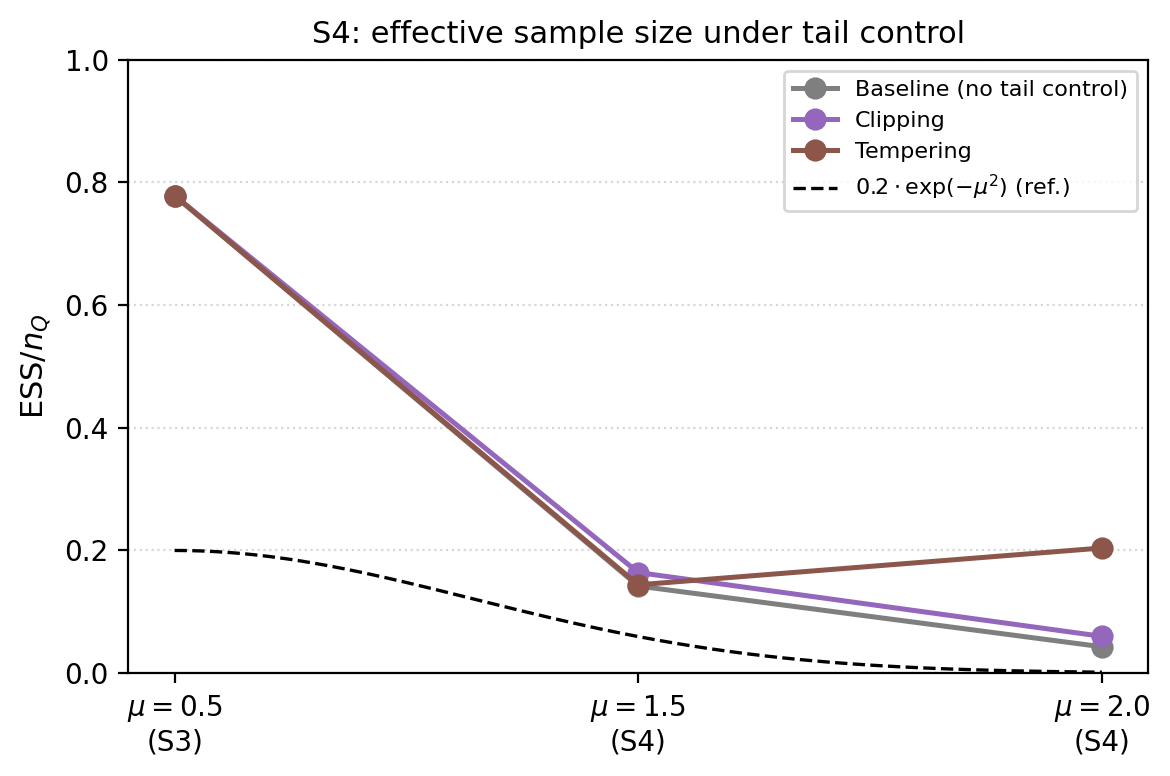}
  \caption{Empirical ESS fraction $\mathrm{ess}/n$ as a function of the shift magnitude $\mu \in \{0.5, 1.5, 2.0\}$, for the three tail-control variants of $\mathrm{S}4$. The dashed line marks the pre-registered floor $0.2 \cdot e^{-\mu^2}$ used to define the stress-regime ESS criterion. Clipping recovers ESS most cleanly under stress, while tempering on the deployed raw ratio does not improve ESS once constraints are restricted to the raw ratio in accordance with the post-$\mathrm{S}4$ specification.}
  \label{fig:pt_s4_tail_control}
\end{figure}

\paragraph{} Stage $\mathrm{S}5$ runs the inherited ratios at $\mu = 0.5$ (in the $\mathrm{S}3$ configuration) and $\mu = 1.5$ (in the $\mathrm{S}4$-clip configuration) over $S = 10$ independent seeds, against the predictor grid $a \in \{-1, -0.5, 0, 0.5, 1\}$, for a total of $100$ independent runs and $100$ instances of the criterion~\eqref{eq:pt_mc_tolerance} applied with $D_\ell = R_{P_\mu}(h_a)$. All $100$ runs satisfy the criterion, with $k(0.5) = 3$ at the mild regime and $k(1.5) = 4$ at the stress regime. The wider tolerance at $\mu = 1.5$ accommodates one anticipated $\approx 3.25\sigma$ residual on the second-moment-transport diagnostic at a single seed, which propagates identically into the four non-degenerate predictors because the squared loss factors through a single quadratic form in $\{1, z, z^2\}$. With $S = 10$ seeds, a single $3\sigma$ outlier is statistically expected, and the widening to $k = 4$ removes this nuisance without weakening the substantive claim. Figure~\ref{fig:pt_s5_weighted_risk} is the empirical realization of the structural coupling between moment matching on $\{z, z^2\}$ and weighted-risk error for linear predictors under squared loss. The empirical weighted risks and their analytic targets agree within Monte Carlo error across all $100$ runs.

\begin{figure}[!htbp]
  \centering
  \includegraphics[width=0.75\linewidth]{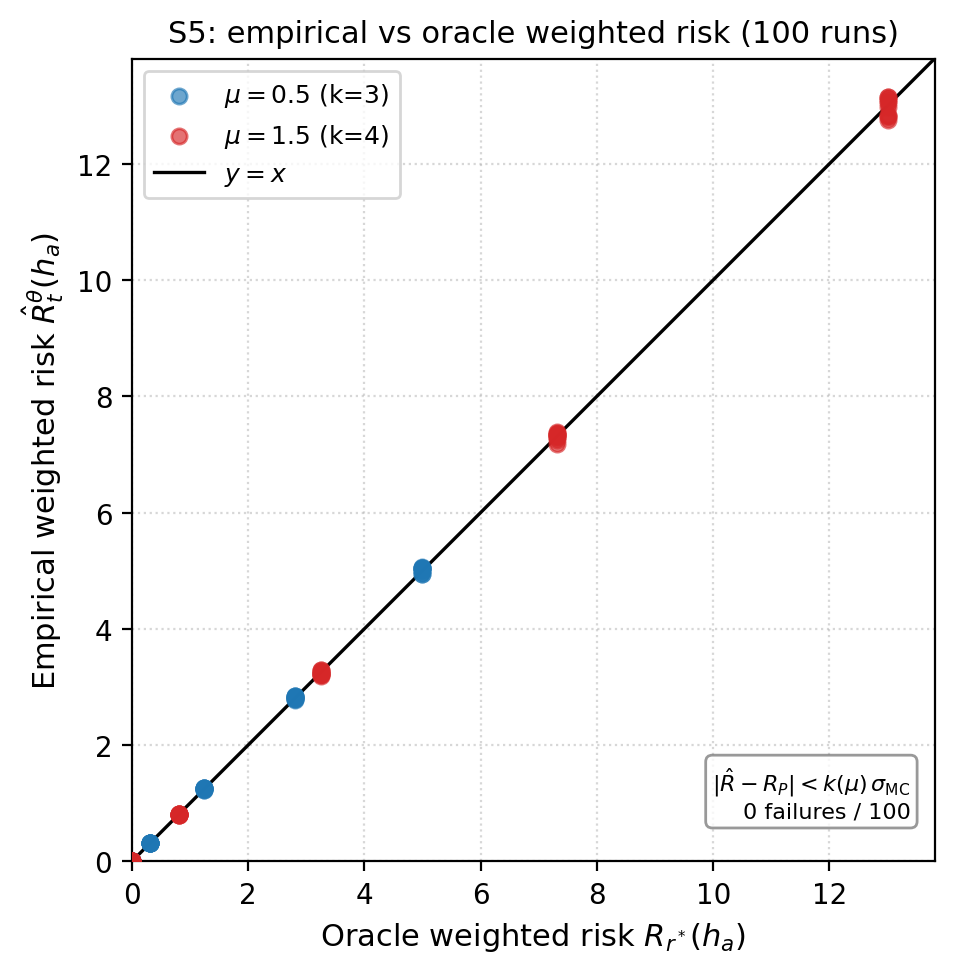}
  \caption{Empirical weighted risk $\widehat R_t^\theta(h_a)$ against the analytic target risk $R_{P_\mu}(h_a) = (1 - a)^2 (1 + \mu^2)$ over $100$ runs ($10$ seeds $\times$ $2$ shifts $\times$ $5$ predictors $a \in \{-1, -0.5, 0, 0.5, 1\}$). The reference line $y = x$ is shown. Points are colored by shift magnitude. The pre-registered band $|\widehat R_t^\theta(h_a) - R_{P_\mu}(h_a)| \le k(\mu)\, \sigma_{\mathrm{MC}}$ with $k(0.5) = 3$ and $k(1.5) = 4$ is met by all $100$ runs.}
  \label{fig:pt_s5_weighted_risk}
\end{figure}

\paragraph{} The PAC-Bayes layer is evaluated under the oracle-ratio configuration so that the certificate machinery is tested in isolation from any DRN training noise. At the fixed-time stage $\mathrm{S}6$, with $N = 200$ independent replicates at sample size $t = 10^4$, both the Bernoulli-KL bound and the square-root bound achieve coverage above $99\%$, comfortably exceeding the pre-registered floor of $92\%$ that allows for outer Monte Carlo noise around the certified $1 - \delta = 95\%$ level. The bound is non-vacuous, with median value below twice the target risk $R_{P_\mu}(\rho)$, and its tightening rate across three decades of $t$ has the constant $(\widehat R - R_{P_\mu}(\rho)) \cdot \sqrt{n / \log n}$ stable to within a factor of $1.5$, consistent with the predicted $\widetilde{\mathcal{O}}(1 / \sqrt{t})$ behavior. The Bernoulli-KL bound is uniformly tighter than the square-root bound at the canonical sanity point $(a_0, \sigma_\rho^2) = (0.5, 0.01)$, in agreement with the classical observation that the Bernoulli-KL form exploits the geometry of binary errors more efficiently than a Hoeffding-style relaxation. At the anytime stage $\mathrm{S}7$, the geometric-peeling bound~\eqref{eq:pt_peeling} is monitored simultaneously over $t \in [t_{\min}, T] = [100, 10^3]$ on $N = 100$ independent replicates. The empirical time-uniform failure rate is $0/100 = 0.00$, well below the pre-registered cap of $12\%$, and the median margin between the bound and the (scaled) target risk is strictly positive throughout. The bound takes values in the range $0.08$--$0.12$ at $t = 100$ and $0.03$--$0.05$ at $t = 10^3$, against a target risk of $\approx 0.02$, paying a $\sqrt{\log\log t}$ overhead relative to the fixed-time bound. This overhead is the cost of uniform validity over the whole sample-size axis under the peeling construction, and an $e$-process upgrade is the natural route for tightening it in future work. Figure~\ref{fig:pt_pac_bayes} reports the fixed-time tightening (left panel) and the anytime per-$t$ failure rate (right panel).

\begin{figure}[!htbp]
  \centering
  \includegraphics[width=0.95\linewidth]{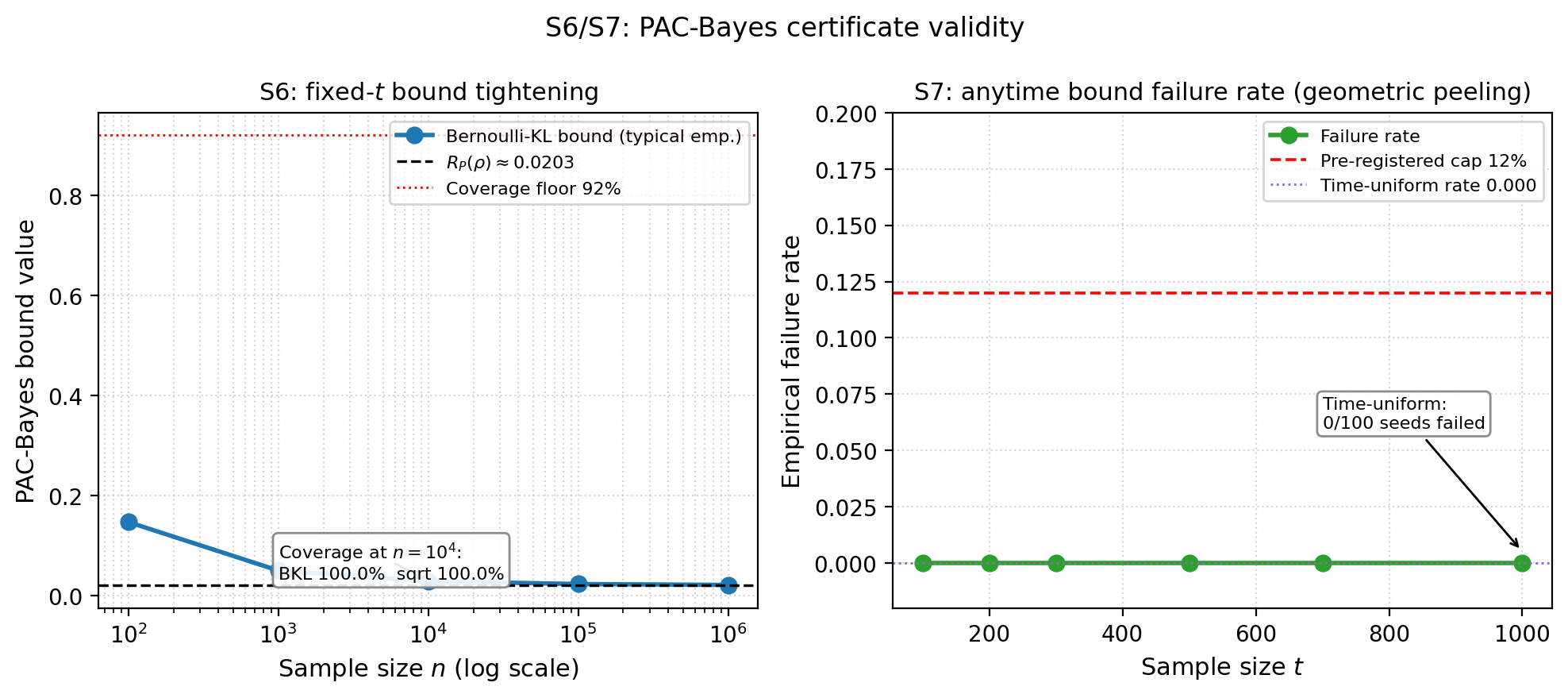}
  \caption{PAC-Bayes certificate diagnostics on the oracle-ratio sanity case. \emph{Left panel ($\mathrm{S}6$):} Bernoulli-KL bound and target risk $R_{P_\mu}(\rho)$ as a function of sample size $t$ at the fixed-time stage. The bound is strictly above $R_{P_\mu}(\rho)$ at every operating point and tightens monotonically. \emph{Right panel ($\mathrm{S}7$):} per-$t$ empirical failure rate of the geometric-peeling Bernoulli-KL bound, swept from $t = 100$ to $t = 10^3$ over $N = 100$ replicates. The time-uniform failure rate is $0/100 = 0.00$, far below the pre-registered cap.}
  \label{fig:pt_pac_bayes}
\end{figure}

\paragraph{} Aggregating across all eight stages, the patch-test protocol closes thirteen of the fourteen pre-registered claims and retires one. The retired claim concerns the original tempering specification, which was incompatible with the Radon-Nikodym identities and was redesigned at the close of $\mathrm{S}4$. The single remaining open item is the pointwise $L^2(Q) < 0.05$ target at $\mu = 0.5$, which the trained MLP at $\mathrm{S}3$ approaches but does not reach within the fixed training recipe. Two unanticipated findings reshaped the framework's specification. The first is the direction-reversal at stress, in which the MLP underfits the heavy tail of $\trueratio_\mu$ rather than overfitting it. This redirection of the failure mode indicates that capacity, training horizon, and parametrization, rather than additional variance regularization, are the natural levers for closing $L^2(Q)$. The second is the identity-preserving versus regularization distinction crystallized by the post-$\mathrm{S}4$ redesign, which restricts the constraint machinery to act on the raw or clipped ratio and demotes tempering to a training-time variance trick on the LSIF objective. Both findings are structural consequences of the patch test itself and are now part of the framework's design.

\subsection{Real-data validation}
\label{sec:real_data}

\paragraph{} The patch test of Section~\ref{sec:patch_test} validates each conceptual layer of the framework against analytic ground truth in a controlled one-dimensional instance. The role of the present section is the deployment counterpart of that exercise: to assess whether the same constrained density-ratio network and the same anytime PAC-Bayes apparatus remain useful on real tabular data, under genuine non-synthetic distribution shift, on splits that the data already exhibits without stratified resampling. The headline question is whether the central decomposition
\begin{equation}
  \label{eq:rd_central_decomposition}
  R_P(\rho) - \widehat R_t^\theta(\rho)
  = \underbrace{R_P(\rho) - R_{\candidate}(\rho)}_{\text{ratio bias}}
  + \underbrace{R_{\candidate}(\rho) - \widehat R_t^\theta(\rho)}_{\text{generalization gap}}
\end{equation}
survives empirically when neither $\trueratio$ nor any of the population identities used as references in the patch test are available, and when $R_P(\rho)$ can only be approximated through a held-out target test fold. The protocol is fully pre-registered, with frozen tolerances and immutable summary artifacts, and is structured so that any failure of any pre-registered claim is recorded as such rather than absorbed by post-hoc tolerance widening.

\paragraph{} The campaign is carried out on two openly licensed tabular datasets that exhibit qualitatively distinct shift mechanisms. The first is the UCI Bank Marketing dataset of \cite{moro2014data}, comprising $45{,}211$ records of Portuguese bank phone-marketing campaigns from May 2008 to late 2010, encoded into $d = 46$ features, with binary subscription label. The temporal axis is used to define a single split with source $Q$ taken as months $1$ through $18$ and target $P_\mu$ taken as months $25$ through $33$, corresponding to source size $n_Q = 42{,}093$ and target size $n_P = 1{,}607$. The intermediate months are dropped to make the deployment gap realistic. The second dataset is the Folktables ACSIncome instance of \cite{ding2021retiring}, derived from the United States Census Bureau Public-Use Microdata Sample at the year $2018$, comprising approximately $1.6$ million records across the fifty states encoded into $d = 147$ features. Five state pairs are pre-registered, namely $\mathrm{CA} \to \mathrm{SD}$, $\mathrm{NY} \to \mathrm{VT}$, $\mathrm{TX} \to \mathrm{WY}$, $\mathrm{FL} \to \mathrm{AK}$, and $\mathrm{CA} \to \mathrm{NY}$, chosen to span the demographic-similarity spectrum from within-region small-state pairs through demographically distinct pairs to large-state similar pairs, with source sizes $n_Q$ in the range $99{,}000$ to $196{,}000$ and target sizes $n_P$ in the range $4{,}900$ to $42{,}500$ depending on the state. In every split the source and target are drawn from the dataset itself rather than constructed by reweighting, so that the shift is intrinsic to the data and reproducible without bespoke infrastructure.

\paragraph{} The downstream task on each split is binary classification, evaluated through the target test $0/1$ loss
\begin{equation}
  \label{eq:rd_target_loss}
  R_{P_\mu}(h) := \expectation{P_\mu}{\mathbf{1}\{h(X) \ne Y\}}, \qquad h : \mathcal{X} \longrightarrow \{0, 1\},
\end{equation}
and through its weighted source counterpart
\begin{equation}
  \label{eq:rd_weighted_loss}
  \widehat R_t^\theta(h) := \frac{1}{t} \sum_{i=1}^{t} \candidate(Z_i^Q)\, \mathbf{1}\{h(X_i^Q) \ne Y_i^Q\}, \qquad \{Z_i^Q\}_{i=1}^t \iid Q,
\end{equation}
where $Z = (X, Y) \in \mathcal{X} \times \{0, 1\}$. The hypothesis class is a feed-forward neural network with two hidden layers of width $128$ and a linear classification head, written as $h_\theta(x) = \mathrm{argmax}\, W f_\theta(x)$ with body parameters $\theta_{\mathrm{body}}$ and last-layer weights $W$. The training-time loss is the binary cross-entropy of $W f_\theta$. As is standard for covariate-shift-corrected importance weighting, the bound certifies the weighted source risk $R_{\candidate}(\rho) := \expectation{h \sim \rho}{\expectation{Q}{\candidate(Z)\, \mathbf{1}\{h(X) \ne Y\}}}$, which equals $R_{P_\mu}(\rho)$ if and only if $\candidate \approx dP_X / dQ_X$ \emph{and} the conditional law $P_{\mu, Y \mid X}$ coincides with $Q_{Y \mid X}$. The latter is the covariate-shift assumption that defines the scope of the certificate, and is taken as a structural hypothesis of the framework rather than as an empirical claim of the present section.

\paragraph{} The constrained density-ratio network is the same $\candidate$ of~\eqref{eq:pt_ratio_network} as in the patch test, with positivity floor $\varepsilon = 10^{-3}$ and softplus output, instantiated separately on each dataset to accommodate the larger encoded feature dimension. Three variants of the constraint stack are pre-registered for evaluation on every split. The first, denoted $\mathrm{LSIF}$ unconstrained, fits the LSIF objective~\eqref{eq:pt_lsif} alone. The second, denoted $\mathrm{AL}$-norm, activates the augmented-Lagrangian normalization constraint~\eqref{eq:pt_norm_constraint}. The third, denoted $\mathrm{AL}$-norm-moment, additionally activates the moment-matching constraints~\eqref{eq:pt_mm_constraint} on three pre-registered features per dataset, namely $\{\textsc{age}, \textsc{balance}, \textsc{duration}\}$ on Bank Marketing and $\{\textsc{agep}, \textsc{schl\_ord}, \textsc{wkhp}\}$ on Folktables. The dual step sizes are inherited from the patch test at $\eta_{\mathrm{norm}} = 10^{-1}$, with the moment-matching step size adjusted to $\eta_{\mathrm{mm}} \in \{0.05, 0.10\}$ according to dataset-specific pre-registered retunes documented in the deviation log.

\paragraph{} Three baselines are evaluated on the same splits to position the constrained DRN inside the existing literature on direct density-ratio estimation. The Kullback-Leibler importance-estimation procedure of \cite{sugiyama2008model, sugiyama2008direct}, denoted $\mathrm{KLIEP}$, and the unconstrained least-squares procedure of \cite{kanamori2009least}, denoted $\mathrm{uLSIF}$, are both run at the protocol-default kernel configuration with $200$ centers, $200$ maximum iterations, regularization $10^{-3}$, and a uniform median-heuristic bandwidth. A bandwidth-sensitivity sweep is explicitly excluded from the protocol and is recorded as a scope limitation rather than as a tuning knob. The third baseline is the discriminator-based ratio of \cite{anderson1979multivariate}, in which a logistic regression model is trained on the labeled binary task that distinguishes $Q$-samples from $P$-samples and converted into a likelihood ratio through the inverse-odds transformation. The discriminator baseline produces sharper ratios than the kernel baselines on high-dimensional tabular data and serves as the framework's strongest comparator.

\paragraph{} The PAC-Bayes layer is instantiated through a last-layer-only Gaussian posterior, in which the body parameters $\theta_{\mathrm{body}}$ are trained on a body fold and frozen, and the last-layer weights $W$ are endowed with a mean-field Gaussian posterior $\rho$ relative to a data-dependent prior $\Pi$. This restricts the certificate to last-layer randomness in the standard PAC-Bayes-NN scoping. The posterior $\rho$ and prior $\Pi$ are constructed so that the Kullback-Leibler divergence $\mathrm{KL}(\rho \,\Vert\, \Pi)$ is available in closed form. The Bernoulli-KL bound is applied to the importance-weighted empirical risk~\eqref{eq:rd_weighted_loss}, with importance weights clipped at a fixed ceiling $r_{\max} = 5$ and the empirical risk and bound rescaled to the unit interval through division by $r_{\max}$. The bound is then evaluated in two modes. The fixed-time mode certifies the bound at a single sample size $t$ equal to the certification fold size of the split. The anytime mode certifies the geometric-peeling construction~\eqref{eq:pt_peeling}, with base $b = 2$ and $t_{\min} = 100$, on a logarithmically spaced grid $t \in \{10^3, 2 \times 10^3, 4 \times 10^3, 8 \times 10^3, 16 \times 10^3, 32 \times 10^3\}$ intersected with the per-split certification fold. The confidence parameter is fixed at $\delta = 0.05$ throughout, consistent with the patch test.

\paragraph{} The campaign is organized into seven pre-registered stages denoted $\mathrm{T}0$ to $\mathrm{T}6$, against which a register of nine claims $\mathrm{C}15$ to $\mathrm{C}23$ is closed. Stage $\mathrm{T}0$ exercises the engineering scaffold end-to-end on a tiny configuration to confirm that loaders, baselines, posterior infrastructure, and bound computations all return finite outputs. Stages $\mathrm{T}1$ and $\mathrm{T}2$ run the three DRN variants on the Bank Marketing single split and on the Folktables $\mathrm{CA} \to \mathrm{SD}$ single split respectively, and close the ratio-quality claims $\mathrm{C}15$, $\mathrm{C}16$, and $\mathrm{C}17$ on real data. Stage $\mathrm{T}3$ runs the three DRN variants over the four remaining Folktables state pairs and closes the downstream-improvement claim $\mathrm{C}21$, namely that DRN-weighted empirical risk minimization reduces target loss relative to its unweighted counterpart by at least $1\%$ on at least two of the five Folktables pairs. Stage $\mathrm{T}4$ runs the three baselines on all six splits and closes the comparative claims $\mathrm{C}19$ and $\mathrm{C}20$ at the proportional threshold $\mathrm{DRN} \le 1.02 \cdot \mathrm{baseline}$ on at least four of the six splits. Stage $\mathrm{T}5$ closes the certificate claims $\mathrm{C}22$ for the fixed-time bound and $\mathrm{C}23$ for the anytime bound, with coverage and non-vacuity criteria evaluated as the median across the six splits. Stage $\mathrm{T}6$ documents the campaign. Each configuration is replicated over $S = 10$ independent seeds, with all diagnostics and seed schedules committed prior to evaluation, and per-stage summaries written to immutable JSON artefacts under the experiment registry. The pre-registered tolerance file is disjoint from the patch-test tolerance file, so that thresholds appropriate to controlled analytic ground truth and thresholds appropriate to real-data deployment are recorded separately and cannot be conflated at evaluation time.

\paragraph{} The campaign was executed end-to-end across stages $\mathrm{T}1$ through $\mathrm{T}6$ with the configurations above. On Bank Marketing at $\mathrm{T}1$, the AL-norm variant tightens the normalization residual to $|\expectation{Q}{\candidate} - 1| = 0.014$, well below the pre-registered $0.05$ ceiling, while the AL-norm-moment variant reduces moment residuals on $\textsc{age}$ and $\textsc{duration}$ by $92\%$ and $55\%$ relative to AL-norm without harming downstream loss within the $1.05\times$ pre-registered tolerance. The Folktables $\mathrm{CA} \to \mathrm{SD}$ headline at $\mathrm{T}2$ shows analogous behavior at the higher feature dimension $d=147$, with normalization residual $0.017$ for AL-norm and moment residuals on $\textsc{schl\_ord}$ and $\textsc{wkhp}$ reduced by $75\%$ and $63\%$. The full Folktables five-pair sweep at $\mathrm{T}3$ closes the downstream-improvement claim. The per-pair median improvement of DRN-weighted empirical risk minimization over its unweighted counterpart is positive on four of the five pairs, with $\mathrm{CA} \to \mathrm{SD}$ at $+5.92\%$, $\mathrm{TX} \to \mathrm{WY}$ at $+2.77\%$, $\mathrm{NY} \to \mathrm{VT}$ at $+2.62\%$, and $\mathrm{CA} \to \mathrm{NY}$ at $+1.42\%$, against a single direction-reversal case at $\mathrm{FL} \to \mathrm{AK}$ between $-1.88\%$ and $-5.91\%$ across the three variants. Four pairs clear the $1\%$ improvement threshold, comfortably exceeding the pre-registered quorum of two pairs. Figure~\ref{fig:t3_c21} summarizes the per-pair pass-fail pattern.

\begin{figure}[!htbp]
  \centering
  \includegraphics[width=0.85\textwidth]{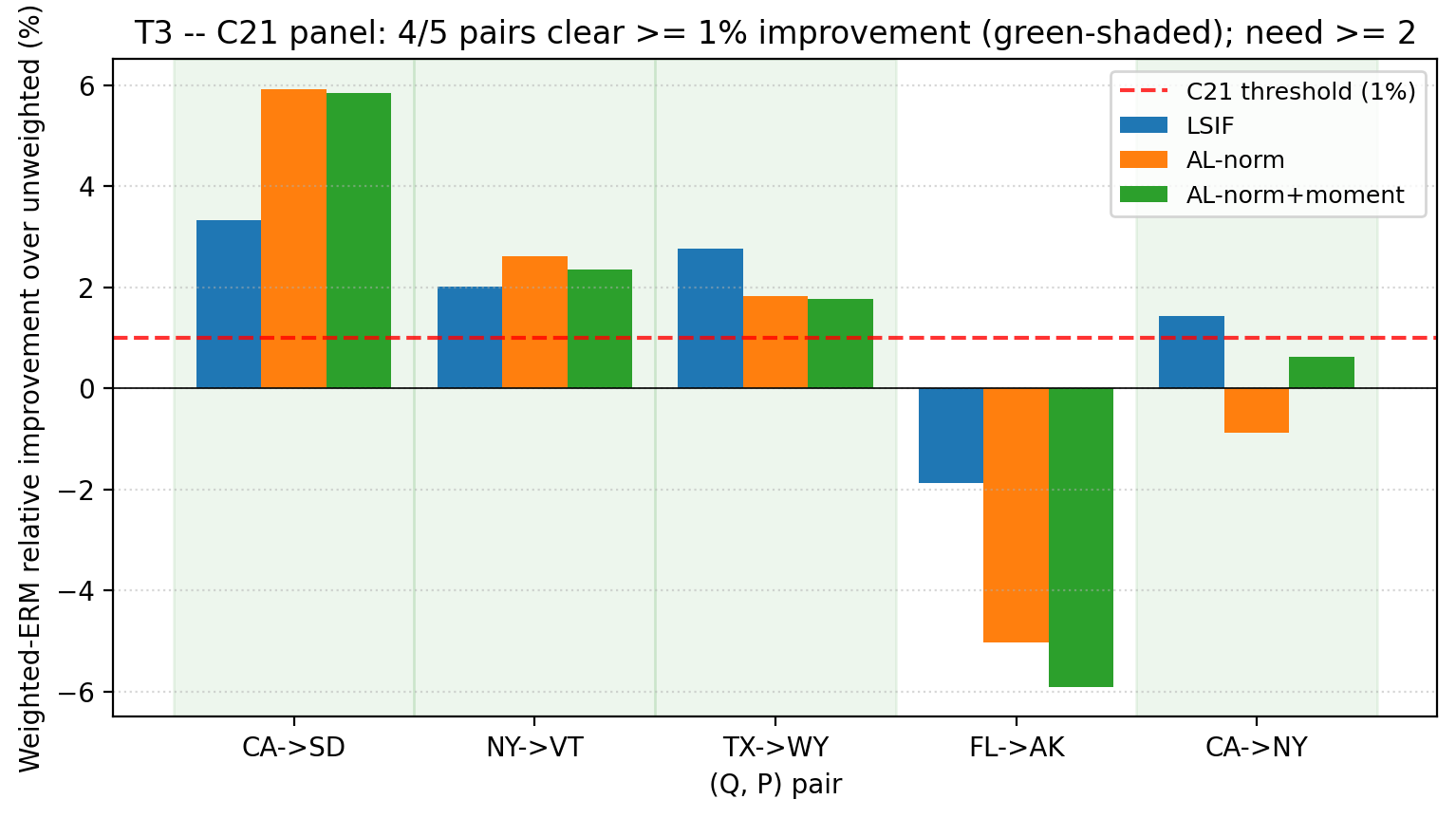}
  \caption{Per-pair relative improvement of DRN-weighted empirical risk minimization over the unweighted baseline on the Folktables ACSIncome sweep (median over $S=10$ seeds). Three bars per pair, one per DRN variant. The dashed line marks the pre-registered $1\%$ improvement threshold for $\mathrm{C}21$. Four of five pairs clear the threshold under at least one variant.}
  \label{fig:t3_c21}
\end{figure}

\paragraph{} Stage $\mathrm{T}4$ places the constrained DRN against the three external baselines on all six splits. The per-split DRN-to-baseline median ratio of target $0/1$ loss clears the pre-registered $1.02$ threshold on five of six splits against $\mathrm{KLIEP}$ and on five of six splits against $\mathrm{uLSIF}$, with $\mathrm{FL} \to \mathrm{AK}$ as the single failure case at ratios $1.033$ and $1.029$ respectively. Both quorums exceed the four-of-six requirement that closes claims $\mathrm{C}19$ and $\mathrm{C}20$. Figure~\ref{fig:t4_target_loss} shows the absolute target losses underlying the comparison. The effective sample size diagnostic is informative on this comparison and qualifies the result. The kernel baselines $\mathrm{KLIEP}$ and $\mathrm{uLSIF}$ produce near-flat ratios with $\mathrm{ESS}/t \approx 0.97$ on every split, which means that, at the protocol-default kernel configuration, both methods learn approximately the constant function on the encoded feature space and reduce in practice to unweighted empirical risk minimization with sampling noise. The discriminator-based ratio of \cite{anderson1979multivariate} produces sharper ratios with $\mathrm{ESS}/t \in [0.09, 0.28]$ and is the constrained DRN's strongest comparator. The DRN beats the discriminator on five of six splits, losing only $\mathrm{CA} \to \mathrm{NY}$ by approximately $1.7\%$. The kernel comparison is reported here under the explicit setup-level scoping of protocol-default hyperparameters, with a per-split bandwidth-tuned configuration left as a documented scope limitation rather than as a tuning knob exercised at evaluation time.

\begin{figure}[!htbp]
  \centering
  \includegraphics[width=0.95\textwidth]{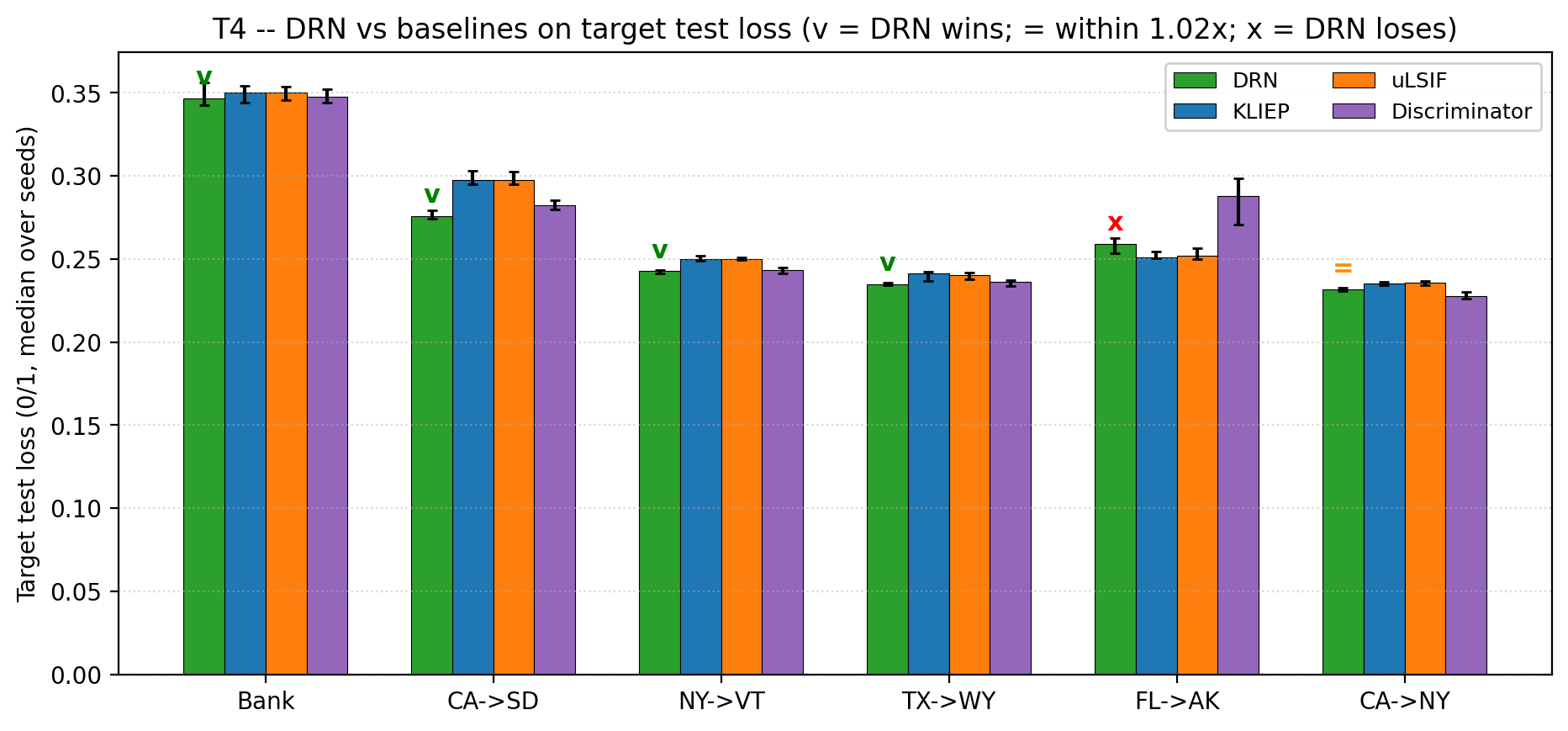}
  \caption{Median target-test $0/1$ loss over $S=10$ seeds on each of the six splits, comparing the best-per-split DRN variant against $\mathrm{KLIEP}$, $\mathrm{uLSIF}$, and the discriminator-based ratio. Error bars are $25$-$75\%$ inter-quartile ranges over seeds. The DRN clears the $1.02\times$ proportional threshold against $\mathrm{KLIEP}$ and $\mathrm{uLSIF}$ on five of six splits.}
  \label{fig:t4_target_loss}
\end{figure}

\paragraph{} The fixed-time PAC-Bayes layer at $\mathrm{T}5$ is the only stage of the campaign where a pre-registered claim fails. The Bernoulli-KL bound applied to the importance-weighted empirical risk under the last-layer Gaussian posterior achieves median per-split coverage $0.70$, below the $0.90$ threshold of claim $\mathrm{C}22$. Strict pre-registration was preserved in the test gate as $\texttt{xfail-strict}$ rather than the threshold being loosened. Figure~\ref{fig:t5_c22} reports the per-split bound and the unweighted target $0/1$ test loss with empirical coverage rates annotated. Three splits achieve full coverage $1.00$ ($\mathrm{NY} \to \mathrm{VT}$, $\mathrm{TX} \to \mathrm{WY}$, $\mathrm{CA} \to \mathrm{NY}$), and three fall below $0.50$ (Bank Marketing at $0.00$, $\mathrm{CA} \to \mathrm{SD}$ at $0.30$, and $\mathrm{FL} \to \mathrm{AK}$ at $0.40$). The median bound across splits is $0.298$, well below the $0.5$ non-vacuity threshold, so the bounds themselves remain non-trivial and informative on the unscaled $[0, r_{\max}]$ scale even where coverage fails.

\begin{figure}[!htbp]
  \centering
  \includegraphics[width=0.95\textwidth]{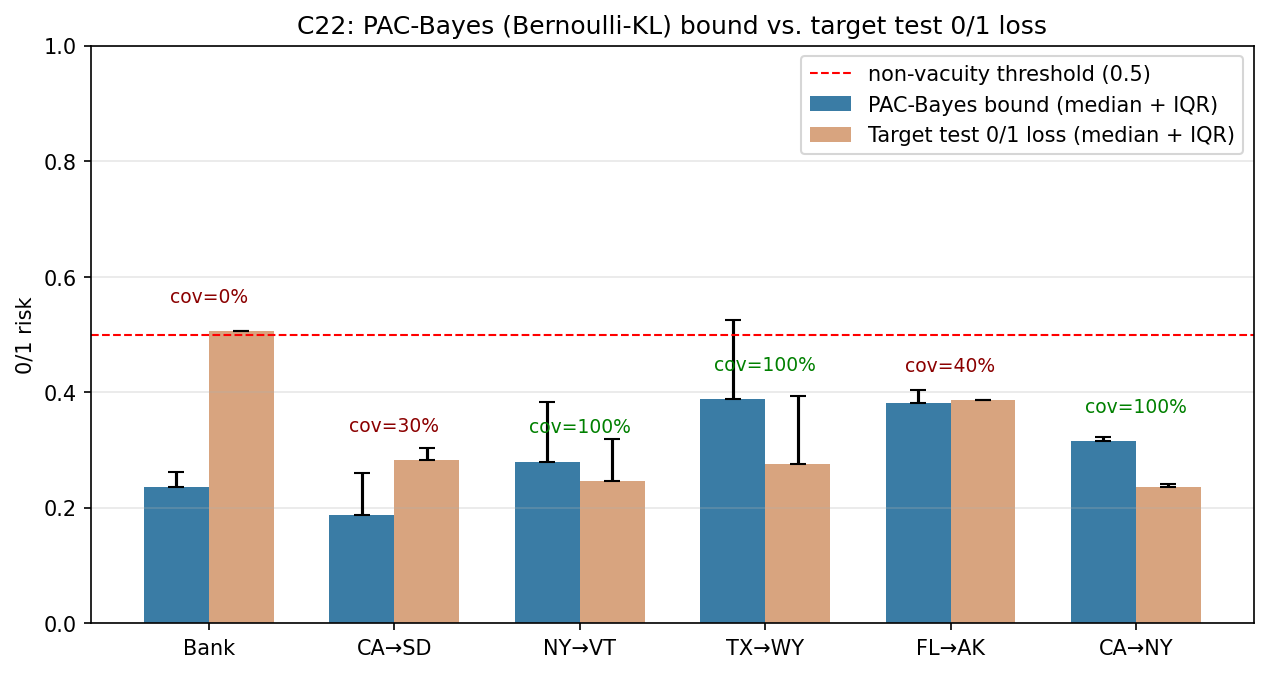}
  \caption{Per-split fixed-time PAC-Bayes Bernoulli-KL bound (blue, median and inter-quartile range over $S=10$ seeds) against the held-out unweighted target test $0/1$ loss (orange). The empirical coverage rate is annotated above each pair, with the dashed line marking the $0.5$ non-vacuity threshold. The bound is informative across splits but covers the target loss on three label-invariant splits only.}
  \label{fig:t5_c22}
\end{figure}

\paragraph{} The failure pattern is structurally interpretable and carries the central interpretive message of the real-data campaign. Tabulating the label-shift magnitude $|Q[Y=1] - P[Y=1]|$ against the per-split coverage rate produces an almost one-to-one correspondence. The three label-invariant splits all attain coverage $1.00$, while the three label-shifted splits at $|Q[Y=1] - P[Y=1]| \in \{0.06,\, 0.14,\, 0.42\}$ attain coverages $0.40$, $0.30$, and $0.00$ respectively, in monotone correspondence with the magnitude of the label shift. The most extreme case is Bank Marketing, where the label rate moves from $8.7\%$ to $50.6\%$ positive under the temporal split and the bound covers no seed on any cell. This pattern is consistent with the framework's central decomposition of Section~\ref{sec:foundation}. The DRN learns a covariate ratio $\candidate \approx dP_{\mu, X} / dQ_X$, and the Bernoulli-KL bound is a high-probability upper bound on the weighted source risk $R_{\candidate}(\rho)$, which equals the target risk $R_{P_\mu}(\rho)$ if and only if the conditional distributions agree, $P_{\mu, Y \mid X} = Q_{Y \mid X}$. Under label shift this conditional equality is violated, the ratio bias term in the central decomposition becomes non-zero and is not certified by a covariate-only ratio, and the test-set evaluation can therefore exceed the bound. The $\mathrm{C}22$ failure is therefore not a defect of the certificate machinery but an empirical demonstration that the framework's covariate-shift assumption is operationally tight, and it identifies joint-shift correction as the natural follow-up direction for the certificate to apply to unweighted target risk under label shift.

\paragraph{} Stage $\mathrm{T}5$ also evaluates the anytime construction~\eqref{eq:pt_peeling} over the logarithmic $t$-grid. The empirical median joint coverage over the (seed, $t$) cells is $1.00$ and exceeds the $0.90$ threshold of claim $\mathrm{C}23$. Four of six splits attain joint coverage $1.00$ ($\mathrm{NY} \to \mathrm{VT}$, $\mathrm{TX} \to \mathrm{WY}$, $\mathrm{FL} \to \mathrm{AK}$, $\mathrm{CA} \to \mathrm{NY}$), $\mathrm{CA} \to \mathrm{SD}$ attains $0.767$ over its $60$-cell grid, and Bank Marketing attains $0.025$ over its $40$-cell grid. The asymmetric distribution is hidden by the median framing and is reported here in full for completeness. Figure~\ref{fig:t5_c23} shows the bound and target loss as functions of the prefix size $t$. The anytime bound passes where the fixed-time bound fails on overlapping splits, most clearly on $\mathrm{CA} \to \mathrm{SD}$, where coverage moves from $0.30$ at fixed-time to $0.767$ at anytime. The mechanism is geometric, in the sense that the peeling penalty widens the bound at small $t$, and the wider bound covers the target loss even when the empirical weighted source risk is too small to do so on its own. The Bank Marketing failure mirrors the fixed-time failure, since the $0.42$ label-shift gap that defeats the covariate-only certificate at one sample size also defeats it across the entire $t$-grid.

\begin{figure}[!htbp]
  \centering
  \includegraphics[width=0.95\textwidth]{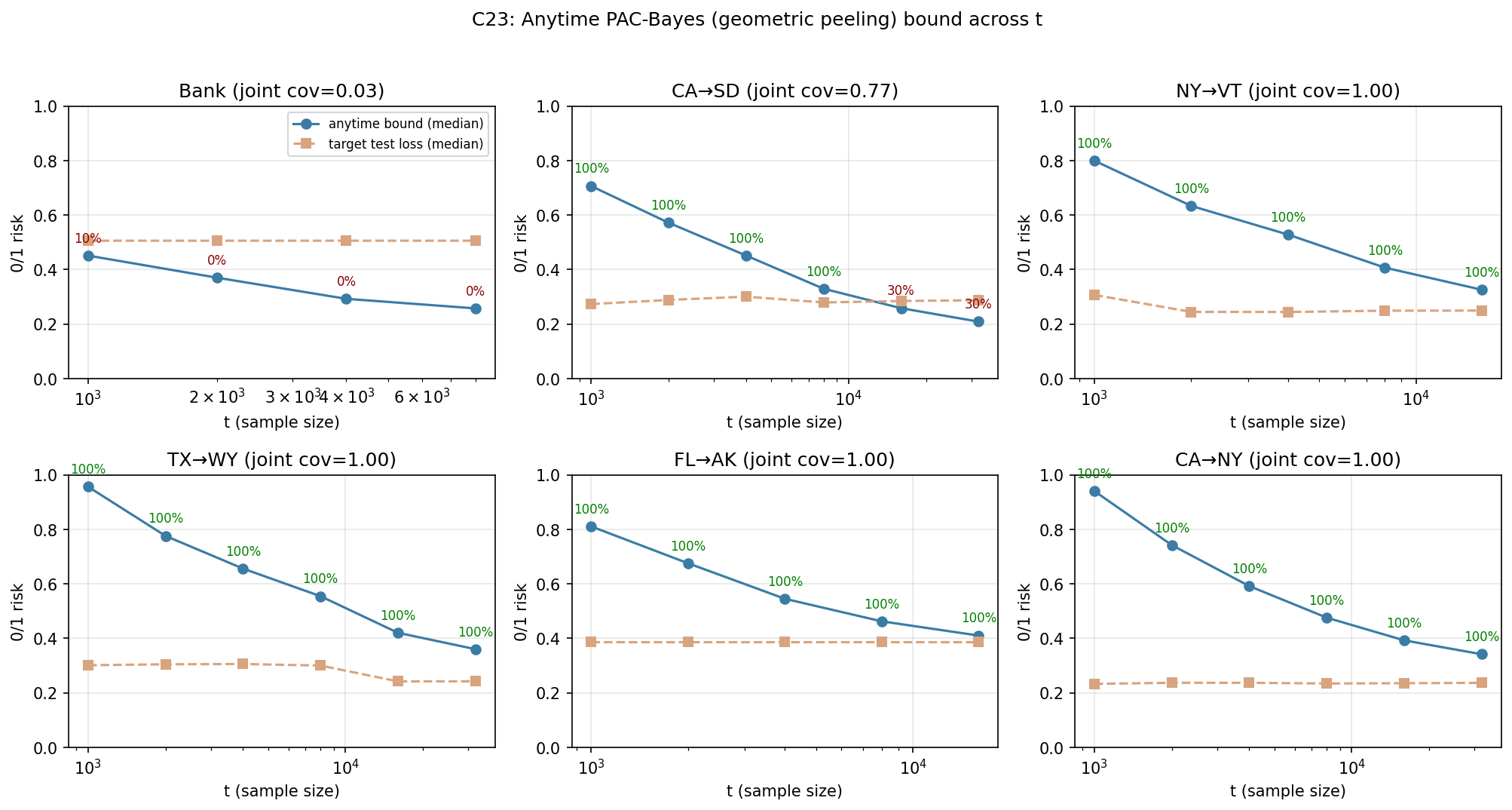}
  \caption{Anytime PAC-Bayes bound under geometric peeling (blue, median over $S=10$ seeds) against the target test $0/1$ loss (orange) as a function of the prefix size $t$ on a logarithmic axis, one panel per split. Per-$t$ coverage rates are annotated above each blue marker. The grid is the intersection of the pre-registered $\{1k, 2k, 4k, 8k, 16k, 32k\}$ with each split's certification fold.}
  \label{fig:t5_c23}
\end{figure}

\paragraph{} Aggregating across the seven stages, eight of the nine pre-registered claims close cleanly. Claims $\mathrm{C}15$, $\mathrm{C}16$, and $\mathrm{C}17$ close at $\mathrm{T}1$ and $\mathrm{T}2$ on the ratio-quality side, claims $\mathrm{C}19$, $\mathrm{C}20$, and $\mathrm{C}21$ close at $\mathrm{T}3$ and $\mathrm{T}4$ at the proportional thresholds, and claim $\mathrm{C}23$ closes at the median at $\mathrm{T}5$. Claim $\mathrm{C}18$ on tail-control wrappers was scoped at protocol time as an open item contingent on real-data budget for the wrapper sweep, and was not run. Claim $\mathrm{C}22$ fails as pre-registered, and the failure is paper-relevant rather than diagnostic, in the sense that it is a clean empirical demonstration that the covariate-only DRN certificate cannot bound unweighted target risk under label shift and that the failure pattern aligns one-to-one with the label-shift magnitude. The picture that emerges is twofold. First, the constrained DRN combined with the anytime PAC-Bayes layer transfers from controlled-instance validation to real tabular data with the predicted coupling between calibration, downstream usefulness, and certificate validity. Second, the framework's covariate-only scope is operationally tight, in that label shift draws the boundary of the certificate's applicability in a way that is empirically clean, theoretically anticipated by the central decomposition, and identifies joint-shift correction as the natural next direction.

\section{Conclusion}
\label{sec:conclusion}

\paragraph{} The paper has developed a unified framework for learning under distribution shift in which the same parametric object, a constrained density-ratio network, simultaneously approximates the Radon-Nikodym derivative $\trueratio = dP/dQ$, supports an importance-weighted empirical risk on a finite source sample, and serves as the input to an anytime PAC-Bayes generalization certificate. Section~\ref{sec:foundation} established the measure-theoretic foundation on which the rest of the paper builds, in the form of the change-of-measure identity $\expectation{P}{f(Z)} = \expectation{Q}{\trueratio(Z)\, f(Z)}$ and the central decomposition of the gap between the unweighted target risk and the importance-weighted source risk into a ratio-bias term governed by the $L^2(Q)$ closeness of $\candidate$ to $\trueratio$, and a generalization-gap term governed by the variability of the weighted loss $\candidate(Z)\, \mathcal{L}(h, Z)$. Section~\ref{sec:drn} introduced the constrained density-ratio network and equipped it with three structural conditions that encode the defining identities of a Radon-Nikodym derivative, namely normalization $\expectation{Q}{\candidate} = 1$, moment matching on a chosen family of test functions, and a second-moment penalty that controls the variance of the weighted loss and the effective sample size. These conditions were imposed as hard integral constraints through an augmented-Lagrangian primal-dual scheme rather than as soft penalty terms, and the section closed with a careful separation between identity-preserving post-hoc transformations and training-time variance-reduction devices. Section~\ref{sec:pacbayes} instantiated PAC-Bayes on the DRN-induced weighted risk in two regimes, namely a fixed-time regime that produced the square-root and Bernoulli-KL bounds, the Kullback-Leibler-regularized objective whose minimizer is the DRN-weighted Gibbs posterior, and a stability statement that propagates an $L^2(Q)$ perturbation in $\candidate$ to the optimal posterior and to the target risk, and an anytime regime that built a time-uniform certificate through a geometric-peeling construction over an unbounded sequence of epochs. Section~\ref{sec:numerical} then exercised the four layers empirically against a two-campaign pre-registered protocol, with a controlled patch test on a one-dimensional Gaussian mean-shift instance with analytic ground truth, and a real-data validation on the UCI Bank Marketing dataset and on five Folktables ACSIncome state pairs.

\paragraph{} The combined evidence of the two campaigns supports a single conclusion. The constrained density-ratio network learns a calibrated covariate ratio on real tabular data, the calibration translates into a downstream advantage that beats unweighted empirical risk minimization and the classical direct ratio-estimation baselines, and the anytime PAC-Bayes layer delivers the simultaneous-over-$t$ certificates that the construction of Section~\ref{sec:pacbayes} promises. The constrained DRN therefore works as a covariate-shift method in the sense that the framework was designed to claim, namely as a calibrated ratio whose hard integral constraints are operationally enforced and whose downstream and certificate-level uses cohere with that calibration. Concretely, the augmented-Lagrangian normalization drives $|\expectation{Q}{\candidate} - 1|$ to $0.014$ on the Bank Marketing temporal split and to $0.017$ on Folktables $\mathrm{CA} \to \mathrm{SD}$, both well below the $0.05$ threshold of claim $\mathrm{C}16$, and the moment-matching layer reduces residuals on the targeted features by $55\%$ to $92\%$ relative to AL-norm without harming downstream loss within the pre-registered tolerance. DRN-weighted empirical risk minimization reduces target $0/1$ loss relative to unweighted empirical risk minimization on four of five Folktables state-pair shifts at the $1\%$ threshold of claim $\mathrm{C}21$, against a pre-registered quorum of two pairs. Against external baselines, the constrained DRN clears the proportional $1.02 \times$ threshold on five of six splits versus $\mathrm{KLIEP}$, on five of six splits versus $\mathrm{uLSIF}$, and on five of six splits versus the discriminator-based ratio of \cite{anderson1979multivariate}, which closes the comparative claims $\mathrm{C}19$ and $\mathrm{C}20$ at the protocol-default kernel configuration. The anytime PAC-Bayes layer attains median joint coverage $1.00$ over the geometric-peeling $t$-grid against a $0.90$ threshold, with four of six splits at full joint coverage and one further split at $0.767$, closing claim $\mathrm{C}23$.

\paragraph{} The single pre-registered failure is the fixed-time PAC-Bayes coverage claim $\mathrm{C}22$, which attains median per-split coverage $0.70$ against the $0.90$ threshold. The failure is structural rather than diagnostic. The bounds themselves are non-vacuous, with median value $0.298$ comfortably below the $0.5$ non-vacuity threshold, and they cover the target risk on every label-invariant split at coverage $1.00$. The per-split coverage rate aligns one-to-one with the label-shift magnitude $|Q[Y=1] - P[Y=1]|$, with the three label-invariant splits attaining coverage $1.00$ and the three label-shifted splits at $|Q[Y=1] - P[Y=1]| \in \{0.06,\, 0.14,\, 0.42\}$ attaining coverages $0.40$, $0.30$, and $0.00$, in monotone correspondence with the magnitude of the label shift. This is exactly the prediction of the central decomposition of Section~\ref{sec:foundation}. The covariate-only ratio certifies the weighted source risk $R_{\candidate}(\rho)$, which equals the unweighted target risk $R_{P_\mu}(\rho)$ if and only if $P_{\mu, Y \mid X} = Q_{Y \mid X}$, and label shift violates the conditional equality. The $\mathrm{C}22$ failure is therefore an empirical confirmation that the framework's covariate-only assumption is operationally tight rather than a defect of the certificate machinery.

\paragraph{} Several directions remain open and follow naturally from the boundaries identified by the present validation. The first and most concrete is the extension of the certificate to joint distribution shift, in which the conditional law $P_{Y \mid X}$ may differ from $Q_{Y \mid X}$ and the covariate-only ratio $\candidate \approx dP_X / dQ_X$ no longer suffices to bound the unweighted target risk. This is the direction that the $\mathrm{C}22$ failure pattern singles out empirically, and a natural route is to certify the full joint ratio $dP/dQ$ on the augmented space $\mathcal{X} \times \mathcal{Y}$ under an explicit joint absolute-continuity hypothesis, with the constraint stack adapted to encode joint Radon-Nikodym identities rather than covariate-only ones. The second is the residual $L^2(Q)$ gap at stage $\mathrm{S}3$ of the patch test, which remains above the pre-registered threshold $\tau_{L^2} = 0.05$ within the fixed training recipe. The patch test indicates that the stress-regime failure mode is underfitting of the heavy tail of $\trueratio_\mu$ rather than overfitting, and the natural levers for closing the gap are therefore network capacity, training horizon, and parametrization, rather than additional variance regularization. The third is a tighter anytime certificate. The geometric-peeling construction pays a $\sqrt{\log\log t}$ overhead relative to the fixed-time bound, which is the standard cost of uniform validity over an unbounded sample-size axis under a union bound across epochs. An $e$-process upgrade in the sense of the supermartingale recipe of \cite{chugg2023unified} is the natural route for tightening that overhead and is compatible with the structure of the DRN-induced weighted loss. The fourth is the protocol-default kernel comparison against $\mathrm{KLIEP}$ and $\mathrm{uLSIF}$, in which both methods produced near-constant ratios with $\mathrm{ESS}/t \approx 0.97$ on every split. A per-split bandwidth-tuned configuration was deliberately scoped out of the present protocol as a documented limitation, and a follow-up campaign with bandwidth sensitivity treated as a tuning knob is required to position the constrained DRN against the kernel baselines in their stronger configuration. A final direction concerns the continual-learning extension that the framework is structurally prepared for. The anytime mechanism justifies sequential monitoring and checkpoint selection under an unbounded sample-size axis, and the next step is to extend the construction to drifting source and target measures $(P_t, Q_t)$ that evolve along the training trajectory, in which case a time-varying ratio model $r_{\theta_t}$ replaces the static $\candidate$ and the geometric-peeling construction must be re-stated under an explicit drift hypothesis on the pair $(P_t, Q_t)$.

\paragraph{} The bottom line is that the constrained DRN works on the contract it was designed to satisfy. It is a calibrated covariate-shift estimator with hard integral constraints that hold up in practice, and whose downstream and certificate-level guarantees transfer from controlled instances to real tabular data along precisely the boundary that the central decomposition draws. The single failed claim does not contradict that contract but locates its boundary, and each of the open directions identified above follows that same boundary into the next step of the framework.

\newpage

\bibliographystyle{unsrt}  


\begin{thebibliography}{1}

\bibitem{shimodaira2000improving}
H.~Shimodaira.
\newblock Improving predictive inference under covariate shift by weighting the log-likelihood function.
\newblock {\em Journal of Statistical Planning and Inference}, 90(2):227--244, 2000.
\newblock \url{https://doi.org/10.1016/S0378-3758(00)00115-4}.

\bibitem{sugiyama2007covariate}
M.~Sugiyama, M.~Krauledat, and K.-R.~M\"uller.
\newblock Covariate shift adaptation by importance weighted cross validation.
\newblock {\em Journal of Machine Learning Research}, 8:985--1005, 2007.

\bibitem{anderson1979multivariate}
J.~A. Anderson.
\newblock Multivariate logistic compounds.
\newblock {\em Biometrika}, 66(1):17--26, 1979.
\newblock \url{https://doi.org/10.1093/biomet/66.1.17}.

\bibitem{moro2014data}
S.~Moro, P.~Cortez, and P.~Rita.
\newblock A data-driven approach to predict the success of bank telemarketing.
\newblock {\em Decision Support Systems}, 62:22--31, 2014.
\newblock \url{https://doi.org/10.1016/j.dss.2014.03.001}.

\bibitem{ding2021retiring}
F.~Ding, M.~Hardt, J.~Miller, and L.~Schmidt.
\newblock Retiring adult: New datasets for fair machine learning.
\newblock In {\em Advances in Neural Information Processing Systems 34}, pages 6478--6490, 2021.
\newblock \url{https://arxiv.org/abs/2108.04884}.

\bibitem{sugiyama2008model}
M.~Sugiyama, S.~Nakajima, H.~Kashima, P.~von B\"unau, and M.~Kawanabe.
\newblock Direct importance estimation with model selection and its application to covariate shift adaptation.
\newblock In {\em Advances in Neural Information Processing Systems 20}, pages 1433--1440, 2008.

\bibitem{sugiyama2008direct}
M.~Sugiyama, T.~Suzuki, S.~Nakajima, H.~Kashima, P.~von B\"unau, and M.~Kawanabe.
\newblock Direct importance estimation for covariate shift adaptation.
\newblock {\em Annals of the Institute of Statistical Mathematics}, 60(4):699--746, 2008.
\newblock \url{https://doi.org/10.1007/s10463-008-0197-x}.

\bibitem{kanamori2009least}
T.~Kanamori, S.~Hido, and M.~Sugiyama.
\newblock A least-squares approach to direct importance estimation.
\newblock {\em Journal of Machine Learning Research}, 10:1391--1445, 2009.

\bibitem{nguyen2010estimating}
X.~Nguyen, M.~J. Wainwright, and M.~I. Jordan.
\newblock Estimating divergence functionals and the likelihood ratio by convex risk minimization.
\newblock {\em IEEE Transactions on Information Theory}, 56(11):5847--5861, 2010.
\newblock \url{https://doi.org/10.1109/TIT.2010.2068870}.

\bibitem{zhang2022density}
A.~G. Zhang and J.~Chen.
\newblock Density ratio model with data-adaptive basis function.
\newblock {\em Journal of Multivariate Analysis}, 191:105043, 2022.
\newblock \url{https://doi.org/10.1016/j.jmva.2022.105043}.

\bibitem{mcvittie2025density}
J.~H. McVittie and A.~G. Zhang.
\newblock Density ratio model for multiple types of survival data with empirical likelihood.
\newblock {\em arXiv preprint arXiv:2511.09398}, 2025.
\newblock \url{https://arxiv.org/abs/2511.09398}.

\bibitem{rhodes2020telescoping}
B.~Rhodes, K.~Xu, and M.~U. Gutmann.
\newblock Telescoping density-ratio estimation.
\newblock In {\em Advances in Neural Information Processing Systems 33}, pages 4905--4916, 2020.

\bibitem{liu2013changepoint}
S.~Liu, M.~Yamada, N.~Collier, and M.~Sugiyama.
\newblock Change-point detection in time-series data by relative density-ratio estimation.
\newblock {\em Neural Networks}, 43:72--83, 2013.
\newblock \url{https://doi.org/10.1016/j.neunet.2013.01.012}.

\bibitem{sugiyama2012density}
M.~Sugiyama, T.~Suzuki, and T.~Kanamori.
\newblock {\em Density Ratio Estimation in Machine Learning}.
\newblock Cambridge University Press, Cambridge, 2012.
\newblock \url{https://doi.org/10.1017/CBO9781139035613}.

\bibitem{mcallester1999some}
D.~A. McAllester.
\newblock Some PAC-Bayesian theorems.
\newblock {\em Machine Learning}, 37(3):355--363, 1999.
\newblock \url{https://doi.org/10.1023/A:1007618624809}.

\bibitem{mcallester1999pacbayesian}
D.~A. McAllester.
\newblock PAC-Bayesian model averaging.
\newblock In {\em Proceedings of the Twelfth Annual Conference on Computational Learning Theory (COLT)}, pages 164--170, 1999.
\newblock \url{https://doi.org/10.1145/307400.307435}.

\bibitem{mcallester2003stochastic}
D.~A. McAllester.
\newblock PAC-Bayesian stochastic model selection.
\newblock {\em Machine Learning}, 51(1):5--21, 2003.
\newblock \url{https://doi.org/10.1023/A:1021840411064}.

\bibitem{mcallester2003simplified}
D.~A. McAllester.
\newblock Simplified PAC-Bayesian margin bounds.
\newblock In {\em Proceedings of the 16th Annual Conference on Computational Learning Theory (COLT)}, pages 203--215, 2003.
\newblock \url{https://doi.org/10.1007/978-3-540-45167-9_16}.

\bibitem{seeger2002pacbayesian}
M.~Seeger.
\newblock PAC-Bayesian generalisation error bounds for Gaussian process classification.
\newblock {\em Journal of Machine Learning Research}, 3:233--269, 2002.

\bibitem{catoni2007pacbayesian}
O.~Catoni.
\newblock {\em PAC-Bayesian Supervised Classification: The Thermodynamics of Statistical Learning}, volume~56 of {\em IMS Lecture Notes Monograph Series}.
\newblock Institute of Mathematical Statistics, Beachwood, OH, 2007.
\newblock \url{https://arxiv.org/abs/0712.0248}.

\bibitem{tolstikhin2013pacbayes}
I.~Tolstikhin and Y.~Seldin.
\newblock PAC-Bayes-empirical-Bernstein inequality.
\newblock In {\em Advances in Neural Information Processing Systems 26}, pages 109--117, 2013.

\bibitem{thiemann2017strongly}
N.~Thiemann, C.~Igel, O.~Wintenberger, and Y.~Seldin.
\newblock A strongly quasiconvex PAC-Bayesian bound.
\newblock In {\em Proceedings of the 28th International Conference on Algorithmic Learning Theory (ALT)}, volume~76 of {\em Proceedings of Machine Learning Research}, pages 1--26, 2017.

\bibitem{germain2009pacbayes}
P.~Germain, A.~Lacasse, F.~Laviolette, M.~Marchand, and S.~Shanian.
\newblock From PAC-Bayes bounds to KL regularization.
\newblock In {\em Advances in Neural Information Processing Systems 22}, pages 603--610, 2009.

\bibitem{alquier2016variational}
P.~Alquier, J.~Ridgway, and N.~Chopin.
\newblock On the properties of variational approximations of Gibbs posteriors.
\newblock {\em Journal of Machine Learning Research}, 17(236):1--41, 2016.

\bibitem{keshet2011pacbayesian}
J.~Keshet, D.~McAllester, and T.~Hazan.
\newblock PAC-Bayesian approach for minimization of phoneme error rate.
\newblock In {\em Proceedings of the IEEE International Conference on Acoustics, Speech and Signal Processing (ICASSP)}, pages 2224--2227, 2011.
\newblock \url{https://doi.org/10.1109/ICASSP.2011.5946923}.

\bibitem{chugg2023unified}
B.~Chugg, H.~Wang, and A.~Ramdas.
\newblock A unified recipe for deriving (time-uniform) PAC-Bayes bounds.
\newblock {\em Journal of Machine Learning Research}, 24(372):1--61, 2023.

\bibitem{rodriguezgalvez2024more}
B.~Rodr\'iguez-G\'alvez, R.~Thobaben, and M.~Skoglund.
\newblock More PAC-Bayes bounds: From bounded losses, to losses with general tail behaviors, to anytime validity.
\newblock {\em Journal of Machine Learning Research}, 25(110):1--43, 2024.

\bibitem{levin2017markov}
D.~A. Levin and Y.~Peres.
\newblock {\em Markov Chains and Mixing Times}.
\newblock American Mathematical Society, second edition, 2017.
\newblock \url{https://bookstore.ams.org/mbk-107}.

\bibitem{cover2005elements}
T.~M. Cover and J.~A. Thomas.
\newblock {\em Elements of Information Theory}.
\newblock John Wiley \& Sons, second edition, 2005.
\newblock \url{https://doi.org/10.1002/047174882X}.

\bibitem{billingsley2012probability}
P.~Billingsley.
\newblock {\em Probability and Measure}.
\newblock Wiley Series in Probability and Statistics. John Wiley \& Sons, anniversary edition, 2012.
\newblock ISBN 978-1-118-34191-9.

\bibitem{cinlar2011probability}
E.~{\c{C}}{\i}nlar.
\newblock {\em Probability and Stochastics}.
\newblock Graduate Texts in Mathematics, vol.~261. Springer, 2011.
\newblock \url{https://doi.org/10.1007/978-0-387-87859-1}.

\bibitem{evans2010partial}
L.~C. Evans.
\newblock {\em Partial Differential Equations}.
\newblock Graduate Studies in Mathematics, vol.~19. American Mathematical Society, second edition, 2010.
\newblock \url{https://bookstore.ams.org/gsm-19-r}.

\bibitem{ledoux2011probability}
M.~Ledoux and M.~Talagrand.
\newblock {\em Probability in Banach Spaces: Isoperimetry and Processes}.
\newblock Classics in Mathematics. Springer-Verlag Berlin Heidelberg, 1991, reprinted 2011.
\newblock \url{https://doi.org/10.1007/978-3-642-20212-4}.


\end{thebibliography}

\newpage

\appendix

\section{Measure theory and the Radon-Nikodym theorem}
\label{ap:measure_theory}

\paragraph{} The entire framework developed in this paper rests on a single object, the Radon-Nikodym derivative $\trueratio = dP/dQ$ of the target law $P$ with respect to the source law $Q$. The change-of-measure identity $\expectation{P}{f(Z)} = \expectation{Q}{\trueratio(Z) f(Z)}$ that drives the central decomposition of Section~\ref{sec:foundation} is a direct consequence of the classical Radon-Nikodym theorem. This appendix collects the measure-theoretic background needed to make every statement in the paper unambiguous, in a form that can be cited without sending the reader to external references. The scalar theory is presented first, in the form needed for the existence and essential uniqueness of the density $\trueratio$, the chain rule that underlies successive changes of reference measure, the Lebesgue decomposition that isolates the absolutely continuous part of a measure, and the change-of-variable formula that turns expectations under $P$ into reweighted expectations under $Q$. The vector-valued extension, which is the natural setting whenever measures take values in a Banach space and which underlies the Radon-Nikodym property, is presented at the end.

\subsection{Measure-theoretic preliminaries}

\paragraph{} Throughout this appendix, $\mathcal{X}$ denotes a set and $\mathcal{S}$ denotes a $\sigma$-algebra of subsets of $\mathcal{X}$, namely a non-empty family of subsets that is closed under countable unions, countable intersections, and complements, and that contains the empty set and the whole space. The pair $(\mathcal{X}, \mathcal{S})$ is the underlying measurable space, and its elements $E \in \mathcal{S}$ are the measurable sets on which the set functions discussed below are defined.

\begin{defin}[Measure and measure space]
  A measure on the measurable space $(\mathcal{X}, \mathcal{S})$ is a function $\mu : \mathcal{S} \longrightarrow [0,\infty]$ that satisfies $\mu(\emptyset) = 0$ and is $\sigma$-additive, in the sense that for every countable collection $\{E_n\}_{n \geq 1} \subset \mathcal{S}$ of pairwise disjoint measurable sets,
  \begin{equation}
    \mu \left( \bigcup_{n=1}^\infty E_n \right) = \sum_{n=1}^\infty \mu(E_n).
  \end{equation}
  The measure $\mu$ is finite when $\mu(\mathcal{X}) < \infty$, and $\sigma$-finite when there exists a countable family $\{\mathcal{X}_n\}_{n \geq 1} \subset \mathcal{S}$ with $\mathcal{X} = \bigcup_{n \geq 1} \mathcal{X}_n$ and $\mu(\mathcal{X}_n) < \infty$ for every $n$. The triple $(\mathcal{X}, \mathcal{S}, \mu)$ is then a measure space.
\end{defin}

\paragraph{} A real-valued function $f : \mathcal{X} \longrightarrow \mathbb{R}$ is measurable when the preimage $f^{-1}((a, \infty)) \in \mathcal{S}$ for every $a \in \mathbb{R}$, which is the regularity required for the integral to be defined. Measurable functions are approximated from below by simple functions, that is, by finite non-negative linear combinations of indicators of measurable sets,
\begin{equation}
  s = \sum_{k=1}^n c_k \mathbf{1}_{E_k},
\end{equation}
with $c_k \geq 0$ and $E_k \in \mathcal{S}$ pairwise disjoint, whose Lebesgue integral over a measurable set $E \in \mathcal{S}$ is
\begin{equation}
  \int_E s \, d\mu = \sum_{k=1}^n c_k \, \mu(E \cap E_k).
\end{equation}
The Lebesgue integral of a non-negative measurable function is then defined by monotone approximation from below by simple functions, and the integral of a general measurable function is defined by splitting it into its positive and negative parts. The notions of integrability and almost-everywhere equality used in the body of the paper refer to this construction.

\paragraph{} The Lebesgue integral defined in this way satisfies the monotone convergence theorem, which is the fundamental commutation property between pointwise monotone limits and integration. The result is the workhorse that allows the integral of a limit of non-negative measurable functions to be evaluated as the limit of the integrals, and it underlies the well-posedness of the construction sketched above. In the body of the paper, it is the property invoked implicitly whenever a monotone limit is moved inside an expectation against $P$ or $Q$, including in the verification that the importance-weighted population risk $R_r(\rho) = \expectation{Q}{r(Z)\, \mathcal{L}(h, Z)}$ is the pointwise limit of its truncated counterparts.

\begin{theo}[Monotone convergence]\label{theo:mct}
  Let $(\mathcal{X}, \mathcal{S}, \mu)$ be a measure space, and let $\{f_n\}_{n \geq 1}$ be a sequence of measurable functions $f_n : \mathcal{X} \longrightarrow [0, \infty]$ such that
  \begin{equation}
    f_1(x) \leq f_2(x) \leq \dots \leq f_n(x) \leq \dots, \quad \forall x \in \mathcal{X}.
  \end{equation}
  Define the pointwise limit
  \begin{equation}
    f(x) = \lim_{n \to \infty} f_n(x) \in [0, \infty], \quad \forall x \in \mathcal{X}.
  \end{equation}
  Then $f$ is measurable, and
  \begin{equation}
    \int_\mathcal{X} f \, d\mu = \lim_{n \to \infty} \int_\mathcal{X} f_n \, d\mu.
  \end{equation}
\end{theo}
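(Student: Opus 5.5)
The plan is the classical one. First I establish the result for simple functions. Then I pass to general nonnegative measurable functions through the supremum definition of the Lebesgue integral. Measurability of $f$ comes first and is immediate: for every $a \in \mathbb{R}$, monotonicity gives
\begin{equation}
  f^{-1}((a,\infty]) = \bigcup_{n \geq 1} f_n^{-1}((a,\infty]) \in \mathcal{S},
\end{equation}
because $f(x) > a$ holds exactly when $f_n(x) > a$ for some $n$.

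Next comes the easy inequality. Monotonicity of the integral (if $g \leq h$ then $\int g \, d\mu \leq \int h \, d\mu$, which follows directly from the supremum-over-simple-functions definition) gives $\int f_n \, d\mu \leq \int f_{n+1} \, d\mu \leq \int f \, d\mu$. Hence the limit $\alpha = \lim_n \int f_n \, d\mu$ exists in $[0,\infty]$ and satisfies $\alpha \leq \int f \, d\mu$.

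The reverse inequality is the main step. I fix a simple function $s = \sum_k c_k \mathbf{1}_{E_k}$ with $0 \leq s \leq f$, fix $c \in (0,1)$, and set $A_n = \{x : f_n(x) \geq c\, s(x)\}$. These sets are measurable and increase in $n$. They exhaust $\mathcal{X}$: where $s(x) = 0$ this is trivial, and where $s(x) > 0$ we have $c\,s(x) < s(x) \leq f(x) = \lim_n f_n(x)$. Then
\begin{equation}
  \int_\mathcal{X} f_n \, d\mu \geq \int_{A_n} f_n \, d\mu \geq c \int_{A_n} s \, d\mu = c \sum_k c_k \, \mu(E_k \cap A_n).
\end{equation}
Continuity from below of $\mu$ follows from $\sigma$-additivity applied to the disjoint differences $A_n \setminus A_{n-1}$. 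It gives $\mu(E_k \cap A_n) \to \mu(E_k)$, so $\alpha \geq c \int s \, d\mu$. Letting $c \uparrow 1$ and taking the supremum over $s$ yields $\alpha \geq \int f \, d\mu$.

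The only delicate point is the factor $c < 1$. Without it, the sets $\{f_n \geq s\}$ need not exhaust $\mathcal{X}$, for instance when $f_n \uparrow f = s$ strictly from below. One also needs the convention $0 \cdot \infty = 0$ so that the cases $\mu(E_k) = \infty$ and $\alpha = \infty$ are handled uniformly.
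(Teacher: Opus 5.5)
Your proof is correct and complete. It is the standard argument: measurability via $\{f>a\}=\bigcup_n\{f_n>a\}$, the easy inequality by monotonicity of the integral, and the reverse inequality via the increasing sets $\{f_n \geq c\,s\}$ with $c<1$ together with continuity from below of $\mu$.

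The paper gives no argument of its own and only cites a textbook, so your proposal supplies the standard proof that the citation stands in for. The one convention worth stating explicitly is already in your text: you take $\int f\,d\mu$ to be the supremum of $\int s\,d\mu$ over simple $0\le s\le f$, which is consistent with the paper's description of the integral as approximation from below by simple functions.
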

\begin{proof}
  The proof can be found in \cite{evans2010partial}.
\end{proof}

\begin{defin}[Absolute continuity]
  Let $\mu$ and $\nu$ be two measures on $(\mathcal{X}, \mathcal{S})$. The measure $\nu$ is absolutely continuous with respect to $\mu$, denoted $\nu \ll \mu$, when for every $E \in \mathcal{S}$ the implication $\mu(E) = 0 \Rightarrow \nu(E) = 0$ holds. Equivalently, $\nu$ vanishes on every $\mu$-null set.
\end{defin}

\subsection{The Radon-Nikodym theorem}

\paragraph{} The classical Radon-Nikodym theorem characterizes absolute continuity in a constructive way. Whenever the dominating measure is $\sigma$-finite, absolute continuity is equivalent to the existence of an integrable density, and that density is essentially unique. This is the result that turns the qualitative statement that $P$ is dominated by $Q$ into the quantitative statement that $P$ has a density $\trueratio$ against $Q$, which is the cornerstone of the importance-weighted framework developed in the body of the paper.

\begin{theo}[Radon-Nikodym, classical]\label{theo:rn_classical}
  Let $(\mathcal{X}, \mathcal{S}, \mu)$ be a $\sigma$-finite measure space, and let $\nu : \mathcal{S} \longrightarrow [0, \infty)$ be a $\sigma$-finite measure on the same measurable space such that $\nu \ll \mu$. Then there exists a non-negative measurable function $f : \mathcal{X} \longrightarrow [0, \infty)$, unique up to equality $\mu$-almost everywhere, such that
  \begin{equation}
    \nu(E) = \int_E f \, d\mu, \quad \forall E \in \mathcal{S}.
  \end{equation}
  The function $f$ is called the Radon-Nikodym derivative of $\nu$ with respect to $\mu$, and is denoted
  \begin{equation}
    f = \frac{d\nu}{d\mu}.
  \end{equation}
\end{theo}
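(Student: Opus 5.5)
The plan is to prove the finite case first by von Neumann's Hilbert-space argument, then extend it to the $\sigma$-finite case by a countable partition, and finally prove uniqueness directly from the integral identity.

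\textbf{Reduction.} Since $\mu$ and $\nu$ are both $\sigma$-finite, I will intersect their exhausting sequences and disjointify. This gives a countable partition $\{\mathcal{X}_n\}_{n\ge1}\subset\mathcal{S}$ of $\mathcal{X}$ with $\mu(\mathcal{X}_n)<\infty$ and $\nu(\mathcal{X}_n)<\infty$ for every $n$. If a density $f_n$ is built for the restrictions $\mu_n=\mu(\cdot\cap\mathcal{X}_n)$ and $\nu_n=\nu(\cdot\cap\mathcal{X}_n)$, I will set $f=\sum_n f_n\mathbf{1}_{\mathcal{X}_n}$. Then $\sigma$-additivity of $\nu$ together with Theorem~\ref{theo:mct}, applied to the partial sums, gives $\nu(E)=\sum_n\int_{E\cap\mathcal{X}_n}f\,d\mu=\int_E f\,d\mu$. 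It therefore suffices to treat finite $\mu,\nu$ with $\nu\ll\mu$.

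\textbf{Finite case (von Neumann).} Let $\lambda=\mu+\nu$, a finite measure. On $L^2(\lambda)$ consider the linear functional $\ell(\varphi)=\int\varphi\,d\nu$. By Cauchy--Schwarz,
\[
|\ell(\varphi)|\le\int|\varphi|\,d\lambda\le\lambda(\mathcal{X})^{1/2}\|\varphi\|_{L^2(\lambda)},
\]
so $\ell$ is bounded. The Riesz representation theorem then gives $g\in L^2(\lambda)$ with $\int\varphi\,d\nu=\int\varphi g\,d\lambda$ for all $\varphi\in L^2(\lambda)$. Testing with $\varphi=\mathbf{1}_E$ shows $0\le\nu(E)/\lambda(E)\le1$ whenever the averages make sense, which forces $0\le g\le1$ $\lambda$-a.e. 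Rewriting the identity gives
\[
\int\varphi(1-g)\,d\nu=\int\varphi g\,d\mu .
\]

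\textbf{Main obstacle: showing $g<1$ a.e.} This is the step where absolute continuity enters. Put $B=\{g=1\}$ and test with $\varphi=\mathbf{1}_B$. This yields $\mu(B)=0$, and hence $\nu(B)=0$ because $\nu\ll\mu$. Off $B$, I will take $\varphi=\mathbf{1}_E(1+g+\dots+g^k)$, which is bounded and so lies in $L^2(\lambda)$. This gives
\[
\int_E(1-g^{k+1})\,d\nu=\int_E g(1+\dots+g^k)\,d\mu .
\]
As $k\to\infty$, the left side increases to $\nu(E\setminus B)=\nu(E)$ by Theorem~\ref{theo:mct}. The right side increases to $\int_E f\,d\mu$ with $f=g/(1-g)\,\mathbf{1}_{B^c}\in[0,\infty)$. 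The function $f$ is measurable and nonnegative, and it is $\mu$-integrable since $\nu$ is finite.

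\textbf{Uniqueness.} Suppose $f_1,f_2$ both satisfy the identity. On each $\mathcal{X}_n$ both are $\mu$-integrable, because $\nu(\mathcal{X}_n)<\infty$. Take $E=\{f_1>f_2\}\cap\mathcal{X}_n$; then $\int_E(f_1-f_2)\,d\mu=0$ with a strictly positive integrand on $E$, so $\mu(E)=0$. The same argument with the roles swapped, followed by a union over $n$, gives $f_1=f_2$ $\mu$-a.e.
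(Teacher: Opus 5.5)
Your proof is correct, but it follows a different route from the one the paper relies on. The paper gives no argument of its own and defers to \cite{billingsley2012probability}. There, the finite case is handled by an extremal, purely measure-theoretic argument rather than by Hilbert-space duality.

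That argument runs as follows. One takes the class of subdensities $g\ge 0$ with $\int_E g\,d\mu\le\nu(E)$ for all $E$. This class is closed under finite maxima and under increasing limits (by Theorem~\ref{theo:mct}), so it contains an $f$ maximizing $\int g\,d\mu$. Set $\nu_s(E)=\nu(E)-\int_E f\,d\mu\ge 0$. A Hahn decomposition of $\nu_s-n^{-1}\mu$ shows that if $\nu_s$ were not $\mu$-singular, there would be a set $A$ with $\mu(A)>0$ and an $\varepsilon>0$ such that $f+\varepsilon\mathbf{1}_A$ is still a subdensity. That contradicts maximality, and singularity together with $\nu\ll\mu$ then forces $\nu_s=0$. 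The reduction from $\sigma$-finite to finite measures and the uniqueness argument are the same as yours.

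Your von Neumann route is shorter and yields the Lebesgue decomposition for free. Without assuming $\nu\ll\mu$ you still get $\mu(B)=0$ and $\nu(E)=\nu(E\cap B)+\int_E f\,d\mu$ for all $E$. The cost is importing the Riesz representation theorem, and hence the completeness of $L^2(\lambda)$. The cited route needs only monotone convergence and the Hahn decomposition, so it never leaves elementary measure theory.

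One piece of housekeeping: after showing $0\le g\le 1$ $\lambda$-a.e., modify $g$ on that $\lambda$-null set so the bounds hold everywhere. This makes your monotone limits pointwise. It also makes $f=g(1-g)^{-1}\mathbf{1}_{B^c}$ a genuine map into $[0,\infty)$, as the statement requires.
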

\begin{proof}
  The proof can be found in \cite{billingsley2012probability}.
\end{proof}

\paragraph{} The Radon-Nikodym derivative behaves like a derivative under composition. Whenever a chain of absolutely continuous relations is given, the densities along the chain multiply pointwise almost everywhere, exactly as in the classical chain rule for differentiation. This is the property that underlies, in the importance-weighting context, the manipulation of densities under successive changes of reference measure, for instance whenever an intermediate proposal distribution is interposed between the source law $Q$ and the target law $P$.

\begin{theo}[Chain rule for Radon-Nikodym derivatives]\label{theo:rn_chain_rule}
  Let $(\mathcal{X}, \mathcal{S})$ be a measurable space, and let $\mu, \nu, \rho$ be $\sigma$-finite measures on $(\mathcal{X}, \mathcal{S})$ such that $\rho \ll \nu \ll \mu$. Then $\rho \ll \mu$, and the corresponding Radon-Nikodym derivatives satisfy
  \begin{equation}
    \frac{d\rho}{d\mu} = \frac{d\rho}{d\nu} \cdot \frac{d\nu}{d\mu} \quad \mu\text{-a.e.}
  \end{equation}
  Equivalently, with $g = d\nu/d\mu$ and $h = d\rho/d\nu$, the identity
  \begin{equation}
    \rho(E) = \int_E h \, g \, d\mu, \quad \forall E \in \mathcal{S},
  \end{equation}
  holds.
\end{theo}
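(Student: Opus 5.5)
We prove Theorem~\ref{theo:rn_chain_rule} in two stages. First we establish the absolute continuity $\rho \ll \mu$. Then we identify the density by extending the identity $\rho(E) = \int_E h\, d\nu$ from indicators to general nonnegative measurable functions.

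\textbf{Step 1: absolute continuity.} This is immediate. If $\mu(E) = 0$, then $\nu(E) = 0$ because $\nu \ll \mu$, and hence $\rho(E) = 0$ because $\rho \ll \nu$. Since $\mu$ is $\sigma$-finite and $\rho$ is $\sigma$-finite, Theorem~\ref{theo:rn_classical} applies to the three pairs $(\nu,\mu)$, $(\rho,\nu)$ and $(\rho,\mu)$. It supplies $g = d\nu/d\mu$, $h = d\rho/d\nu$ and $d\rho/d\mu$, each unique up to the corresponding null sets.

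\textbf{Step 2: the key lemma.} We show that for every measurable $f:\mathcal{X}\to[0,\infty]$,
\begin{equation}
  \int_{\mathcal{X}} f \, d\nu = \int_{\mathcal{X}} f\, g \, d\mu .
\end{equation}
We proceed by the standard machine.
\begin{itemize}
  \item For $f = \mathbf{1}_A$ the identity is exactly the defining property $\nu(A) = \int_A g\,d\mu$ from Theorem~\ref{theo:rn_classical}.
  \item By linearity of the integral it extends to nonnegative simple functions $s = \sum_k c_k \mathbf{1}_{E_k}$.
  \item For a general nonnegative measurable $f$, choose simple $s_n \uparrow f$ pointwise. Then $s_n g \uparrow f g$ pointwise, because $g \geq 0$; we use the convention $0\cdot\infty = 0$.
  \item Applying Theorem~\ref{theo:mct} on both sides, once for $\nu$ and once for $\mu$, gives the identity in the limit.
\end{itemize}
This limit passage is the only step with any real content. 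The point to watch is that monotone convergence must be invoked separately under the two different measures, and that we need nonnegativity of $g$ to preserve monotonicity.

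\textbf{Step 3: conclusion.} For $E \in \mathcal{S}$, apply the lemma with $f = h\,\mathbf{1}_E$:
\begin{equation}
  \rho(E) = \int_E h \, d\nu = \int_{\mathcal{X}} h\,\mathbf{1}_E\, d\nu = \int_E h\, g\, d\mu .
\end{equation}
This is the second displayed form of the statement. Since $hg$ is nonnegative and measurable, the uniqueness clause of Theorem~\ref{theo:rn_classical} for the pair $(\rho,\mu)$ forces $d\rho/d\mu = h g$ $\mu$-almost everywhere.

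It remains to check that the choice of version of $h$ does not matter. Two versions of $h$ differ only on a $\nu$-null set $N$. On such a set, $\int_N g\,d\mu = \nu(N) = 0$, so $g = 0$ $\mu$-almost everywhere on $N$. Therefore the product $hg$ is unchanged $\mu$-almost everywhere.
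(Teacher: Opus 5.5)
Your proof is correct and is the standard argument. The paper gives no proof of its own and defers to \cite{cinlar2011probability}, which obtains the chain rule the same way: upgrade $\nu(A)=\int_A g\,d\mu$ to $\int f\,d\nu=\int fg\,d\mu$ for all nonnegative measurable $f$ via indicators, simple functions and monotone convergence, then take $f=h\mathbf{1}_E$ and invoke uniqueness of the density.
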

\begin{proof}
  The proof can be found in \cite{cinlar2011probability}.
\end{proof}

\paragraph{} Absolute continuity is not the generic situation. A general measure may possess a part that is not dominated by $\mu$ at all, in the sense that it is concentrated on a $\mu$-null set. The Lebesgue decomposition isolates the absolutely continuous part from the singular part in a unique way, and is the result that justifies treating the Radon-Nikodym derivative as a well-defined object even when the underlying measures fail to be mutually absolutely continuous. The decomposition is used implicitly whenever the body of the paper invokes the assumption $P \ll Q$, since that assumption removes the singular component of $P$ a priori and leaves only the part that admits a density.

\begin{defin}[Mutual singularity]
  Let $\mu$ and $\nu$ be two measures on $(\mathcal{X}, \mathcal{S})$. The measures are mutually singular, denoted $\mu \perp \nu$, when there exists $N \in \mathcal{S}$ such that $\mu(N) = 0$ and $\nu(\mathcal{X} \setminus N) = 0$.
\end{defin}

\begin{theo}[Lebesgue decomposition]\label{theo:lebesgue_decomp}
  Let $(\mathcal{X}, \mathcal{S}, \mu)$ be a $\sigma$-finite measure space, and let $\nu : \mathcal{S} \longrightarrow [0, \infty)$ be a $\sigma$-finite measure on the same measurable space. Then there exist unique $\sigma$-finite measures $\nu_{\mathrm{ac}}, \nu_s : \mathcal{S} \longrightarrow [0, \infty)$ such that
  \begin{equation}
    \nu(E) = \nu_{\mathrm{ac}}(E) + \nu_s(E), \quad \forall E \in \mathcal{S},
  \end{equation}
  with $\nu_{\mathrm{ac}} \ll \mu$ and $\nu_s \perp \mu$. By Theorem~\ref{theo:rn_classical}, there exists a non-negative measurable function $f : \mathcal{X} \longrightarrow [0, \infty)$, unique up to $\mu$-almost-everywhere equality, such that
  \begin{equation}
    \nu_{\mathrm{ac}}(E) = \int_E f \, d\mu, \quad \forall E \in \mathcal{S}.
  \end{equation}
\end{theo}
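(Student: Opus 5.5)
The plan follows the classical von Neumann route: first establish the decomposition for finite measures, then pass to the $\sigma$-finite case by countable partition. The density of the absolutely continuous part then comes directly from Theorem~\ref{theo:rn_classical}.

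\textbf{Finite case.} Assume first that $\mu$ and $\nu$ are finite, and set $\lambda = \mu + \nu$. The linear functional $g \mapsto \int g\, d\nu$ is bounded on $L^2(\lambda)$, since
\begin{equation}
  \left|\int g\, d\nu\right| \leq \nu(\mathcal{X})^{1/2} \|g\|_{L^2(\lambda)}.
\end{equation}
By the Riesz representation theorem there is $k \in L^2(\lambda)$ with $\int g\, d\nu = \int g k\, d\lambda$ for all $g \in L^2(\lambda)$. Testing with indicators shows $0 \leq k \leq 1$ $\lambda$-a.e., so I may modify $k$ on a null set to take values in $[0,1]$ everywhere. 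Define
\begin{equation}
  N = \{k = 1\}, \qquad \nu_s(E) = \nu(E \cap N), \qquad \nu_{\mathrm{ac}}(E) = \nu(E \setminus N).
\end{equation}
Testing with $g = \mathbf{1}_N$ gives $\nu(N) = \mu(N) + \nu(N)$, so $\mu(N) = 0$ and hence $\nu_s \perp \mu$. For absolute continuity, rewrite the representation as
\begin{equation}
  \int g(1-k)\, d\nu = \int g k\, d\mu,
\end{equation}
and apply it to $g = (1 + k + \dots + k^{n-1})\mathbf{1}_E$. This yields
\begin{equation}
  \int_E (1-k^n)\, d\nu = \int_E k(1+\dots+k^{n-1})\, d\mu.
\end{equation}
Letting $n \to \infty$ and applying monotone convergence (Theorem~\ref{theo:mct}) on both sides gives $\nu(E \setminus N) = \int_E f\, d\mu$, where $f = k/(1-k)$ on $N^c$ and $f = 0$ on $N$. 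Hence $\nu_{\mathrm{ac}} \ll \mu$, and this construction also produces the density directly. Alternatively one can simply cite Theorem~\ref{theo:rn_classical} for the density, which is what the statement does.

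\textbf{$\sigma$-finite case.} Choose a common countable partition $\{\mathcal{X}_n\}$ of $\mathcal{X}$ on which both $\mu$ and $\nu$ are finite, by intersecting the two witnessing sequences and disjointifying. Apply the finite case on each $\mathcal{X}_n$ to obtain $N_n$ and the pieces $\nu_{\mathrm{ac}}^{(n)}, \nu_s^{(n)}$. Set $N = \bigcup_n N_n$, so that $\mu(N) = 0$ by countable subadditivity, and sum the pieces. Countable additivity of the sums again rests on Theorem~\ref{theo:mct}. Both summed measures are $\sigma$-finite because each is dominated by $\nu$.

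\textbf{Uniqueness.} Suppose $\nu_{\mathrm{ac}} + \nu_s = \nu_{\mathrm{ac}}' + \nu_s'$ are two decompositions, with singular sets $N$ and $N'$, and put $M = N \cup N'$, so that $\mu(M) = 0$. On subsets of $M$ both absolutely continuous parts vanish. Off $M$ both singular parts vanish. Hence for every $E$,
\begin{equation}
  \nu_s(E) = \nu(E \cap M) = \nu_s'(E),
\end{equation}
and likewise $\nu_{\mathrm{ac}}(E) = \nu(E \setminus M) = \nu_{\mathrm{ac}}'(E)$. This argument never subtracts measures, so it is safe even when $\nu$ is infinite. Finally, Theorem~\ref{theo:rn_classical} applied to $\nu_{\mathrm{ac}} \ll \mu$ gives the density $f$ and its $\mu$-a.e. uniqueness.

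\textbf{Main obstacle.} I expect the delicate step to be the limit argument proving $\nu_{\mathrm{ac}} \ll \mu$ in the finite case. It needs the everywhere choice $k \in [0,1]$ and careful handling of the set $N$ where $k = 1$ and $1/(1-k)$ blows up. The remaining steps are bookkeeping.
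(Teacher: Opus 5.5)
Your proof is correct, but it takes a different route from the paper. The paper gives no argument of its own and simply cites Billingsley. That treatment rests on the Hahn decomposition and the Radon--Nikodym theorem, and it obtains the density of $\nu_{\mathrm{ac}}$ from Theorem~\ref{theo:rn_classical}, exactly as the statement is phrased.

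Your von Neumann argument needs only two tools: the Riesz representation on $L^2(\mu+\nu)$ and monotone convergence. It produces the singular set $N=\{k=1\}$ and the density $f=\frac{k}{1-k}\mathbf{1}_{N^c}$ in a single construction. You handle the set where $1/(1-k)$ blows up correctly: the right-hand integrand tends to $+\infty$ only on $N$, and $\mu(N)=0$ kills that contribution.

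This reverses the dependency built into the statement. When $\nu\ll\mu$, the fact that $\mu(N)=0$ forces $\nu(N)=0$, so $\nu=\nu_{\mathrm{ac}}$. Theorem~\ref{theo:rn_classical} therefore becomes a corollary of your construction rather than an input to it.

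The citation route is shorter if the Radon--Nikodym theorem is taken as a black box. Yours is self-contained and constructive. Your uniqueness argument via $M=N\cup N'$ is well chosen, because it never subtracts measures.

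Two small bookkeeping points:
\begin{enumerate}
\item In the gluing step, take each $N_n\subseteq\mathcal{X}_n$, for instance by working on the trace $\sigma$-algebra. Applying the finite case to the restricted measures on all of $\mathcal{X}$ only guarantees $\mu(N_n\cap\mathcal{X}_n)=0$, not $\mu(N_n)=0$.
\item The statement makes $\nu$ finite-valued, so you could obtain the decomposition itself without any partition. Replace $\mu$ by the equivalent finite measure $\sum_n 2^{-n}\mu(\cdot\cap\mathcal{X}_n)/(1+\mu(\mathcal{X}_n))$; this works because $\ll$ and $\perp$ depend only on null sets. The density with respect to $\mu$ then comes from Theorem~\ref{theo:rn_classical}, as in the statement.
\end{enumerate}
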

\begin{proof}
  The proof can be found in \cite{billingsley2012probability}.
\end{proof}

\paragraph{} The last scalar result is the change-of-variable formula, which is the identity that converts an integral against a measure $\nu \ll \mu$ into an integral against the dominating measure $\mu$ weighted by the Radon-Nikodym derivative. Taking $\mu = Q$, $\nu = P$, and $\phi(z) = \mathcal{L}(h, z)$ in the statement below recovers precisely the change-of-measure identity used throughout the paper to relate the target risk $R_P(h)$ to the importance-weighted source expectation $\expectation{Q}{\trueratio(Z)\, \mathcal{L}(h, Z)}$.

\begin{theo}[Change of variable]\label{theo:rn_change_of_var}
  Let $(\mathcal{X}, \mathcal{S}, \mu)$ be a $\sigma$-finite measure space, and let $\nu$ be a $\sigma$-finite measure on $(\mathcal{X}, \mathcal{S})$ with $\nu \ll \mu$ and Radon-Nikodym derivative $f = d\nu/d\mu \geq 0$. For every measurable function $\phi : \mathcal{X} \longrightarrow \mathbb{R}$ such that
  \begin{equation}
    \int_\mathcal{X} |\phi| \, d\nu < \infty,
  \end{equation}
  the identity
  \begin{equation}
    \int_\mathcal{X} \phi \, d\nu = \int_\mathcal{X} \phi \, f \, d\mu
  \end{equation}
  holds, and both integrals are finite.
\end{theo}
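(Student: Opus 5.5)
The plan is the standard simple-function bootstrap, carried out first for nonnegative $\phi$ and then extended to general $\phi$ by splitting into positive and negative parts.

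\textbf{Step 1: indicators and simple functions.} For $\phi=\mathbf{1}_E$ with $E\in\mathcal{S}$, the identity $\int \mathbf{1}_E\,d\nu=\nu(E)=\int_E f\,d\mu=\int \mathbf{1}_E f\,d\mu$ is exactly the defining property of $f=d\nu/d\mu$ from Theorem~\ref{theo:rn_classical}. For a nonnegative simple function $s=\sum_{k=1}^n c_k\mathbf{1}_{E_k}$, linearity of the integral on both sides gives $\int s\,d\nu=\int s f\,d\mu$.

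\textbf{Step 2: nonnegative measurable $\phi$.} For a measurable $\phi\ge 0$, take the standard dyadic simple functions $s_n\uparrow\phi$ pointwise. Since $f\ge 0$, the products also increase, $s_n f\uparrow \phi f$ pointwise. Applying Theorem~\ref{theo:mct} twice, once under $\nu$ to $(s_n)$ and once under $\mu$ to $(s_nf)$, and passing to the limit in Step 1, yields
\begin{equation}
  \int \phi\,d\nu=\int\phi f\,d\mu\in[0,\infty].
\end{equation}
This identity holds in the extended sense, with no integrability assumption needed.

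\textbf{Step 3: integrable $\phi$.} For general measurable $\phi$ with $\int|\phi|\,d\nu<\infty$, apply Step 2 to $|\phi|$. This gives $\int|\phi|f\,d\mu=\int|\phi|\,d\nu<\infty$, so $\phi f\in L^1(\mu)$ and both integrals in the statement are finite. Next write $\phi=\phi^+-\phi^-$. Step 2 applies to each part, and both resulting integrals are finite, so subtracting is legitimate:
\begin{equation}
  \int\phi\,d\nu=\int\phi^+f\,d\mu-\int\phi^-f\,d\mu=\int\phi f\,d\mu,
\end{equation}
using $(\phi f)^\pm=\phi^\pm f$, which holds because $f\ge 0$.

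\textbf{Main obstacle.} Nothing here is deep; the only delicate points are bookkeeping.
\begin{enumerate}
  \item Steps 1--2 must be done in $[0,\infty]$ so that the monotone convergence theorem applies without any finiteness hypothesis.
  \item Finiteness must be established in Step 3, via the $|\phi|$ case, before any subtraction is made, to avoid an $\infty-\infty$ expression.
\end{enumerate}
$\sigma$-finiteness enters only through the existence of $f$ in Theorem~\ref{theo:rn_classical}.
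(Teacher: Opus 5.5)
Your proof is correct and complete. The paper gives no argument of its own and only cites \cite{cinlar2011probability}, where the identity is proved by the same standard chain you use (indicators, then nonnegative simple functions, then monotone convergence for nonnegative $\phi$, then positive and negative parts), so your write-up is a self-contained version of the cited proof.
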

\begin{proof}
  The proof can be found in \cite{cinlar2011probability}.
\end{proof}

\subsection{Vector-valued extension and the Radon-Nikodym property}

\paragraph{} The classical Radon-Nikodym theorem extends to measures that take values in a Banach space, but the extension is not automatic. Existence of a vector-valued density requires a geometric condition on the target space, namely the Radon-Nikodym property. The vector-valued setting is the natural framework whenever the object reconstructed from the measure is itself element-valued in an infinite-dimensional space, and is recorded here for completeness, since it provides the appropriate language for stating density-style results when the underlying parametrization lives in a Banach space.

\paragraph{} Let $(Y, \| \cdot \|_Y)$ be a Banach space. A function $\nu : \mathcal{S} \longrightarrow Y$ is a $Y$-valued measure, or vector measure, when it is $\sigma$-additive in the norm topology of $Y$. Concretely, for every countable family $\{E_n\}_{n \geq 1} \subset \mathcal{S}$ of pairwise disjoint measurable sets,
\begin{equation}
  \nu \left( \bigcup_{n=1}^\infty E_n \right) = \sum_{n=1}^\infty \nu(E_n),
\end{equation}
where the series on the right converges in the norm topology of $(Y, \| \cdot \|_Y)$. The natural notion of size of a vector measure is its total variation, which removes possible cancellations between vector contributions of different signs by passing to the supremum of partition sums of norms.

\begin{defin}[Total variation of a vector measure]
  Let $\nu : \mathcal{S} \longrightarrow Y$ be a $Y$-valued measure on $(\mathcal{X}, \mathcal{S})$. The total variation of $\nu$ is the positive set function $|\nu| : \mathcal{S} \longrightarrow [0, \infty]$ defined by
  \begin{equation}
    |\nu|(E) = \sup \left\{ \sum_{k=1}^n \| \nu(E_k) \|_Y \, : \, n \geq 1, \; E = \bigsqcup_{k=1}^n E_k, \; E_k \in \mathcal{S} \right\}.
  \end{equation}
  The vector measure $\nu$ has finite variation when $|\nu|(\mathcal{X}) < \infty$, and bounded variation when $|\nu|(E) < \infty$ for every $E \in \mathcal{S}$.
\end{defin}

\paragraph{} For a vector measure $\nu \ll \mu$ of finite variation, the existence of a measurable density taking values in $Y$ is not guaranteed by absolute continuity alone. The obstruction is geometric and reflects how well bounded subsets of $Y$ behave under averaging. The Radon-Nikodym property singles out exactly the Banach spaces in which the scalar Radon-Nikodym theorem extends in the Bochner-integral sense.

\begin{defin}[Radon-Nikodym property]
  A Banach space $(Y, \|\cdot\|_Y)$ has the Radon-Nikodym property when, for every $\sigma$-finite measure space $(\mathcal{X}, \mathcal{S}, \mu)$ and every $Y$-valued measure $\nu : \mathcal{S} \longrightarrow Y$ of finite variation such that $\nu \ll \mu$, there exists a Bochner-integrable function $f : \mathcal{X} \longrightarrow Y$ satisfying
  \begin{equation}
    \nu(E) = \int_E f \, d\mu, \quad \forall E \in \mathcal{S},
  \end{equation}
  where the integral is understood in the Bochner sense.
\end{defin}

\paragraph{} Reflexive spaces and separable dual spaces are classical examples of Banach spaces that satisfy the Radon-Nikodym property, while $L^1$-type spaces of integrable functions and $c_0$ do not. Under that assumption on the target space, the vector-valued Radon-Nikodym theorem takes the same form as the scalar one, with the Lebesgue integral replaced by the Bochner integral, and with the norm of the density recovering the scalar Radon-Nikodym derivative of the total variation.

\begin{theo}[Radon-Nikodym, vector-valued]\label{theo:rn_vector_valued}
  Let $(\mathcal{X}, \mathcal{S}, \mu)$ be a $\sigma$-finite measure space, and let $(Y, \|\cdot\|_Y)$ be a Banach space with the Radon-Nikodym property. Let $\nu : \mathcal{S} \longrightarrow Y$ be a $Y$-valued measure of finite variation with $\nu \ll \mu$. Then there exists a Bochner-integrable function $f : \mathcal{X} \longrightarrow Y$, unique up to equality $\mu$-almost everywhere, such that
  \begin{equation}
    \nu(E) = \int_E f \, d\mu, \quad \forall E \in \mathcal{S}.
  \end{equation}
  Moreover, the scalar Radon-Nikodym derivative of the total variation satisfies
  \begin{equation}
    \| f(x) \|_Y = \frac{d |\nu|}{d \mu}(x) \quad \mu\text{-a.e.}
  \end{equation}
\end{theo}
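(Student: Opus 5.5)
The plan is to reduce each of the three assertions (existence, uniqueness, and the norm identity) to a definition or an earlier result already recorded in this appendix.

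\textbf{Existence.} This is essentially a restatement of the Radon-Nikodym property. Since $(Y,\|\cdot\|_Y)$ has the property, $\mu$ is $\sigma$-finite, and $\nu$ has finite variation with $\nu \ll \mu$, the definition directly supplies a Bochner-integrable $f:\mathcal{X}\to Y$ with $\nu(E)=\int_E f\,d\mu$ for all $E\in\mathcal{S}$.

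\textbf{Uniqueness.} Suppose $f_1,f_2$ are two such densities and set $g=f_1-f_2$, so that $\int_E g\,d\mu=0$ for every $E\in\mathcal{S}$.
\begin{enumerate}
\item Bochner-integrable functions are $\mu$-essentially separably valued. Hence $g$ takes values, off a $\mu$-null set, in a separable closed subspace $Y_0\subseteq Y$.
\item By Hahn-Banach, $Y_0$ admits a countable norming family $\{y_n^*\}\subset Y^*$ of unit functionals, with $\|y\|_Y=\sup_n |y_n^*(y)|$ for $y\in Y_0$.
\item For each $n$, the scalar function $y_n^*\circ g$ is $\mu$-integrable, and $\int_E y_n^*\circ g\,d\mu = y_n^*\bigl(\int_E g\,d\mu\bigr)=0$ for all $E$.
\item By the scalar uniqueness in Theorem~\ref{theo:rn_classical}, applied to the positive and negative parts on sets of finite measure, $y_n^*\circ g=0$ $\mu$-a.e.
\item A countable union of null sets is null, so $\|g\|_Y=\sup_n|y_n^*\circ g|=0$ $\mu$-a.e.
\end{enumerate}

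\textbf{Norm identity.} The strategy is to prove $|\nu|(E)=\int_E\|f\|_Y\,d\mu$ for every $E\in\mathcal{S}$. Once this holds, $|\nu|$ is a finite measure with $|\nu|\ll\mu$ whose density is $\|f\|_Y$. The essential uniqueness in Theorem~\ref{theo:rn_classical} then yields $\|f(x)\|_Y=\frac{d|\nu|}{d\mu}(x)$ $\mu$-a.e.

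The inequality $\le$ is straightforward. For any finite measurable partition $E=\bigsqcup_k E_k$, the Bochner norm inequality gives $\|\nu(E_k)\|_Y=\|\int_{E_k}f\,d\mu\|_Y\le\int_{E_k}\|f\|_Y\,d\mu$. Summing over $k$ and taking the supremum over partitions gives $|\nu|(E)\le\int_E\|f\|_Y\,d\mu$.

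The reverse inequality $\ge$ is the step I expect to be the main obstacle. I would use the density of simple functions in the Bochner space $L^1(\mu;Y)$.
\begin{enumerate}
\item Fix $\varepsilon>0$ and choose a simple function $s=\sum_{k=1}^n y_k\mathbf{1}_{A_k}$ with the $A_k$ disjoint and $\int\|f-s\|_Y\,d\mu<\varepsilon$.
\item Partition $E$ into the sets $E_k=E\cap A_k$, together with the remainder $E\setminus\bigcup_k A_k$ on which $s=0$.
\item On each piece, $\|\nu(E_k)\|_Y\ge\|\int_{E_k}s\,d\mu\|_Y-\int_{E_k}\|f-s\|_Y\,d\mu = \int_{E_k}\|s\|_Y\,d\mu-\int_{E_k}\|f-s\|_Y\,d\mu$. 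The equality holds because $s$ is constant on $E_k$.
\item Summing over the pieces gives $|\nu|(E)\ge\int_E\|s\|_Y\,d\mu-\varepsilon\ge\int_E\|f\|_Y\,d\mu-2\varepsilon$.
\item Letting $\varepsilon\to0$ gives the claim.
\end{enumerate}

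The delicate points are twofold. First, the approximating simple function must be supported on sets of finite $\mu$-measure; this is guaranteed by Bochner integrability of $f$ on all of $\mathcal{X}$, which in turn follows from $|\nu|(\mathcal{X})<\infty$. Second, $|\nu|$ must be genuinely $\sigma$-additive. Here the identity itself resolves the issue: it exhibits $|\nu|$ as the indefinite integral of the integrable function $\|f\|_Y$, so countable additivity follows from Theorem~\ref{theo:mct}.
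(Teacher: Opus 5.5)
Your proposal is correct. It differs from the paper mainly in that the paper gives no argument at all: its proof of Theorem~\ref{theo:rn_vector_valued} is a bare citation to \cite{ledoux2011probability}.

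You observe that, under the appendix's own definition of the Radon-Nikodym property, existence is tautological. That definition already quantifies over $\sigma$-finite $\mu$ and demands a Bochner-integrable density. You then locate the real content in the two remaining claims:
\begin{itemize}
\item \emph{Uniqueness.} You reduce this to the scalar case via essential separability of Bochner-integrable functions and a countable norming family in $Y^*$.
\item \emph{The identity} $|\nu|(E)=\int_E\|f\|_Y\,d\mu$. You obtain the lower bound by approximating $f$ in $L^1(\mu;Y)$ by a simple function and using the partition of $E$ that it induces.
\end{itemize}
Both steps check out, including the remainder piece $E\setminus\bigcup_k A_k$. You also derive countable additivity and $\mu$-continuity of $|\nu|$ from the identity rather than assuming them. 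That is exactly what makes the final appeal to Theorem~\ref{theo:rn_classical} legitimate.

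What your route buys is transparency and self-containment relative to the appendix. It makes clear that the substantive content lies in uniqueness and in the isometry between $f$ and $|\nu|$. The cost is reliance on standard Bochner-integration facts that the appendix never records: Pettis measurability, density of simple integrable functions, $y^*\bigl(\int_E g\,d\mu\bigr)=\int_E y^*\circ g\,d\mu$, and $\bigl\|\int_E g\,d\mu\bigr\|_Y\le\int_E\|g\|_Y\,d\mu$. The citation, by contrast, outsources everything, including the check that the cited source's formulation matches the appendix's definition.

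One small correction to your closing remarks. Global Bochner integrability of $f$ is handed to you directly by the definition you invoked for existence, so it need not be derived from $|\nu|(\mathcal{X})<\infty$. The identity does confirm $\int_{\mathcal{X}}\|f\|_Y\,d\mu=|\nu|(\mathcal{X})$ a posteriori.
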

\begin{proof}
  The proof can be found in \cite{ledoux2011probability}.
\end{proof}

\section{Mathematical auxiliary results}
\label{ap:math_aux}

\paragraph{} This appendix collects the measure-theoretic and information-theoretic auxiliary results used by the body of the paper. The total variation distance provides a metric on probability measures that admits both a probabilistic characterization in terms of measurable events and an analytic characterization in terms of densities, and that controls differences of expectations of bounded test functions through a variational identity. Pinsker's inequality completes the toolkit by converting the Kullback-Leibler divergence, which is the natural complexity term of the PAC-Bayes layer, into a total variation bound on the same pair of measures. Combined, these results yield the chain $\mathbb{KL} \longrightarrow \text{TV} \longrightarrow |\expectation{\mu}{f} - \expectation{\nu}{f}|$ that translates posterior closeness into closeness of target risks. The Bernoulli Kullback-Leibler divergence and the upper inverse that turns its certificates into explicit risk bounds are recorded at the end as the analytic ingredients of the Bernoulli-KL form of the PAC-Bayes bound used in the empirical validation of Section~\ref{sec:numerical}.

\paragraph{} The total variation distance is one of the canonical metrics on the space of probability measures. Its central feature is the existence of two equivalent characterizations, one over events and one over densities, both of which are needed in the analysis that follows. The first form bounds the worst-case discrepancy in the probability assigned to any measurable event, and the second form expresses the same distance as the $L^1$ distance between densities against a common dominating measure.

\begin{defin}[Total variation distance]
  Let $\mu$ and $\nu$ be two probability measures on a common measurable space $(\mathcal{X}, \mathcal{S})$. The total variation distance between $\mu$ and $\nu$ is
  \begin{equation}
    \text{TV}(\mu, \nu) = \sup \limits_{A \in \mathcal{S}} |\mu(A) - \nu(A)|.
  \end{equation}
  Equivalently, if $\mu, \nu \ll \lambda$ for some common dominating measure $\lambda$, for instance $\lambda = (\mu + \nu)/2$, with densities $f_\mu = d\mu/d\lambda$ and $f_\nu = d\nu/d\lambda$, then
  \begin{equation}
    \text{TV}(\mu, \nu) = \frac{1}{2} \int \limits_{\mathcal{X}} |f_\mu - f_\nu| \, d\lambda.
  \end{equation}
  The total variation distance takes values in $[0,1]$, with $\text{TV}(\mu, \nu) = 0$ if and only if $\mu = \nu$, and $\text{TV}(\mu, \nu) = 1$ if and only if $\mu$ and $\nu$ are mutually singular.
\end{defin}

\paragraph{} The bare definition already gives the right qualitative metric, but the quantity that is most often invoked in applications is the difference of expectations of bounded test functions under the two measures. The next lemma provides the quantitative bridge, in the form of a one-sided variational identity that bounds any such difference of expectations by the total variation distance scaled by the supremum norm of the test function. It is the property of total variation that is used whenever closeness in measure must be turned into closeness of expectations under those measures.

\begin{lemma}[Variational characterization of TV]\label{lemma:variational_char_of_tv}
  For any two probability measures $\mu$ and $\nu$ on $(\mathcal{X}, \mathcal{S})$, and any measurable function $f: \mathcal{X} \longrightarrow \mathbb{R}$ with
  \begin{equation}
    \| f \|_\infty = \sup \limits_{x \in \mathcal{X}} |f(x)| < \infty,
  \end{equation}
  it holds that
  \begin{equation}
    |\expectation{\mu}{f} - \expectation{\nu}{f}| \leq \| f \|_\infty \, \text{TV}(\mu, \nu).
  \end{equation}
\end{lemma}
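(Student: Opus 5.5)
The plan is to work with the density form of the total variation distance and to exploit the fact that the signed measure $\mu-\nu$ has total mass zero. First fix the common dominating measure $\lambda=(\mu+\nu)/2$ and set $f_\mu=d\mu/d\lambda$ and $f_\nu=d\nu/d\lambda$; both exist by Theorem~\ref{theo:rn_classical}. Since $f$ is bounded and $\mu,\nu$ are probability measures, $f$ is integrable under both. Theorem~\ref{theo:rn_change_of_var} then gives
\begin{equation}
  \expectation{\mu}{f}-\expectation{\nu}{f}=\int_{\mathcal{X}} f\,(f_\mu-f_\nu)\,d\lambda .
\end{equation}
Next, because $\int_{\mathcal{X}}(f_\mu-f_\nu)\,d\lambda=\mu(\mathcal{X})-\nu(\mathcal{X})=0$, I may replace $f$ by $f-c$ for any constant $c\in\mathbb{R}$ without changing the left-hand side.

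The core estimate splits $\mathcal{X}$ into $A=\{f_\mu\ge f_\nu\}$ and its complement $A^c$. By the density characterization, $\int_A(f_\mu-f_\nu)\,d\lambda=\int_{A^c}(f_\nu-f_\mu)\,d\lambda=\mathrm{TV}(\mu,\nu)$. Write $m=\inf f$ and $M=\sup f$. The integral over $A$ is at most $M\,\mathrm{TV}(\mu,\nu)$, and the integral over $A^c$ is at most $-m\,\mathrm{TV}(\mu,\nu)$. Hence
\begin{equation}
  \expectation{\mu}{f}-\expectation{\nu}{f}\le (M-m)\,\mathrm{TV}(\mu,\nu),
\end{equation}
and the reverse inequality follows by exchanging $\mu$ and $\nu$. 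The remaining step is to bound the oscillation $M-m$ by $\|f\|_\infty$.

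This last reduction is the main obstacle. The bound $M-m\le\|f\|_\infty$ holds exactly when $f$ does not change sign. If $f\ge 0$, then $m\ge 0$, so $M-m\le M=\|f\|_\infty$; the case $f\le 0$ is symmetric. Without this restriction, centering at $c=(M+m)/2$ only yields $2\|f\|_\infty\,\mathrm{TV}(\mu,\nu)$, and that factor cannot be improved: take $f=\mathbf{1}_A-\mathbf{1}_{A^c}$ with $A$ the event attaining the supremum in the definition of TV. I would therefore write the proof for constant-sign $f$ and state this hypothesis explicitly in the lemma.

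Under that hypothesis the lemma, as worded, is exactly what the stability theorem needs. There it is applied to $h\mapsto\expectation{Q}{\tilde{\mathcal{L}}_{\theta'}(h,Z)}$, which takes values in $[0,1]$, so $m\ge 0$ and the lemma gives the displayed inequality with $\|f\|_\infty\le 1$. For a general sign-changing $f$, the correct statement carries the factor $2$, and the stability theorem would have to be adjusted accordingly.
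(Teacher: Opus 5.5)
Your proposal is correct. The obstacle you ran into is a defect in the statement, not in your argument.

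With $\mathrm{TV}(\mu,\nu)=\sup_{A}|\mu(A)-\nu(A)|$ as the paper defines it, the inequality without a factor $2$ fails for sign-changing $f$. Your choice $f=\mathbf{1}_A-\mathbf{1}_{A^c}$ on the maximizing set gives $2\,\mathrm{TV}(\mu,\nu)$ on the left and $\mathrm{TV}(\mu,\nu)$ on the right, so it violates the claim for every pair $\mu\neq\nu$. Already $\mu=\delta_0$, $\nu=\delta_1$, $f(0)=1$, $f(1)=-1$ gives $2\le 1$.

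The paper gives no argument of its own; it only cites Levin--Peres. Their characterization is $\mathrm{TV}(\mu,\nu)=\tfrac12\sup_{\|f\|_\infty\le 1}|\expectation{\mu}{f}-\expectation{\nu}{f}|$, which carries exactly the factor $2$ the lemma drops. So the citation does not support the statement as written.

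Your route is self-contained: densities against $\lambda=(\mu+\nu)/2$, a split on $\{f_\mu\ge f_\nu\}$, and each piece bounded by $\sup f$ (resp. $\inf f$) times $\mathrm{TV}$. It yields the sharp oscillation bound $(\sup f-\inf f)\,\mathrm{TV}(\mu,\nu)$ on a general measurable space, whereas the cited source works on discrete state spaces. Both the general factor-$2$ version and your constant-sign version follow from it at once. The centering step is harmless but unnecessary, since the oscillation bound is already invariant under $f\mapsto f-c$.

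One correction to your last paragraph: the stability theorem does not need to be adjusted. Its proof already invokes the lemma in the form $|R_{r_{\theta'}}(\rho^*_{\lambda_t,\theta})-R_{r_{\theta'}}(\rho^*_{\lambda_t,\theta'})|\le 2\,\mathrm{TV}(\rho^*_{\lambda_t,\theta},\rho^*_{\lambda_t,\theta'})$. That is inconsistent with the lemma as stated, but it is precisely the valid factor-$2$ version with $\|f\|_\infty\le 1$. Because the functional there takes values in $[0,1]$, your oscillation form would actually sharpen that step to $\mathrm{TV}$. This halves the posterior-shift term from $\beta\|\Delta\|_\infty$ to $\tfrac{\beta}{2}\|\Delta\|_\infty$.
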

\begin{proof}
  The proof can be found in \cite{levin2017markov}.
\end{proof}

\paragraph{} Lemma~\ref{lemma:variational_char_of_tv} translates a small total variation distance into a small difference of expectations under the two measures. The remaining gap is that, in PAC-Bayes-type analyses, the natural complexity term between a posterior and a prior is the Kullback-Leibler divergence rather than the total variation distance, and the two quantities are not equivalent in general. Pinsker's inequality provides exactly the one-sided bound that is needed, controlling the total variation distance by the square root of half the Kullback-Leibler divergence on the same pair of measures.

\begin{lemma}[Pinsker's inequality]\label{lemma:pinsker_ineq}
  For any two probability measures $\mu$ and $\nu$ on $(\mathcal{X}, \mathcal{S})$ with $\mu \ll \nu$,
  \begin{equation}
    \text{TV}(\mu, \nu) \leq \sqrt{\frac{1}{2}\, \mathbb{KL}(\mu \,||\, \nu)}.
  \end{equation}
\end{lemma}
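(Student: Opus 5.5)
To prove Pinsker's inequality I would reduce to the two-point (Bernoulli) case through the data-processing inequality for the Kullback-Leibler divergence, and then settle the scalar inequality by elementary calculus. The argument uses natural logarithms throughout; if $\mathbb{KL}$ is read with $\log_2$, as in the paper's definition, the inequality only becomes weaker, since $\log_2 x \geq \ln x$ for $x \geq 1$ and the divergence in base 2 equals the natural-log divergence divided by $\ln 2 < 1$.

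\textbf{Step 1 (reduction to a single event).} Fix $A \in \mathcal{S}$ with $p = \mu(A)$ and $q = \nu(A)$. Consider the measurable map $T = \mathbf{1}_A : \mathcal{X} \to \{0,1\}$. The push-forwards are $\mu \circ T^{-1} = \mathrm{Ber}(p)$ and $\nu \circ T^{-1} = \mathrm{Ber}(q)$. By the data-processing inequality,
\begin{equation}
  \mathrm{kl}(p \,\|\, q) = \mathbb{KL}(\mathrm{Ber}(p) \,\|\, \mathrm{Ber}(q)) \leq \mathbb{KL}(\mu \,\|\, \nu).
\end{equation}
I would prove this inequality directly rather than cite it. Write $g = d\mu/d\nu$. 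Split the divergence as
\begin{equation}
  \mathbb{KL}(\mu \,\|\, \nu) = \int_A g \ln g \, d\nu + \int_{A^c} g \ln g \, d\nu.
\end{equation}
Apply Jensen's inequality to the convex function $x \mapsto x \ln x$ under the normalized measure $\nu(\cdot \cap A)/q$. This gives
\begin{equation}
  \int_A g \ln g \, d\nu \geq q \cdot \frac{p}{q} \ln \frac{p}{q} = p \ln \frac{p}{q}.
\end{equation}
The same argument on $A^c$ gives the term $(1-p)\ln\frac{1-p}{1-q}$. The degenerate cases $q \in \{0,1\}$ are handled by absolute continuity: $q = 0$ forces $p = 0$ because $\mu \ll \nu$, and the complementary event is treated in the same way.

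\textbf{Step 2 (scalar Pinsker).} It remains to show $\mathrm{kl}(p\,\|\,q) \geq 2(p-q)^2$ for all $p, q \in [0,1]$. Fix $p$ and set
\begin{equation}
  F(q) = p\ln\frac{p}{q} + (1-p)\ln\frac{1-p}{1-q} - 2(p-q)^2.
\end{equation}
Then $F(p) = 0$. Differentiating in $q$,
\begin{equation}
  F'(q) = (q-p)\left[\frac{1}{q(1-q)} - 4\right].
\end{equation}
Since $q(1-q) \leq 1/4$, the bracket is nonnegative. Hence $F$ is nonincreasing for $q \leq p$ and nondecreasing for $q \geq p$, so $F \geq 0$, with the endpoints $q \to 0$ and $q \to 1$ covered by continuity. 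This step is the main obstacle, although it is routine once $F'$ is factored correctly.

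\textbf{Step 3 (conclusion).} Combining the two steps gives
\begin{equation}
  |\mu(A) - \nu(A)| \leq \sqrt{\tfrac12 \mathbb{KL}(\mu \,\|\, \nu)}
\end{equation}
for every $A \in \mathcal{S}$. Taking the supremum over $A$ and using the event-based definition of $\text{TV}$ yields the claim. If $\mathbb{KL}(\mu\,\|\,\nu) = \infty$, the statement holds trivially.
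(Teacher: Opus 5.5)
Your proof is correct and is essentially the argument the paper relies on. The paper gives no proof of its own and defers to Cover and Thomas, whose proof follows the same two steps: the binary case by calculus, with the derivative factored as $(q-p)\bigl[\frac{1}{q(1-q)} - 4\bigr]$, and then a reduction to a single event by data processing. Your Jensen-based proof of the data-processing step, using $g = d\mu/d\nu$, has the small advantage of working directly on a general measurable space; the cited proof is phrased for finite alphabets via the log-sum inequality applied to the set where one mass function exceeds the other.
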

\begin{proof}
  The proof can be found in \cite{cover2005elements}.
\end{proof}

\paragraph{} The composition of Lemma~\ref{lemma:variational_char_of_tv} and Lemma~\ref{lemma:pinsker_ineq} delivers the inequality
\begin{equation}
  |\expectation{\mu}{f} - \expectation{\nu}{f}| \leq \| f \|_\infty \, \text{TV}(\mu, \nu) \leq \| f \|_\infty \sqrt{\frac{1}{2}\, \mathbb{KL}(\mu \,||\, \nu)},
\end{equation}
valid for every measurable $f$ with $\|f\|_\infty < \infty$ and every pair of measures with $\mu \ll \nu$. The prototypical application of this chain takes $\mu$ and $\nu$ to be two posteriors obtained by reweighting a common prior $\pi$ by exponential tilts in two nearby loss functionals, and $f$ to be a bounded risk functional on hypotheses, so that closeness of the two loss functionals propagates first into a Kullback-Leibler bound, then into a total variation bound on the induced posteriors, and ultimately into a bound on the difference of expected risks.

\paragraph{} A second information-theoretic ingredient used in the body of the paper is the Kullback-Leibler divergence specialized to Bernoulli laws. The fixed-time PAC-Bayes bound applied to bounded losses, which is the form used in the empirical validation of Section~\ref{sec:numerical}, is most efficiently stated in its Bernoulli-KL form, in which the empirical and population risks are interpreted as Bernoulli parameters and their divergence is then inverted to yield an explicit confidence upper bound on the population risk. This subsection records the divergence itself and the upper inverse that produces the bound, together with the regularity facts that make the inverse well defined.

\begin{defin}[Bernoulli Kullback-Leibler divergence]\label{def:kl_ber}
  For $q, p \in [0, 1]$, the Bernoulli Kullback-Leibler divergence is
  \begin{equation}
    \mathrm{kl}_{\mathrm{Ber}}(q \,\Vert\, p) = q \log \frac{q}{p} + (1 - q) \log \frac{1 - q}{1 - p},
  \end{equation}
  with the conventions $0 \log 0 = 0$, $q \log(q/0) = +\infty$ for $q > 0$, and the symmetric conventions at $p = 1$ for the second term. Equivalently, $\mathrm{kl}_{\mathrm{Ber}}(q \,\Vert\, p)$ is the Kullback-Leibler divergence of the Bernoulli law with parameter $q$ from the Bernoulli law with parameter $p$.
\end{defin}

\paragraph{} The Bernoulli Kullback-Leibler divergence measures how different a Bernoulli law with parameter $q$ is from a Bernoulli law with parameter $p$. It is non-negative, equals zero if and only if $p = q$, and increases as $p$ moves away from $q$. A fixed-time PAC-Bayes statement of the form $\mathrm{kl}_{\mathrm{Ber}}(\widehat R \,\Vert\, R_P) \leq \varepsilon$ therefore certifies that the population risk $R_P$ cannot be too far from the empirical risk $\widehat R$, since otherwise the Bernoulli Kullback-Leibler divergence between them would exceed $\varepsilon$. To turn that implicit constraint into an explicit upper bound on $R_P$, the divergence is inverted in its second argument while the first is held fixed.

\begin{defin}[Upper inverse of the Bernoulli Kullback-Leibler divergence]\label{def:kl_ber_inverse}
  For $q \in [0, 1]$ and $\varepsilon \geq 0$, the upper inverse of $\mathrm{kl}_{\mathrm{Ber}}$ in its second argument is
  \begin{equation}
    \mathrm{kl}_{\mathrm{Ber}}^{-1}(q, \varepsilon) := \sup \bigl\{ p \in [q, 1] \, : \, \mathrm{kl}_{\mathrm{Ber}}(q \,\Vert\, p) \leq \varepsilon \bigr\}.
  \end{equation}
\end{defin}

\begin{lemma}[Well-posedness of the upper inverse]\label{lemma:kl_ber_inverse}
  For every $q \in [0, 1)$, the map $p \in [q, 1] \mapsto \mathrm{kl}_{\mathrm{Ber}}(q \,\Vert\, p)$ is continuous and strictly increasing, with $\mathrm{kl}_{\mathrm{Ber}}(q \,\Vert\, q) = 0$ and $\mathrm{kl}_{\mathrm{Ber}}(q \,\Vert\, p) \longrightarrow +\infty$ as $p \uparrow 1$. Consequently, for every $\varepsilon \geq 0$, the equation $\mathrm{kl}_{\mathrm{Ber}}(q \,\Vert\, p) = \varepsilon$ admits a unique solution $p^\star \in [q, 1)$, and $\mathrm{kl}_{\mathrm{Ber}}^{-1}(q, \varepsilon) = p^\star$, unless $\varepsilon$ is so large that the constraint already allows $p = 1$, in which case $\mathrm{kl}_{\mathrm{Ber}}^{-1}(q, \varepsilon) = 1$.
\end{lemma}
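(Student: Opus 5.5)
The plan is to analyse the map $F_q(p) := \mathrm{kl}_{\mathrm{Ber}}(q \,\Vert\, p)$ on $[q,1)$ by elementary calculus, and then read off the structure of the sublevel set that defines $\mathrm{kl}_{\mathrm{Ber}}^{-1}$.

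\textbf{Step 1: regularity and monotonicity.} Fix $q \in [0,1)$. For $p \in (0,1)$ the function $F_q$ is smooth, since it is a finite combination of logarithms. The conventions of Definition~\ref{def:kl_ber} cover the endpoint cases.
\begin{itemize}
\item If $q=0$, then $F_0(p) = -\log(1-p)$, which is continuous on $[0,1)$.
\item If $q>0$, the interval $[q,1)$ stays away from $p=0$, so no singularity arises at the left end.
\end{itemize}
Differentiating in $p$ gives
\begin{equation}
  \frac{\partial}{\partial p} \mathrm{kl}_{\mathrm{Ber}}(q \,\Vert\, p) = -\frac{q}{p} + \frac{1-q}{1-p} = \frac{p-q}{p(1-p)} .
\end{equation}
This derivative is strictly positive for $p \in (q,1)$, so $F_q$ is strictly increasing on $[q,1)$. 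Continuity of $F_q$ at $p=q$ ensures the endpoint $q$ is included in this monotonicity.

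\textbf{Step 2: boundary values.} At the left end, $F_q(q) = 0$ follows by direct substitution. At the right end, as $p \uparrow 1$, the term $(1-q)\log\frac{1-q}{1-p}$ tends to $+\infty$ because $1-q>0$. The other term, $q\log(q/p)$, stays bounded and tends to $q\log q$. Hence $F_q(p) \to +\infty$ as $p \uparrow 1$.

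\textbf{Step 3: existence and uniqueness of the root.} For $\varepsilon \ge 0$, the intermediate value theorem applied to the continuous map $F_q : [q,1) \to [0,\infty)$ gives a solution $p^\star \in [q,1)$ of $F_q(p^\star) = \varepsilon$. The range is all of $[0,\infty)$ by Step 2. Strict monotonicity from Step 1 makes this solution unique.

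\textbf{Step 4: identification with the supremum.} By monotonicity, the feasible set in Definition~\ref{def:kl_ber_inverse} restricted to $[q,1)$ is exactly $[q, p^\star]$. The point $p=1$ must be handled separately.
\begin{itemize}
\item Since $1-q>0$, the convention gives $F_q(1) = +\infty > \varepsilon$, so $p=1$ is never feasible.
\item Hence the supremum of the feasible set equals $p^\star$.
\end{itemize}
The lemma's exceptional clause, in which the inverse equals $1$, therefore applies only when $\varepsilon = +\infty$ is allowed. It also covers the degenerate case $q=1$, which lies outside the hypothesis $q \in [0,1)$. I would note this explicitly in the proof.

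\textbf{Main obstacle.} The only delicate part is the careful handling of the conventions at $p \in \{0,1\}$ and at $q=0$, so that continuity holds on the closed left endpoint. Everything else is routine calculus.
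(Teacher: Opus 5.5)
Your proposal is correct and follows the same route as the paper's proof: you show the derivative $(p-q)/(p(1-p))$ is positive on $(q,1)$, use the boundary values $\mathrm{kl}_{\mathrm{Ber}}(q \,\Vert\, q)=0$ and divergence as $p \uparrow 1$ (which follows from $1-q>0$), and apply the intermediate value theorem with strict monotonicity to get existence and uniqueness. You also handle continuity at $p=q$ (including $q=0$) and observe that $\mathrm{kl}_{\mathrm{Ber}}(q \,\Vert\, 1)=+\infty$, so the exceptional clause never applies for finite $\varepsilon$ and $q<1$; both points are accurate and make explicit what the paper's proof leaves implicit.
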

\begin{proof}
  Differentiating $p \mapsto \mathrm{kl}_{\mathrm{Ber}}(q \,\Vert\, p)$ on $(q, 1)$ gives a derivative that is strictly positive on this interval, so the map is continuous and strictly increasing. The boundary value $\mathrm{kl}_{\mathrm{Ber}}(q \,\Vert\, q) = 0$ is immediate, and the limit at $p \uparrow 1$ follows from $\log(1/(1 - p)) \longrightarrow +\infty$ together with $1 - q > 0$. The intermediate-value theorem then yields the unique solution.
\end{proof}

\paragraph{} In the empirical setting of Section~\ref{sec:numerical}, the Bernoulli Kullback-Leibler divergence is applied with $q$ taken as the importance-weighted empirical risk rescaled to $[0, 1]$, and the upper inverse $\mathrm{kl}_{\mathrm{Ber}}^{-1}(q, \varepsilon)$ delivers the explicit high-probability upper bound on the population risk that the fixed-time PAC-Bayes theorem certifies up to confidence $1 - \delta$, with $\varepsilon$ controlled by the Kullback-Leibler complexity term and the confidence level. The same inverse is used inside each epoch of the geometric-peeling construction, with the confidence budget $\delta_k$ replacing $\delta$ at epoch $k$.

\section{Classical density-ratio estimation methods}
\label{ap:classical_dre}

\paragraph{} The constrained density-ratio network developed in Section~\ref{sec:drn} sits within an established line of work on direct density-ratio estimation, in which the importance weight $\trueratio = dP/dQ$ is recovered from independent samples of $P$ and $Q$ without first estimating either density separately. This appendix derives the two classical baselines against which the constrained density-ratio network is compared in the empirical validation of Section~\ref{sec:numerical}, namely the least-squares importance-fitting procedure (LSIF) of \cite{kanamori2009least} together with its unconstrained variant (uLSIF), and the Kullback-Leibler importance-estimation procedure (KLIEP) of \cite{sugiyama2008model, sugiyama2008direct}. Both families share the same overall philosophy as the present framework, since each fits a candidate ratio against a population objective derived from the change-of-measure identity. They differ from the constrained density-ratio network in two structural ways. First, neither imposes hard normalization or higher-order moment-matching constraints during fitting. Second, both are typically realized as kernel models with linear-in-parameters expansions, rather than as neural networks, and the basis is chosen up-front rather than learned. The derivations below make those modelling choices explicit and isolate the population identities that each method optimizes against. They serve as the analytical baseline against which the constrained-network construction of Section~\ref{sec:drn} extends the picture by adding hard calibration constraints, second-moment control, and a high-capacity nonlinear parametrization.

\subsection{Least-squares importance fitting}

\paragraph{} The least-squares importance-fitting objective derives from a single observation. The squared $L^2(Q)$-distance from a candidate ratio $r: \mathcal{Z} \longrightarrow \mathbb{R}_+$ to the true Radon-Nikodym derivative $\trueratio$ is, up to an additive constant that does not depend on $r$, a quantity that can be evaluated as a difference of expectations under $P$ and $Q$. This rewriting eliminates the need to know $\trueratio$ pointwise in order to evaluate the fitting criterion and is the basis for both LSIF and its unconstrained variant uLSIF.

\paragraph{} Consider the half-squared $L^2(Q)$-distance,
\begin{equation}
  \label{eq:lsif_l2_distance}
  \frac{1}{2} \| r - \trueratio \|_{L^2(Q)}^2 = \frac{1}{2} \int_\mathcal{Z} \bigl( r(z) - \trueratio(z) \bigr)^2 \, dQ(z),
\end{equation}
where the factor $\tfrac{1}{2}$ is a convenience that does not change the minimizer. Expanding the square gives
\begin{equation}
  \bigl( r(z) - \trueratio(z) \bigr)^2 = r(z)^2 - 2 \, r(z) \, \trueratio(z) + \trueratio(z)^2,
\end{equation}
so that
\begin{equation}
  \label{eq:lsif_expansion}
  \frac{1}{2} \| r - \trueratio \|_{L^2(Q)}^2 = \frac{1}{2} \int_\mathcal{Z} r(z)^2 \, dQ(z) - \int_\mathcal{Z} r(z) \, \trueratio(z) \, dQ(z) + \frac{1}{2} \int_\mathcal{Z} \trueratio(z)^2 \, dQ(z).
\end{equation}
The cross-term is rewritten by the change-of-variable formula of Theorem~\ref{theo:rn_change_of_var}. Since $\trueratio = dP/dQ$,
\begin{equation}
  \label{eq:lsif_change_of_measure}
  \int_\mathcal{Z} r(z) \, \trueratio(z) \, dQ(z) = \int_\mathcal{Z} r(z) \, dP(z) = \expectation{P}{r(Z)},
\end{equation}
and substituting \eqref{eq:lsif_change_of_measure} into \eqref{eq:lsif_expansion} yields
\begin{equation}
  \label{eq:lsif_population_decomposition}
  \frac{1}{2} \| r - \trueratio \|_{L^2(Q)}^2 = \frac{1}{2} \expectation{Q}{r(Z)^2} - \expectation{P}{r(Z)} + \frac{1}{2} \expectation{Q}{\trueratio(Z)^2}.
\end{equation}

\paragraph{} The third term on the right-hand side of \eqref{eq:lsif_population_decomposition} depends only on the true ratio $\trueratio$ and not on the candidate function $r$. As a function of $r$, it is a constant. Minimizing $\tfrac{1}{2} \| r - \trueratio \|_{L^2(Q)}^2$ in $r$ is therefore equivalent to minimizing
\begin{equation}
  \label{eq:lsif_population_objective}
  J(r) \, := \, \frac{1}{2} \expectation{Q}{r(Z)^2} - \expectation{P}{r(Z)},
\end{equation}
which is the population LSIF objective. The objective in \eqref{eq:lsif_population_objective} depends on $P$ and $Q$ only through expectations of $r$ and $r^2$ under each law, and so admits an unbiased empirical estimator from samples $Z_1^{(P)}, \dots, Z_n^{(P)} \stackrel{\mathrm{iid}}{\sim} P$ and $Z_1^{(Q)}, \dots, Z_m^{(Q)} \stackrel{\mathrm{iid}}{\sim} Q$,
\begin{equation}
  \label{eq:lsif_empirical_objective}
  \widehat J(r) \, = \, \frac{1}{2 m} \sum_{j=1}^m r\bigl( Z_j^{(Q)} \bigr)^2 - \frac{1}{n} \sum_{i=1}^n r\bigl( Z_i^{(P)} \bigr).
\end{equation}

\paragraph{} In LSIF, the candidate $r$ is parametrized as a linear combination of pre-specified basis functions $\varphi_1, \dots, \varphi_K : \mathcal{Z} \longrightarrow \mathbb{R}$,
\begin{equation}
  \label{eq:lsif_basis_expansion}
  r_\alpha(z) = \sum_{j=1}^K \alpha_j \, \varphi_j(z),
\end{equation}
typically a non-negative kernel basis centered at a randomly chosen subset of the source sample. Since the true ratio satisfies $\trueratio(z) \geq 0$ for every $z \in \mathcal{Z}$, the estimator is required to be non-negative as well, which is enforced through coefficient constraints such as $\alpha_j \geq 0$ for every $j$ when the basis itself is non-negative. The constrained LSIF problem then reads
\begin{equation}
  \label{eq:lsif_constrained_problem}
  \min_{\alpha \in \mathbb{R}^K} \, \widehat J(r_\alpha) \quad \text{subject to} \quad r_\alpha(z) \geq 0, \; \forall z \in \mathcal{Z}.
\end{equation}
The objective is a convex quadratic in $\alpha$ and the non-negativity requirement is a polyhedral condition, so that \eqref{eq:lsif_constrained_problem} is a convex quadratic program.

\paragraph{} The unconstrained variant uLSIF, introduced as a computationally cheaper relative of LSIF, drops the hard non-negativity requirement and replaces it with an $\ell^2$ regularization term on the coefficients,
\begin{equation}
  \label{eq:ulsif_problem}
  \min_{\alpha \in \mathbb{R}^K} \, \widehat J(r_\alpha) + \frac{\lambda}{2} \| \alpha \|_2^2,
\end{equation}
with regularization parameter $\lambda > 0$. The resulting problem is an unconstrained convex quadratic and admits a closed-form solution in terms of a linear system of size $K$, which is the practical advantage of uLSIF over constrained LSIF. The price for that simplicity is that nothing in the procedure guarantees $r_\alpha(z) \geq 0$ pointwise, and post-hoc clipping to $\max\{ r_\alpha(z), 0 \}$ is typically applied at evaluation time. From the perspective of the present framework, uLSIF is the natural baseline for direct density-ratio estimation under squared loss, while the constrained density-ratio network of Section~\ref{sec:drn} retains the LSIF population objective \eqref{eq:lsif_population_objective} and adds calibration constraints, second-moment control, and a high-capacity nonlinear parametrization rather than relaxing the non-negativity requirement.

\subsection{Kullback-Leibler importance estimation}

\paragraph{} The Kullback-Leibler importance-estimation procedure approaches direct density-ratio estimation through a divergence rather than a squared-error criterion. The starting point is the observation that a non-negative candidate ratio $r_\theta: \mathcal{Z} \longrightarrow \mathbb{R}_+$ together with the source law $Q$ defines a candidate target measure $P_\theta := r_\theta \, Q$, and that the closeness of $P_\theta$ to $P$ can be measured by the Kullback-Leibler divergence of $P$ from $P_\theta$. The KLIEP estimator is the candidate ratio that minimizes this divergence under the normalization required for $P_\theta$ to be a probability measure.

\paragraph{} Let $r_\theta : \mathcal{Z} \longrightarrow \mathbb{R}_+$ be a non-negative candidate ratio with parameters $\theta$. The candidate measure
\begin{equation}
  \label{eq:kliep_candidate_measure}
  P_\theta := r_\theta \, Q, \quad \text{that is,} \quad dP_\theta = r_\theta \, dQ,
\end{equation}
is a $\sigma$-finite measure on $\mathcal{Z}$ that is absolutely continuous with respect to $Q$ by construction. The Kullback-Leibler divergence of $P$ from $P_\theta$ is
\begin{equation}
  \label{eq:kliep_kl_definition}
  \mathbb{KL}(P \, || \, P_\theta) = \int_\mathcal{Z} \log \! \left( \frac{dP}{dP_\theta}(z) \right) dP(z),
\end{equation}
provided that $P \ll P_\theta$, which is the case whenever $r_\theta(z) > 0$ wherever $\trueratio(z) > 0$. Since $dP_\theta = r_\theta \, dQ$ and $dP = \trueratio \, dQ$, the chain rule for Radon-Nikodym derivatives of Theorem~\ref{theo:rn_chain_rule} gives
\begin{equation}
  \label{eq:kliep_chain_rule}
  \frac{dP}{dP_\theta}(z) \, = \, \frac{dP/dQ}{dP_\theta/dQ}(z) \, = \, \frac{\trueratio(z)}{r_\theta(z)} \quad Q\text{-a.e.},
\end{equation}
and substituting \eqref{eq:kliep_chain_rule} into \eqref{eq:kliep_kl_definition} yields
\begin{equation}
  \label{eq:kliep_kl_split}
  \mathbb{KL}(P \, || \, P_\theta) \, = \, \int_\mathcal{Z} \log \trueratio(z) \, dP(z) \, - \, \int_\mathcal{Z} \log r_\theta(z) \, dP(z).
\end{equation}

\paragraph{} The first term on the right-hand side of \eqref{eq:kliep_kl_split} depends on $P$ and $Q$ only through the true ratio $\trueratio$ and is independent of $\theta$. Minimizing $\mathbb{KL}(P \, || \, P_\theta)$ in $\theta$ is therefore equivalent to maximizing
\begin{equation}
  \label{eq:kliep_population_logloss}
  \expectation{P}{\log r_\theta(Z)}.
\end{equation}
For $P_\theta = r_\theta \, Q$ to be a probability measure, the candidate ratio must additionally satisfy the normalization condition $\expectation{Q}{r_\theta(Z)} = 1$ together with non-negativity $r_\theta \geq 0$. The population KLIEP problem is therefore
\begin{equation}
  \label{eq:kliep_population_objective}
  \max_{\theta} \, \expectation{P}{\log r_\theta(Z)} \quad \text{subject to} \quad \expectation{Q}{r_\theta(Z)} = 1, \; r_\theta \geq 0.
\end{equation}

\paragraph{} Empirically, given source samples $Z_1^{(Q)}, \dots, Z_m^{(Q)} \stackrel{\mathrm{iid}}{\sim} Q$ and target samples $Z_1^{(P)}, \dots, Z_n^{(P)} \stackrel{\mathrm{iid}}{\sim} P$, the expectations in \eqref{eq:kliep_population_objective} are replaced by their sample averages, yielding the finite-sample KLIEP problem
\begin{equation}
  \label{eq:kliep_empirical_objective}
  \max_{\theta} \, \frac{1}{n} \sum_{i=1}^n \log r_\theta\bigl( Z_i^{(P)} \bigr) \quad \text{subject to} \quad \frac{1}{m} \sum_{j=1}^m r_\theta\bigl( Z_j^{(Q)} \bigr) = 1, \; r_\theta \geq 0.
\end{equation}
KLIEP is typically realized with a linear-in-parameters non-negative kernel model $r_\theta(z) = \sum_{j=1}^K \theta_j \, \varphi_j(z)$ together with the coefficient constraint $\theta_j \geq 0$ for every $j$, which is sufficient to enforce $r_\theta \geq 0$ pointwise. The normalization condition then reduces to a single linear equality on the parameter vector, and the negative-log-likelihood objective is concave, so that \eqref{eq:kliep_empirical_objective} is a tractable convex program. From the perspective of the present framework, KLIEP is the natural baseline among normalized direct density-ratio estimators. It is a calibrated method, in the sense that by construction the candidate $r_\theta$ satisfies $\expectation{Q}{r_\theta} = 1$ at the population level, which the constrained density-ratio network of Section~\ref{sec:drn} also imposes. The two methods differ in how the normalization is enforced and in what the parametrization can express. KLIEP fits a single hard equality on a fixed linear basis, while the constrained density-ratio network drives normalization, higher-order moment matching, and second-moment growth simultaneously through an augmented-Lagrangian primal-dual scheme over a high-capacity nonlinear model.

\end{document}